\documentclass[twoside]{article}

%
 \usepackage[accepted]{template}
%


\setlength{\pdfpageheight}{11in}
\setlength{\pdfpagewidth}{8.5in}

\usepackage[round]{natbib}

\bibliographystyle{apalike}


\usepackage{amsmath}
\usepackage{amsfonts}
\usepackage{amssymb}
\usepackage{amsthm}
\usepackage{bm}
\usepackage{bbm}
\usepackage{mathtools}
\usepackage{enumitem}
\usepackage{thmtools,thm-restate}
\usepackage{algorithm}
\usepackage{algorithmic}
\usepackage[capitalise]{cleveref}
\usepackage{color}
\usepackage[dvipsnames]{xcolor}
\usepackage{graphicx}
\usepackage{comment}



\theoremstyle{plain}
\newtheorem{lemma}{Lemma}
\newtheorem{theorem}{Theorem}
\newtheorem{proposition}{Proposition}
\newtheorem{corollary}{Corollary}

\theoremstyle{definition}
\newtheorem{definition}{Definition}

\theoremstyle{remark}

\newcommand{\red}[1]{\textcolor{red}{#1}}

\def\AA{\mathcal{A}}

\def\PP{\mathcal{P}}\def\RR{\mathcal{R}}
\def\SS{\mathcal{S}}\def\UU{\mathcal{U}}
\def\VV{\mathcal{V}}\def\XX{\mathcal{X}}

\def\Ebb{\mathbb{E}}

\def\Nbb{\mathbb{N}}
\def\Rbb{\mathbb{R}}

\def\R{\Rbb}\def\N{\Nbb}

\def\t{\top}
\def\*{\star}

\newcommand{\norm}[1]{ \| #1 \|  }

\newcommand{\lr}[2]{ \left\langle #1, #2 \right\rangle}
\newcommand{\<}{\langle}
\renewcommand{\>}{\rangle}
\DeclareMathOperator*{\argmin}{arg\,min}
\DeclareMathOperator*{\argmax}{arg\,max}

\def\regret{\textrm{Regret}}


\newcommand{\E}{\Ebb}


\def\path2var{V^{\textrm{path}^2}}

\def\VI{\textrm{VI}}
\def\DVI{\textrm{DVI}}
\def\EP{\textrm{EP}}
\def\DEP{\textrm{DEP}}
\def\FP{\textrm{FP}}

\newcommand{\since}[1]{\text{($\because$ #1)}}

\usepackage[normalem]{ulem}
\usepackage{xspace}

\def\regular{continuous\xspace}

\def\rregular{regular\xspace} 

\def\rregularity{regularity\xspace}

\renewcommand{\red}{}

\begin{document}

%
\runningtitle{Online Learning with Continuous Variations}

%

\twocolumn[

\aistatstitle{Online Learning with Continuous Variations:\\
		Dynamic Regret and Reductions}

\aistatsauthor{ Ching-An Cheng$^*$ \And Jonathan Lee$^*$ \And  Ken Goldberg \And Byron Boots }
\aistatsaddress{ Georgia Tech \And  UC Berkeley \And UC Berkeley \And  Georgia Tech } ]



\begin{abstract}
	Online learning is a powerful tool for analyzing iterative algorithms. However, the classic adversarial setup fails to capture regularity  that can exist in practice.
	Motivated by this observation, we establish a new setup, called Continuous Online Learning (COL), where the gradient of online loss function changes continuously across rounds with respect to the learner's decisions.
	We show that COL appropriately describes many interesting applications, from general equilibrium problems (EPs) to optimization in episodic MDPs.
	%
	%
	Using this new setup, we revisit the difficulty of sublinear dynamic regret.
	We prove a fundamental equivalence between achieving sublinear dynamic regret in COL and solving certain EPs.
	With this insight, we offer conditions for efficient algorithms that achieve sublinear dynamic regret, even when the losses are chosen adaptively without any \textit{a priori} variation budget.
	Furthermore, we show for COL a reduction from dynamic regret to both static regret and convergence in the associated EP, allowing us to analyze the dynamic regret of many existing algorithms.

\end{abstract}

\section{INTRODUCTION}

Online learning~\citep{gordon1999regret,zinkevich2003online}, which studies the interactions between a learner (i.e. an algorithm) and an opponent through regret minimization, has proved to be a powerful framework for analyzing and designing iterative algorithms. However, while classic setups focus on bounding the worst case,  many applications are not naturally adversarial. In this work, we aim to bridge this reality gap by establishing a new online learning setup that better captures certain regularity that appears in practical problems.

Formally, an online learning problem repeats the following steps: in round $n$, the learner plays a decision $x_n$ from a decision set $\XX$, the opponent chooses a loss function $l_n:\XX \to \R$ based on the decisions of the learner, and then information about $l_n$ (e.g. $\nabla l_n(x_n)$) is revealed to the learner for making the next decision. This abstract setup~\citep{shalev2012online,hazan2016introduction} studies the \emph{adversarial} setting where $l_n$ can be almost arbitrarily chosen except for minor restrictions like convexity. Often the performance is measured relatively through \textit{static regret},
\begin{align}  \label{eq:static regret}
	\textstyle
	\regret_N^s \coloneqq \sum_{n=1}^{N} l_n(x_n) - \min_{x\in\XX}\sum_{n=1}^{N} l_n(x).
\end{align}
Recently, interest has emerged in algorithms that make decisions that are nearly optimal at every round. The regret is therefore measured on-the-fly and suitably named \textit{dynamic regret},
\begin{align} \label{eq:dynamic regret}
	\textstyle
	\regret_N^d \coloneqq \sum_{n=1}^{N} l_n(x_n) -\sum_{n=1}^{N}  l_n(x_n^*),
\end{align}
where $x_n^* \in \argmin_{x\in \XX} l_n(x)$. As dynamic regret by definition upper bounds static regret, minimizing dynamic regret is a more difficult problem.

While algorithms with sublinear static regret are well understood, the research on dynamic regret is relatively recent. As dynamic regret grows linearly in the adversarial setup, most papers~\citep{zinkevich2003online,mokhtari2016online,yang2016tracking,
	dixit2019online,besbes2015non,jadbabaie2015online,zhang2017improved}
focus on how dynamic regret depends on certain variations of the loss sequence across rounds (such as the path variation $V_N = \sum_{n = 1}^{N-1}\|x_{n}^* - x_{n+1}^*\|$).
\red{Even if the algorithm does not require knowing the variation, the bound is still written in terms of it.}
While tight bounds 
have been established \citep{yang2016tracking}, their results do not always translate into conditions for achieving sublinear dynamic regret in practice, because the size (i.e. budget) of the variation can be difficult to verify beforehand.
This is especially the case when the opponent is \textit{adaptive}, responding to the learner's decisions at each round.
In these situations, it is unknown if existing results become vacuous or yield sublinear dynamic regret.


Motivated by the use of online learning to analyze iterative algorithms in practice, we consider a new setup we call Continuous Online Learning (COL), which directly models regularity in losses as part of the problem definition, as opposed to the classic adversarial setup that adds ad-hoc budgets.
As we will see, this minor modification changes how regret and feedback interact and makes the quest of seeking sublinear dynamic regret well-defined and interpretable, even for adaptive opponents, without imposing variation budgets.

\subsection{Definition of COL}

A COL problem is defined as follows. We suppose that the opponent possesses a bifunction $f:(x,x') \mapsto f_{x}(x') \in \R$, for $x,x'\in \XX$, that is \emph{unknown} to the learner.
This bifunction is used by the opponent to determine the per-round losses:
in round $n$, if the learner chooses $x_n$, then the opponent responds with
\begin{align} \label{eq:regular per-round loss}
	l_n(\cdot) = f_{x_n} (\cdot).
\end{align}
Finally, the learner suffers $l_n(x_n)$ and receives feedback about $l_n$.
For $f_{x}(x')$, we treat $x$ as the \emph{query argument}  that proposes a question (i.e. an optimization objective $f_{x}(\cdot)$), and treat $x'$ as the \emph{decision argument} whose performance is evaluated. This bifunction $f$ generally can be defined online as queried, with only the limitation that the same loss function $f_{x}(\cdot)$ must be selected by the opponent whenever the learner plays the same decision $x$. Thus, the opponent can be adaptive, but in response to only the learner's current decision.

In addition to the restriction in \eqref{eq:regular per-round loss}, we impose regularity into $f$ to relate $l_n$ across rounds so that seeking sublinear dynamic regret becomes well defined.\footnote{Otherwise the opponent can define $f_{x} (\cdot)$ pointwise for each $x$ to make $l_n(x_n) - l_n(x_n^*)$ constant.}
\begin{definition} \label{def:regular problems}
	We say an online learning problem is \emph{\regular} if $l_n$ is set as in~\eqref{eq:regular per-round loss} by a bifunction $f$  satisfying, $\forall x' \in \XX$,
		$\nabla f_{x}(x')$ is a continuous map in $x$ \footnote{We define $\nabla f_{x}(x')$ as the derivative with respect to $x'$.}.
\end{definition}
The continuity structure in \cref{def:regular problems} and the constraint \eqref{eq:regular per-round loss} in COL limit the degree that losses can vary, making it possible for the learner to partially infer future losses from the past experiences.


The continuity may appear to restrict COL to purely deterministic settings, but adversity such as stochasticity can be incorporated via an important {nuance} in the relationship between loss and feedback.
In the classic online learning setting, the adversity is incorporated in the loss: the losses $l_n$ and decisions $x_n$ may themselves be generated adversarially or stochastically and then they directly determine the feedback, e.g., given as full information (receiving $l_n$ or $\nabla l_n(x_n)$) or bandit (just $l_n(x_n)$). The (expected) regret is then measured with respect to these intrinsically adversarial losses $l_n$.
By contrast, in COL, we always measure regret with respect to the true underlying bifunction $l_n = f_{x_n}$.
However, we give the opponent the freedom to add an additional stochastic or adversarial component into the feedback;
e.g., in first-order feedback, the learner could receive $g_n = \nabla l_n(x_n) + \xi_n$, where $\xi_n$ is a probabilistically bounded and potentially adversarial vector, which can be used to model noise or bias in feedback.
%
In other words, the COL setting models a true underlying loss with regularity, but allows the adversary to be modeled within the feedback.
This addition is especially important for dynamic regret, as it allows us to always consider regret against the true $f_{x_n}$ while  incorporating the possibility of stochasticity.



\subsection{Examples} \label{sec:examples}

At this point, the setup of COL may sound abstract, but this setting is in fact motivated by a general class of problems and iterative algorithms used in practice, some of which have been previously analyzed in the online learning setting.
Generally, COL describes the trial-and-error principle, which attempts to achieve a difficult objective $f_x(x)$ through iteratively constructing a sequence of simplified and related subproblems $f_{x_n}(x)$, similar to majorize-minimize (MM) algorithms.
Our first application of this kind is the use of iterative algorithms in solving (stochastic) equilibrium problems (EPs)~\citep{bianchi1996generalized}. EPs are a well-studied subject in mathematical programming, which includes optimization, saddle-point problems, variational inequality (VI)~\citep{facchinei2007finite}, fixed-point problems (FP), etc. Except for toy cases, these problems usually rely on using iterative algorithms to generate $\epsilon$-approximate solutions; interestingly, these algorithms often resemble known algorithms in online learning, such as mirror descent or Follow-the-Leader (FTL).
In \cref{sec:possibility of sublinear dynamic regret,sec:monotone ep as col}, we will show how the residual function of these problems renders a natural choice of bifunction $f$ in COL and how the regret of COL relates to its solution quality. In this example, it is particularly important to classify the adversary (e.g. due to bias or stochasticity) as feedback rather than as a loss function, to properly incorporate the continuity in the source problem.



Another class of interesting COL problems comes from optimization in episodic Markov decision processes (MDPs). In online imitation learning (IL)~\citep{ross2011reduction}, the learner optimizes a policy to mimic an expert policy $\pi^\star$. In round $n$, the loss is $l_n(\pi) = \E_{s\sim d_{\pi_n}} [c(s,\pi;\pi^\star)]$, where $d_{\pi_n}$ is the state distribution visited by running the learner's policy $\pi_n$ in the MDP, and $c(s,\pi;\pi^\star)$ is a cost that measures the difference between a policy $\pi$ and the expert $\pi^\star$. This is a bifunction form where continuity exists due to expectation and feedback is noisy about $l_n$ (allowed by our feedback model). In fact, online IL is the main inspiration behind this research.
An early analysis of IL was framed using the adversarial, static regret setup~\citep{ross2011reduction}. Recently, results were refined through the use of continuity in the bifunction and dynamic regret \citep{cheng2018convergence,lee2018dynamic,cheng2018accelerating}. This problem again highlights the importance of treating stochasticity as the feedback. We wish to measure regret with respect to the expected cost $l_n(\pi)$ which admits a continuous structure, but feedback only arrives via stochastic samples from the MDP.
Structural prediction and system identification can be framed similarly \citep{ross2012agnostic,venkatraman2015improving}. Details, including new insights into the IL, can be found in \cref{sec:IL}.

Lastly, we note that the classic fitted Q-iteration \citep{gordon1995stable,riedmiller2005neural} for reinforcement learning also uses a similar setup. In the $n$th round, the loss can be written as
$l_n(Q) = \E_{s,a\sim \mu_{\pi(Q_n)}} \E_{s'\sim\PP(s,a)}[(Q(s,a)- r(s,a) - \gamma \max_{a'} Q_n(s',a')   )^2]$, where $\mu_{\pi(Q_n)}$ is the state-action distribution\footnote{Or some fixed distribution with sufficient excitation.} induced by running a policy $\pi(Q_n)$ based on the Q-function $Q_n$ of the learner, and $\PP$ is the transition dynamics, $r$ is the reward, and $\gamma$ is the discount factor. Again this is a COL problem.

\subsection{Main Results}

The goal of this paper is to establish COL and to study, particularly, conditions and efficient algorithms for achieving sublinear dynamic regret. We choose not to pursue algorithms with fast static regret rates in COL, as there have been studies on how algorithms can systematically leverage continuity in COL to accelerate learning~\citep{cheng2018accelerating,cheng2018predictor} although they are framed as online IL research. Knowledge of dynamic regret is less well-known, with the exception of \textbf{} \citet{cheng2018convergence,lee2018dynamic} (both also framed as online IL), which study the convergence of FTL and mirror descent, respectively.

Our first result shows that achieving sublinear dynamic regret in COL is equivalent to solving certain EP, VI, and FP problems that are known to be PPAD-complete\footnote{In short, they are NP problems whose solutions are known to exist, but it is open as to if they belong to P.}~\citep{daskalakis2009complexity}.
In other words, we show that achieving sublinear dynamic regret that is polynomial in the dimension of the decision set can be extremely difficult.

Nevertheless, based on the solution concept of EP, VI, and FP, we show a reduction from monotone EPs to COL, and we present necessary conditions and sufficient conditions for achieving sublinear dynamic regret with polynomial dependency.
Particularly, we show a \emph{reduction} from sublinear dynamic regret to static regret and convergence to the solution of the EP/VI/FP.
This reduction allows us to quickly derive non-asymptotic dynamic regret bounds of popular online learning algorithms based on their known static regret rates.
Finally, we extend COL to consider partially adversarial loss and discuss open questions.

\section{RELATED WORK}

Much work in dynamic regret has focused on improving rates with respect to various measures of the loss sequence's variation.
\cite{zinkevich2003online,mokhtari2016online} showed the dynamic regret of gradient descent in terms of the path variation.
Other measures of variation such as functional variation~\citep{besbes2015non} and squared path variation~\citep{zhang2017improved} have also been studied.
While these algorithms may not need to know the variation size beforehand, their guarantees are still stated in terms of these variations.
Therefore, these results can be difficult to interpret when the losses can be chosen adaptively. 

\red{To illustrate, consider the online IL problem.
It is impossible to know the variation budget \textit{a priori} because the loss observed at each round of IL is a function of the policy selected by the algorithm.
This budget could easily be linear, if an algorithm selects very disparate policies, or it could be zero if the algorithm always naively returns the same policy.
Thus, existing budget-based results cannot describe the convergence of an IL algorithm.}

Our work is also closely related to that of \citet{rakhlin2013online,hall2013dynamical}, which consider \textit{predictable} loss sequences, i.e. sequences that are presumed to be non-adversarial and admit improved regret rates. The former considers static regret for both full and partial information cases, 
and the latter considers a similar problem setting but for the dynamic regret case. These analyses, however, still require a known variation quantity in order to be interpretable. 

\red{By contrast, we leverage extra structures of COL to provide interpretable dynamic regret rates, without \textit{a priori} constraints on the variation.}
That is, our rates are internally governed by the algorithms, rather than externally dictated by a variation budget.
This problem setup is in some sense more difficult, 
as achieving sublinear dynamic regret  requires that both the per-round losses and the loss variation, as a function of the learner's decisions, be \emph{simultaneously} small.
Nonetheless, 
we can show conditions for sublinear dynamic regret using the bifunction structure in COL.

\section{PRELIMINARIES}

We review background, in particular VIs and EPs, for completeness 
~\citep{facchinei2007finite,bianchi1996generalized,konnov2002theory}. 

%


\paragraph{Notation} Throughout the paper, we reserve the notation $f$ to denote the bifunction that defines COL problems, and we assume $\XX \subset \R^d$ is compact and convex, where $d \in \N_+$ is finite. We equip $\XX$ with norm $\norm{\cdot}$, which is not necessarily Euclidean, and write $\norm{\cdot}_*$ to denote its dual norm. We denote its diameter by $D_\XX \coloneqq \max_{x,x' \in \XX} \norm{x-x'}$.

As in the usual online learning, we are particularly interested in the case where $f_x(\cdot)$ is convex and continuous. 
For simplicity, we will assume all functions are continuously differentiable, except for $f_{x}(x')$ as a function over the querying argument $x$, where $x' \in \XX$.
We will use $\nabla$ to denote gradients. In particular, for the bifunction $f$,
we use $\nabla f$ to denote $\nabla f: x \mapsto \nabla f_{x}(x)$ and we recall, in the context of $f$, $\nabla$ is always with respect to the decision argument. Likewise, given $x \in \XX$, we use $\nabla f_{x}$ to denote $\nabla f_{x}(\cdot)$. 
Note that the continuous differentiability of $f_{x'}(\cdot)$ together with the continuity of $\nabla f_{\cdot}(x)$ implies $\nabla f$ is continuous; the analyses below can be extended to the case where $\nabla f_{x'}(\cdot)$ is a subdifferential.\footnote{Our proof can be extended to upper hemicontinuity for set-valued maps, such as subdifferentials.} 
Finally,  we  assume, $\forall x\in \XX$,  $\norm{\nabla f_x(x)}_* \leq G $ for some $G < \infty$.

%

\paragraph{Convexity} For $\mu \geq 0$, a function $h:\XX \to \R$ is called $\mu$-strongly convex if it satisfies, for all $x,x' \in \XX$,  $h(x') \geq h(x) + \lr{\nabla h(x)}{x'-x} + \frac{\mu}{2}\norm{x-x'}^2$.
If $h$ satisfies above with $\mu=0$, it is called convex. A function $h$ is called pseudo-convex if $\lr{\nabla h(x)}{x'-x} \geq 0 $ implies $h(x') \geq h(x)$. These definitions have a natural inclusion: strongly convex functions are convex; convex functions are pseudo-convex. We say $h$ is $L$-smooth if $\nabla h$ is $L$-Lipschitz continuous, i.e., there is $ L\in [0,\infty)$ such that $\norm{\nabla h(x)- \nabla h(x')}_* \leq L \norm{x-x'}$ for all $x,x'\in \XX$.
Finally, we will use Bregman divergence $B_{R}(x'||x) \coloneqq R(x') - R(x) - \lr{\nabla R(x)}{x' -x}$ to measure the difference between $x,x'\in \XX$, where $R:\XX \to \R$ is a $\mu$-strongly convex function with $\mu>0$; by definition $B_R(\cdot||x)$ is  also $\mu$-strongly convex.


%

\paragraph{Fixed-Point Problems} Let $T: \XX \to 2^\XX$ be a point-to-set map, where $2^\XX$ denotes the power set of $\XX$. A fixed-point problem $\FP(\XX,T)$ aims to find a point $x^\* \in \XX$ such that $x^\* \in T(x^\*)$. Suppose $T$ is $\lambda$-Lipschitz. It is called non-expansive if $\lambda=1$ and $\lambda$-contractive if $\lambda<1$.

%

\paragraph{Variational Inequalities} VIs study equilibriums defined by vector-valued maps. Let $F: \XX \to \R^d$ be a point-to-point map. The problems $\VI(\XX, F)$ and $\DVI(\XX, F)$ aim to find $x^\star \in \XX$ and  $x_\star \in \XX$, respectively, such that the following conditions are satisfied:
\begin{align*}
	\VI:&   \lr{F(x^\*)}{x -x^\*} \geq 0, &\forall x \in \XX \\
	\DVI:& \lr{F(x)}{x -x_\*} \geq 0,  &\forall x \in \XX
\end{align*}
VIs and DVIs are also known as Stampacchia and Minty VIs, respectively~\citep{facchinei2007finite}. The difficulty of solving VIs depends on the property of $F$.
For $\mu\geq0$, $F$ is called $\mu$-strongly monotone if $\forall x,x'\in \XX$.
$\lr{F(x) - F(x')}{x-x'} \geq \mu \norm{x-x'}^2$.
If $F$ satisfies the above with $\mu=0$, $F$ is called monotone. $F$ is called pseudo-monotone if $\lr{F(x')}{x-x'} \geq 0 $ implies  $\lr{F(x)}{x-x'} \geq 0 $ for $x,x'\in \XX$.
It is known that the gradient of a (strongly/pseudo) convex function is (strongly/pseudo) monotone. 

VIs are generalizations of FPs. For a point-to-point map $T:\XX\to\XX$, $\FP(\XX,T)$ is equivalent to $\VI(\XX,I - T)$, where $I$ is the identity map. If $T$ is $\lambda$-contractive, then $F$ is $(1-\lambda)$-strongly monotone.

%

\paragraph{Equilibrium Problems} EPs further generalize VIs. Let $\Phi: \XX \times \XX \to \R$ be a bifunction such that 
$\Phi(x,x) \geq 0$. 
The problems $\EP(\XX,\Phi)$ and $\DEP(\XX,\Phi)$ aim to find $x^\star, x_\star \in \XX$, respectively, such that
\begin{align*}
	\EP:& \ \Phi(x^\*,x) \geq 0,  &\forall x \in \XX \\
	\DEP:& \ \Phi(x,x_\*) \leq 0,  &\forall x \in \XX.
\end{align*}
By definition, we have $\VI(\XX,F) = \EP(\XX, \Phi)$ if we define $\Phi(x,x') = \lr{F(x)}{x' - x}$.


We can also define monotonicity properties for EPs. For $\mu\geq0$, $\Phi$ is called $\mu$-strongly monotone if  for $\forall x,x'\in \XX$,
$\Phi(x,x') + \Phi(x',x) \leq - \mu \norm{x-x'}^2$.
It is called monotone if it satisfies the above with $\mu=0$. Similarly, $\Phi$ is called pseudo-monotone if $\Phi(x,x') \geq 0 $ implies  $\Phi(x',x) \leq 0 $ for $x,x'\in \XX$. One can verify that these definitions are consistent with the ones for VIs. 


\paragraph{Primal and Dual Solutions} We establish some basics of the solution concepts of EPs. As VIs are a special case of EPs, 
these results can be applied to  VIs too.
First, we have a basic relationship between the solution sets, $X^\*$ of EP and $X_\*$ of DEP.
\begin{proposition} \label{pr:primal and dual solutions}
	{\normalfont \citep{bianchi1996generalized}}
	If $\Phi$ is pseudo-monotone, $X^\* \subseteq X_\*$.
	If $\Phi(\cdot, x)$  is continuous $\forall x \in \XX$,  $X_\* \subseteq X^\*$.
\end{proposition}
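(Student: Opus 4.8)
The plan is to prove the two inclusions separately, unpacking the definitions of $\EP(\XX,\Phi)$ and $\DEP(\XX,\Phi)$ and using the two hypotheses (pseudo-monotonicity for $X^\* \subseteq X_\*$, continuity in the first argument for $X_\* \subseteq X^\*$). Recall $x^\* \in X^\*$ means $\Phi(x^\*, x) \geq 0$ for all $x \in \XX$, while $x_\* \in X_\*$ means $\Phi(x, x_\*) \leq 0$ for all $x \in \XX$.

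For the first inclusion, I would take an arbitrary $x^\* \in X^\*$ and show $x^\* \in X_\*$. Fix any $x \in \XX$. Since $x^\* \in X^\*$, we have $\Phi(x^\*, x) \geq 0$. Pseudo-monotonicity of $\Phi$ says exactly that $\Phi(x^\*, x) \geq 0$ implies $\Phi(x, x^\*) \leq 0$. As $x$ was arbitrary, this gives $\Phi(x, x^\*) \leq 0$ for all $x \in \XX$, i.e. $x^\* \in X_\*$. This direction is essentially immediate from the definition of pseudo-monotonicity and requires no limiting argument.

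For the second inclusion, I would take an arbitrary $x_\* \in X_\*$, so $\Phi(x, x_\*) \leq 0$ for all $x \in \XX$, and aim to show $\Phi(x_\*, x) \geq 0$ for all $x \in \XX$. The standard device here is a convexity-combination (line-segment) argument: fix $x \in \XX$ and, for $t \in (0,1]$, set $x_t \coloneqq x_\* + t(x - x_\*)$, which lies in $\XX$ by convexity of $\XX$. Applying the DEP property at the test point $x_t$ gives $\Phi(x_t, x_\*) \leq 0$. The goal is then to pass from information about $\Phi(x_t, x_\*)$ to the desired sign of $\Phi(x_\*, x)$ as $t \to 0^+$, using the continuity of $\Phi(\cdot, x)$ in its first argument. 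One typically needs, in addition to continuity in the first argument, some convexity or a suitable inequality in the second argument of $\Phi$ (which is implicit in the EP framework via $\Phi(x,x)\geq 0$ and the convexity structure of bifunctions) to relate $\Phi(x_t, x_\*)$, $\Phi(x_t, x_t)$, and $\Phi(x_t, x)$ along the segment.

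The main obstacle is precisely this second inclusion: continuity alone in the first slot does not obviously yield the conclusion unless one also invokes the structural properties of the bifunction in its second argument. I expect the clean route is to combine $\Phi(x_t, x) \geq 0$ type estimates (obtained from the convexity of $\Phi(x_t, \cdot)$ together with $\Phi(x_t,x_t) \geq 0$ and $\Phi(x_t,x_\*)\leq 0$) and then let $t \to 0^+$, at which point continuity of $\Phi(\cdot, x)$ converts $\Phi(x_t, x) \geq 0$ into $\Phi(x_\*, x) \geq 0$. So the delicate part is correctly marshalling the second-argument convexity of the bifunction to produce a uniform-in-$t$ inequality before taking the limit; once that inequality is in hand, the continuity hypothesis closes the argument directly.
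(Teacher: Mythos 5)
Your proposal is correct on both counts, but note that the paper never proves this proposition: it is imported verbatim from Bianchi and Schaible (1996), so there is no in-paper proof to diverge from. Your first inclusion is indeed immediate from the paper's definition of pseudo-monotonicity for bifunctions ($\Phi(x,x')\geq 0$ implies $\Phi(x',x)\leq 0$), exactly as you argue. For the second inclusion, the segment argument you outline is the standard Minty-type one and it closes just as you anticipate: with $x_t=(1-t)x_\star+tx$, convexity of $\Phi(x_t,\cdot)$ together with $\Phi(x_t,x_t)\geq 0$ and the dual inequality $\Phi(x_t,x_\star)\leq 0$ gives
\[
0 \;\leq\; \Phi(x_t,x_t) \;\leq\; (1-t)\,\Phi(x_t,x_\star)+t\,\Phi(x_t,x) \;\leq\; t\,\Phi(x_t,x),
\]
hence $\Phi(x_t,x)\geq 0$ for all $t\in(0,1]$, and continuity of $\Phi(\cdot,x)$ yields $\Phi(x_\star,x)\geq 0$ as $t\to 0^{+}$. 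You are also right that continuity in the first slot alone is not enough; the statement tacitly relies on the standing assumptions of the paper's EP setup, namely $\Phi(x,x)\geq 0$ and convexity of $\Phi(x,\cdot)$, which the paper assumes wherever this proposition is invoked. Finally, it is worth observing that the paper's appendix proves a quantitative cousin of your second inclusion---the residual bound $r_{ep}(x)\leq 2\sqrt{2LD}\,\sqrt{r_{dep}(x)}$ under Lipschitz continuity of $\Phi(\cdot,x)$---by running precisely the same construction ($z=\tau x+(1-\tau)y$, second-argument convexity, $\Phi(z,z)=0$), so your route is the qualitative limit of the argument the authors actually carry out elsewhere.
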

The proposition states that a dual solution is always a primal solution when the problem is continuous, and a primal solution is a dual solution when the problem is pseudo-monotone. Intuitively, we can think of the primal solutions $X^\*$ as {local} solutions and the dual solutions $X_\*$ as {global} solutions. In particular for VIs, if $F$ is a gradient of some, even nonconvex,  function, any solution in $X_\*$ is a global minimum; any local minimum of a pseudo-convex function is a global minimum~\citep{konnov2002theory}.

We note, however, that~\cref{pr:primal and dual solutions} does not directly ensure that the solution sets are non-empty. The existence of primal solutions $X^\*$ has been extensively studied. Here we include a basic result that is sufficient for the scope of our online learning problems with compact and convex $\XX$.
\begin{proposition}  \label{eq:existence of primal solution}
	{\normalfont \citep{bianchi1996generalized}}
	If $\Phi(x,\cdot)$ is convex and $\Phi(\cdot,x)$ is continuous $\forall x \in \XX$,  then $X^\*$ is non-empty.
\end{proposition}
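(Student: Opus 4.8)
The plan is to recognize this as the classical existence theorem for equilibrium problems and to prove it via the Knaster--Kuratowski--Mazurkiewicz (KKM) / Ky Fan machinery, matching the hypotheses (lower-level convexity of $\Phi(x,\cdot)$, continuity of $\Phi(\cdot,x)$) and the standing diagonal assumption $\Phi(x,x)\geq 0$ to exactly what KKM requires, while using compactness and convexity of $\XX$. The whole argument reduces the existence of a primal solution to the non-emptiness of an intersection of level sets.

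First I would, for each $x \in \XX$, introduce the level set
\[
	C_x \coloneqq \{ y \in \XX : \Phi(y, x) \geq 0 \}.
\]
By construction, a point $x^\* \in \XX$ lies in $\bigcap_{x \in \XX} C_x$ if and only if $\Phi(x^\*, x) \geq 0$ for every $x \in \XX$, which is precisely the defining condition of $\EP(\XX,\Phi)$. Hence $X^\* = \bigcap_{x \in \XX} C_x$, and it suffices to show this intersection is non-empty.

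Second, I would verify the two ingredients KKM needs. (i) \emph{Closedness:} each $C_x$ is the preimage of $[0,\infty)$ under $y \mapsto \Phi(y,x)$, which is continuous because $\Phi(\cdot,x)$ is continuous by hypothesis; since $\XX$ is compact, each $C_x$ is a compact subset of $\XX$. (ii) \emph{KKM covering property:} for any finite $\{x_1,\dots,x_m\} \subseteq \XX$ and any convex combination $y = \sum_i \lambda_i x_i$ (so $y \in \XX$ by convexity of $\XX$), I claim $y \in C_{x_i}$ for some $i$. If not, then $\Phi(y, x_i) < 0$ for all $i$, and convexity of $\Phi(y, \cdot)$ (Jensen) gives $\Phi(y,y) = \Phi\!\left(y, \sum_i \lambda_i x_i\right) \leq \sum_i \lambda_i \Phi(y, x_i) < 0$, contradicting $\Phi(y,y) \geq 0$. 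Thus $\mathrm{conv}\{x_1,\dots,x_m\} \subseteq \bigcup_i C_{x_i}$, the defining property of a KKM map. Invoking the KKM theorem, the closed sets $C_x$ inside the compact $\XX$ then have the finite intersection property and $\bigcap_{x \in \XX} C_x \neq \emptyset$, yielding a solution $x^\*$ and hence $X^\* \neq \emptyset$.

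I expect the only genuinely substantive step to be the covering condition (ii); closedness and the appeal to compactness are routine bookkeeping. The crux is that convexity of $\Phi$ in its \emph{second} argument, together with the diagonal condition $\Phi(x,x)\geq 0$, is exactly what forces each convex combination of indices into at least one associated level set. One could alternatively phrase the identical estimate through the Ky Fan minimax inequality applied to $-\Phi$, or through a Brouwer fixed-point argument on a regularized best-response map, but the KKM formulation keeps the assumptions aligned precisely with those stated and avoids extra smoothness requirements.
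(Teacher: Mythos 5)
Your proof is correct. Note, however, that the paper itself gives no proof of this proposition: it is quoted as a known result from the cited reference (Bianchi and Schaible, 1996), so there is no in-paper argument to compare against. What you have written is precisely the classical existence argument underlying that reference, namely Ky Fan's KKM-based existence theorem for equilibrium problems: the identification $X^\star = \bigcap_{x \in \XX} C_x$ with $C_x = \{y \in \XX : \Phi(y,x) \geq 0\}$, closedness of each $C_x$ from continuity (upper semicontinuity of $\Phi(\cdot,x)$ would already suffice), the KKM covering property from convexity of $\Phi(y,\cdot)$ together with the paper's standing diagonal assumption $\Phi(x,x) \geq 0$, and then compactness of $\XX$ to pass from the finite intersection property to a non-empty full intersection. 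Two small points worth making explicit if you write this up: since arbitrary finite subsets $\{x_1,\dots,x_m\}$ of $\XX$ need not be affinely independent, you are invoking Fan's generalization of the KKM lemma to arbitrary points of a topological vector space rather than the simplex version; and the conclusion $X^\star \neq \emptyset$ also uses that $X^\star$ is exactly the intersection you exhibit, which your first step establishes. With those remarks, your argument is a complete and self-contained substitute for the citation.
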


Analogous results have been established for VIs and FPs as well. If  $F$ and $T$ are continuous then solutions exist for both $\VI(\XX, F)$ and $\FP(\XX, T)$, respectively \citep{facchinei2007finite}.
On the contrary, the existence of dual solutions $X_\*$ is mostly based on assumptions. For example, by~\cref{pr:primal and dual solutions}, $X_\*$ is non-empty when the problem is pseudo-monotone.
Uniqueness can be established with stronger conditions.
%
\begin{proposition}
	{\normalfont \citep{konnov2002theory}} \label{pr:strongly monotone unique}
	If the conditions of~\cref{eq:existence of primal solution} are met and $\Phi$ is strongly monotone, then the solution to $\EP(\XX, \Phi)$ is unique.
\end{proposition}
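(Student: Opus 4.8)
The plan is to split the claim cleanly into existence and uniqueness, and observe that strong monotonicity is needed only for the latter. Existence is immediate: the hypotheses of \cref{eq:existence of primal solution}---convexity of $\Phi(x,\cdot)$ and continuity of $\Phi(\cdot,x)$ for every $x\in\XX$---are exactly what is assumed here, so that proposition already guarantees $X^\*\neq\emptyset$. Hence the entire content of the statement reduces to ruling out two distinct solutions, which is precisely what the strong-monotonicity hypothesis will deliver.

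For uniqueness I would argue directly. Suppose $x_1,x_2\in X^\*$, so that each satisfies the primal condition $\Phi(x_i,x)\geq 0$ for all $x\in\XX$. Instantiating the condition for $x_1$ at the test point $x=x_2$ gives $\Phi(x_1,x_2)\geq 0$, and symmetrically instantiating the condition for $x_2$ at the test point $x=x_1$ gives $\Phi(x_2,x_1)\geq 0$. Adding these yields $\Phi(x_1,x_2)+\Phi(x_2,x_1)\geq 0$.

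Next I would invoke strong monotonicity of $\Phi$, which by definition states $\Phi(x_1,x_2)+\Phi(x_2,x_1)\leq -\mu\norm{x_1-x_2}^2$ with $\mu>0$. Chaining this with the inequality just derived forces $0\leq -\mu\norm{x_1-x_2}^2$, i.e. $\mu\norm{x_1-x_2}^2\leq 0$. Since $\mu>0$ and a norm is nonnegative, this compels $\norm{x_1-x_2}=0$, hence $x_1=x_2$, establishing uniqueness.

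The argument is essentially mechanical, so I expect no genuine obstacle; the only point worth flagging is conceptual rather than technical. Uniqueness is driven purely by the defining primal inequality combined with the strong-monotonicity bound, and uses neither the convexity nor the continuity hypotheses---those serve only to import existence from \cref{eq:existence of primal solution}. I would therefore present the two ingredients in that order, so the reader sees transparently that strong monotonicity alone does all the work for uniqueness while the remaining hypotheses secure a solution to be unique about.
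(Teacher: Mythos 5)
Your proof is correct. The paper does not actually supply its own argument for this proposition---it is imported by citation from \citet{konnov2002theory}---and your two-step argument (existence via \cref{eq:existence of primal solution}, then uniqueness by instantiating the primal condition $\Phi(x_1,x_2)\geq 0$ and $\Phi(x_2,x_1)\geq 0$ for two putative solutions and contradicting $\Phi(x_1,x_2)+\Phi(x_2,x_1)\leq -\mu\norm{x_1-x_2}^2$ with $\mu>0$) is exactly the standard one found in that reference, so there is no substantive difference to report. Your closing observation that convexity and continuity serve only to secure existence, while strong monotonicity alone drives uniqueness, is a correct and worthwhile clarification of how the hypotheses divide the labor.
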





\section{EQUIVALENCE AND HARDNESS} \label{sec:possibility of sublinear dynamic regret}

\red{We first ask what extra information the COL formulation entails. We present this result as an equivalence between achieving sublinear dynamic in COL and solving several mathematical programming problems.}
\begin{theorem} \label{th:equivalent problems}
	Let $f$ be given in~\cref{def:regular problems}. Suppose $f_x(\cdot)$ is convex and continuous. The following problems are equivalent:
	\begin{enumerate}
		\item Achieving sublinear dynamic regret w.r.t. $f$.
		\item $\VI(\XX, F)$ where $F(x) = \nabla f_x(x)$.
		\item $\EP(\XX, \Phi)$ where $\Phi(x,x') = f_x(x') - f_x(x)$.
		\item $\FP(\XX, T)$	where $T(x) = \argmin_{x'\in X} f_x(x')$.
	\end{enumerate}
	Therefore, if there is an algorithm that achieves sublinear dynamic regret that in $poly(d)$, then it solves all PPAD problems in polynomial time.
\end{theorem}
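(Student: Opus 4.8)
The plan is to first establish the three-way equivalence among the mathematical-programming problems (items 2--4), then tie it to dynamic regret (item 1), and finally embed a PPAD-complete instance into the COL template so that a $poly(d)$ dynamic-regret algorithm would solve it efficiently. \textbf{Step 1 (VI $=$ EP $=$ FP).} Since each $f_x(\cdot)$ is convex, I would use first-order optimality to show the three solution sets coincide. A point $x^\*$ solves $\FP(\XX,T)$ exactly when $x^\* \in \argmin_{x'\in\XX} f_{x^\*}(x')$, which for convex $f_{x^\*}$ is equivalent to the variational condition $\lr{\nabla f_{x^\*}(x^\*)}{x-x^\*}\geq 0$ for all $x\in\XX$, i.e. to $x^\*$ solving $\VI(\XX,F)$ with $F(x)=\nabla f_x(x)$. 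The same minimality says $f_{x^\*}(x)-f_{x^\*}(x^\*)\geq 0$ for all $x$, which is precisely $\Phi(x^\*,x)\geq 0$, i.e. $x^\*$ solves $\EP(\XX,\Phi)$; this is the instantiation of the preliminaries' identity $\Phi(x,x')=\lr{F(x)}{x'-x}$ together with convexity to pass between the function gap and its linearization. Nonemptiness of the common solution set follows from \cref{eq:existence of primal solution}.

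\textbf{Step 2 (regret $\iff$ solving).} I would define the per-round residual $r(x)\coloneqq f_x(x)-\min_{x'\in\XX} f_x(x') = -\min_{x'\in\XX}\Phi(x,x')\geq 0$, which vanishes exactly on the common solution set of Step 1. Because $l_n=f_{x_n}$ and $x_n^\*\in\argmin f_{x_n}$, the dynamic regret collapses to $\regret_N^d=\sum_{n=1}^N r(x_n)$ with nonnegative summands. Hence $\min_{n\leq N} r(x_n)\leq \tfrac1N\regret_N^d$, so sublinear dynamic regret forces the best iterate's residual to zero, producing arbitrarily accurate (and, using compactness of $\XX$, a convergent-subsequence limit that is exact) solutions of $\EP/\VI/\FP$. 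Conversely, a solver returning $x^\*$ gives $r(x^\*)=0$, so the constant play $x_n=x^\*$ attains $\regret_N^d=0$; thus sublinear dynamic regret is achievable precisely when these problems are solvable, and the rate is governed by how fast the residual can be driven down.

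\textbf{Step 3 (PPAD reduction).} For the final implication I would embed a canonical PPAD-complete problem --- computing an (approximate) Brouwer fixed point of a polynomially represented self-map $c:\XX\to\XX$ --- into COL by setting $f_x(x')\coloneqq\tfrac12\norm{x'-c(x)}^2$. Then $f_x(\cdot)$ is convex, $\nabla f_x(x')=x'-c(x)$ is continuous in $x$ (so $f$ is a legitimate bifunction per \cref{def:regular problems}), and $T(x)=\argmin_{x'} f_x(x')=\Pi_\XX(c(x))=c(x)$, whose fixed points are exactly those of $c$. By Steps 1--2, a $poly(d)$ dynamic-regret algorithm run on this $f$ outputs an (approximate) fixed point of $c$ in polynomial time; since this problem is PPAD-complete \citep{daskalakis2009complexity} and every PPAD problem reduces to it in polynomial time, such an algorithm solves all PPAD problems in polynomial time.

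\textbf{Main obstacle.} The delicate point is the Step 2--Step 3 interface: making ``achieving sublinear dynamic regret'' interoperable with the offline, $\epsilon$-approximate solution concept that PPAD uses, and checking that the embedding keeps the instance's bit-complexity polynomial so that $poly(d)$ regret genuinely translates into $poly$ in the input size. I would also verify that the constructed $f$ respects the standing assumptions (continuity and the bound $\norm{\nabla f_x(x)}_*\leq G$) while still encoding a hard map, so that the hardness transfer is airtight.
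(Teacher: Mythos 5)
Your proposal follows essentially the same route as the paper's own proof: the equivalence of items 2--4 via first-order optimality of convex functions, the identification of the per-round regret with the residual $\rho(x) \coloneqq f_x(x) - \min_{x'\in\XX} f_x(x')$ so that $\regret_N^d = \sum_{n}\rho(x_n)$ and the best iterate certifies an approximate solution, and the PPAD step via the quadratic Brouwer embedding $f_x(x') = \frac{1}{2}\norm{x' - c(x)}_2^2$ are all exactly the paper's arguments. The one place you genuinely diverge is the direction ``solving $\Rightarrow$ sublinear dynamic regret'': you dispose of it by having an exact solver output $x^\*$ and playing it forever (zero regret), whereas the paper proves it for any algorithm whose iterates merely \emph{converge} to some $x^\* \in X^\*$. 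That stronger form needs a continuity lemma, $\lim_{x \to x^\*} \rho(x) = 0$ (\cref{lm:continuity lemma}), which is not free: \cref{def:regular problems} gives continuity in the query argument of the gradients $\nabla f_x(x')$ only, not of the values $f_x(x')$, so the paper bounds $\rho(x) \leq \lr{\nabla f_x(x)}{x - \bar{x}}$ by convexity and then uses continuity of $\nabla f$ together with the VI property of $x^\*$. The convergent-sequence version is what makes the equivalence meaningful for iterative solvers, which never reach exact solutions, and it is what the paper invokes later (e.g.\ \cref{pr:alpha equals beta full information} deduces sublinear dynamic regret of Mann iteration from its asymptotic convergence); your constant-play shortcut would not support that use. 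The same continuity issue hides in your forward direction's parenthetical claim that a convergent subsequence limit of near-optimal iterates is an exact solution---this requires (lower semi)continuity of $\rho$, which again follows from gradient continuity but deserves the argument. So: correct in structure and in the PPAD reduction, but to recover the theorem in the form the paper actually uses, you would need to add the continuity lemma.
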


\cref{th:equivalent problems} says that, \red{because of the existence of a hidden bifunction}, achieving sublinear dynamic regret is essentially equivalent to finding an equilibrium $x^\* \in X^\*$, in which $X^\*$ denotes the set of solutions of the EP/VI/FP problems in~\cref{th:equivalent problems}.
Therefore, a \emph{necessary} condition for sublinear dynamic regret is that $X^\*$ is non-empty.
Fortunately, this is true for our problem definition by~\cref{eq:existence of primal solution}. 

Moreover, it suggests that extra structure on COL is necessary for algorithms to  achieve sublinear dynamic regret that depends polynomially on $d$ (the dimension of $\XX$).
The requirement of polynomial dependency is important to properly define the problem. Without it, sublinear dynamic regret can be achieved already at least asymptotically, e.g. by simply discretizing $\XX$ (as $\XX$ is compact and $\nabla f $ is continuous) and grid-searching, albeit with an exponentially large constant.

Due to space limitation, we defer the proof of \cref{th:equivalent problems} to \cref{app:proofs of possibility}, along with other proofs for this section.
But we highlight the key idea is to prove that the gap function $\rho(x) \coloneqq f_{x}(x) - \min_{x' \in X}f_{x}(x')$
can be used as a residual function for the above EP/VI/FP in~\cref{th:equivalent problems}. In particular, we note that, for the $\Phi$ in \cref{th:equivalent problems}, $\rho(x)$ is equivalent to a residual function $r_{ep}(x) \coloneqq \max_{x'\in\XX} -\Phi(x,x')$ used in the EP literature.

Below we discuss sufficient conditions on $f$ based on the equivalence between problems in~\cref{th:equivalent problems},
so that the EP/VI/FP in~\cref{th:equivalent problems} becomes better structured and hence allows efficient algorithms.

\subsection{EP and VI Perspectives} \label{sec:EP and VI perspectives}

We first discuss some structures on $f$ such that the VI/EP in~\cref{th:equivalent problems} can be efficiently solved. From the literature, we learn that the existence of dual solutions is a common prerequisite to design efficient algorithms~\citep{konnov2007combined,dang2015convergence,burachik2016projection,lin2018solving}.
For example, convergence guarantees on combined relaxation methods \citep{konnov2007combined} for VIs rely on the assumption that the dual solution set is non-empty.
Here we discuss some sufficient conditions for having a non-empty dual solution set, 
which by~\cref{pr:primal and dual solutions} and~\cref{def:regular problems} is a subset of the primal solution set.

By~\cref{pr:primal and dual solutions} and~\ref{eq:existence of primal solution}, a sufficient condition for non-empty $X_\star$ is \textit{pseudo-monotonicity} of $F$ or $\Phi$ (which we recall is a consequence of monotonicity).
For our problem, 
the dual solutions of the EP and VI 
are \emph{different}, while their primal solutions $X^*$ are the same.
\begin{proposition} \label{pr:dual solutions of EP and VI}
	Let $X_{\*}$ and $X_{\*\*}$ be the solutions to
	$\DVI(\XX,F)$ and $\DEP(\XX,\Phi)$, respectively, where $F$ and $\Phi$ are defined in~\cref{th:equivalent problems}. Then $X_{\*\*} \subseteq X_{\*}$. The converse is true if $f_x(\cdot)$ is linear $\forall x \in \XX$.
\end{proposition}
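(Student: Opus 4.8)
The plan is to unfold both dual conditions into plain statements about $f$ and then observe that the only discrepancy between them is the slack in the first-order convexity inequality for $f_x(\cdot)$. Concretely, by definition $x_\* \in X_{\*\*}$ means $\Phi(x,x_\*) = f_x(x_\*) - f_x(x) \leq 0$, i.e. $f_x(x_\*) \leq f_x(x)$ for every $x \in \XX$, whereas $x_\* \in X_\*$ means $\lr{\nabla f_x(x)}{x - x_\*} \geq 0$, equivalently $\lr{\nabla f_x(x)}{x_\* - x} \leq 0$, for every $x \in \XX$. Written this way, the inclusion $X_{\*\*} \subseteq X_\*$ is just the assertion that the (global) DEP inequality implies its (local, linearized) DVI counterpart, which is precisely what convexity of $f_x(\cdot)$ delivers.

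For the forward inclusion, I would fix an arbitrary $x_\* \in X_{\*\*}$ and an arbitrary $x \in \XX$. Since $f_x(\cdot)$ is convex, the first-order characterization of convexity gives $f_x(x_\*) \geq f_x(x) + \lr{\nabla f_x(x)}{x_\* - x}$. Combining this with the DEP condition $f_x(x_\*) \leq f_x(x)$ yields $f_x(x) \geq f_x(x) + \lr{\nabla f_x(x)}{x_\* - x}$, hence $\lr{\nabla f_x(x)}{x_\* - x} \leq 0$, which is exactly the DVI condition at $x$. As $x$ was arbitrary, $x_\* \in X_\*$, establishing $X_{\*\*} \subseteq X_\*$.

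For the converse under the extra hypothesis that $f_x(\cdot)$ is linear (affine) for every $x$, I would note that the convexity inequality becomes an exact identity, $f_x(x_\*) = f_x(x) + \lr{\nabla f_x(x)}{x_\* - x}$. Then, taking any $x_\* \in X_\*$ so that $\lr{\nabla f_x(x)}{x_\* - x} \leq 0$, this identity immediately gives $f_x(x_\*) \leq f_x(x)$, i.e. $\Phi(x,x_\*) \leq 0$ for all $x$, so $x_\* \in X_{\*\*}$. I do not expect a serious obstacle here, since everything reduces to tracking signs through one convexity inequality; the only conceptual point worth isolating is \emph{why} the converse genuinely needs linearity. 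For general convex $f_x$ the inequality is one-directional, so a DVI solution controls only the linearization of $\Phi$ at $x$ and not $\Phi$ itself, and the gap between the two dual notions is exactly this linearization error, which vanishes precisely when $f_x$ is affine.
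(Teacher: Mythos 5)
Your proof is correct and takes essentially the same route as the paper's: the forward inclusion is exactly the paper's one-line argument chaining the DEP inequality $\Phi(x,x_\*)\leq 0$ with the first-order convexity bound $f_x(x_\*) - f_x(x) \geq \lr{\nabla f_x(x)}{x_\* - x}$. Your converse argument simply spells out, via exactness of the linearization, what the paper dismisses as obvious.
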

\cref{pr:dual solutions of EP and VI} shows that, for our problem, pseudo-monotonicity of $\Phi$ is stronger than that of $F$. This is intuitive: as the pseudo-monotonicity of $\Phi$ implies that there is $x_\*$ such that $f_x(x_\*) \leq f_x(x)$, i.e. a decision argument that is consistently better than the querying argument under the latter's own question, whereas the pseudo-monotonicity of $F$ merely requires the intersection of the half spaces of $\XX$ cut by $\nabla f_x(x)$ to be non-empty.
Another sufficient assumption for non-empty $X_{\*}$ of VIs  is that $\XX$ is sufficiently strongly convex. 
This condition has recently been used to show fast convergence of mirror descent and conditional gradient descent~\citep{garber2015faster,veliov2017gradient}. We leave this discussion to \cref{app:strongly convex set}.

The above assumptions, however, are sometimes hard to verify for COL. Here we define a subclass of COL and provide constructive (but restrictive) conditions.
\begin{definition} \label{def:alpha-beta regular problems}
	We say a COL problem with $f$ is $(\alpha, \beta)$-{\rregular} if for some $\alpha,\beta \in [0,\infty)$, $\forall x \in \XX$,
	\begin{enumerate}
		\item $f_{x}(\cdot)$ is a  $\alpha$-strongly convex function.
		\item $\nabla f_{\cdot}(x)$ is a $\beta$-Lipschitz continuous map.
	\end{enumerate}
\end{definition}
We call $\beta$ the \emph{regularity} constant; for short, we will also say $\nabla f$ is \emph{$\beta$-\rregular} and $f$ is $(\alpha,\beta)$-\rregular.
We note that $\beta$ is different from the Lipschitz constant of $\nabla f_x(\cdot)$.
The constant $\beta$ defines the degree of online components; in particular, when $\beta=0$ the learning problem becomes offline.
Based on $(\alpha,\beta)$-\rregularity, we have a sufficient condition to monotonicity.
\begin{proposition} \label{pr:beta-alpha strongly monotone}
$\nabla f$ is $(\alpha-\beta)$-strongly monotone.
\end{proposition}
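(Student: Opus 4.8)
The plan is to establish the strong-monotonicity inequality
$$\lr{\nabla f_x(x) - \nabla f_{x'}(x')}{x - x'} \geq (\alpha - \beta)\norm{x-x'}^2$$
directly for arbitrary $x, x' \in \XX$, by decomposing the difference $\nabla f_x(x) - \nabla f_{x'}(x')$ into one piece that isolates variation in the \emph{query} argument and one piece that isolates variation in the \emph{decision} argument, so that each of the two clauses of $(\alpha,\beta)$-\rregularity can be applied to exactly one piece.

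Concretely, first I would introduce the intermediate gradient $\nabla f_{x'}(x)$ --- that is, the gradient evaluated at the decision argument $x$ but with the query argument held fixed at $x'$ --- and write the telescoping decomposition $\nabla f_x(x) - \nabla f_{x'}(x') = [\nabla f_x(x) - \nabla f_{x'}(x)] + [\nabla f_{x'}(x) - \nabla f_{x'}(x')]$. Pairing with $x - x'$ and using bilinearity of the inner product, the quantity of interest splits into two terms along this decomposition.

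For the first term, both gradients share the same decision argument $x$ and differ only in the query argument, so Condition 2 of \cref{def:alpha-beta regular problems} ($\beta$-\rregularity of $\nabla f_{\cdot}(x)$) gives $\norm{\nabla f_x(x) - \nabla f_{x'}(x)}_* \leq \beta \norm{x - x'}$; combined with Cauchy--Schwarz in the dual-norm pairing, this term is at least $-\beta\norm{x-x'}^2$. For the second term, both gradients share the same query argument $x'$ and differ only in the decision argument, yielding $\lr{\nabla f_{x'}(x) - \nabla f_{x'}(x')}{x - x'}$, which is exactly the strong-monotonicity expression for the single-argument gradient map $\nabla f_{x'}(\cdot)$; since Condition 1 makes $f_{x'}(\cdot)$ be $\alpha$-strongly convex, its gradient is $\alpha$-strongly monotone, so this term is at least $\alpha\norm{x-x'}^2$. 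Summing the two lower bounds gives $(\alpha - \beta)\norm{x-x'}^2$, which is the claim.

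The computation is routine once the decomposition is chosen; the only real care needed is bookkeeping --- ensuring that each bracket varies exactly one of the two arguments so that the correct assumption applies, and that the first bracket is estimated in the dual norm (which is why the Cauchy--Schwarz step pairs $\norm{\cdot}_*$ against $\norm{\cdot}$). I expect the main subtlety to be precisely this matching of the two assumptions to the two brackets, compounded by the paper's convention that $\nabla$ always differentiates in the decision argument while the \rregularity constant $\beta$ governs continuity in the query argument; conflating these would misapply the bounds. It is also worth noting that the statement is only informative when $\alpha > \beta$, since otherwise the monotonicity constant is nonpositive --- consistent with the intuition that strong convexity of each subproblem must dominate the online drift of the gradient across rounds.
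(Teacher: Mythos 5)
Your proof is correct and follows essentially the same argument as the paper: both insert an intermediate gradient to split $\nabla f_x(x) - \nabla f_{x'}(x')$ into a decision-argument difference (handled by $\alpha$-strong monotonicity of $\nabla f_{x'}(\cdot)$) and a query-argument difference (handled by $\beta$-\rregularity plus Cauchy--Schwarz in the dual pairing). The only difference is cosmetic --- the paper telescopes through $\nabla f_x(x')$ while you telescope through $\nabla f_{x'}(x)$ --- which is a symmetric choice that changes nothing in the estimates.
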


\cref{pr:beta-alpha strongly monotone} shows if $\nabla f_x(\cdot)$ does not change too fast with $x$, 
then $\nabla f$ is strongly monotone in the sense of VI, implying  $X^\star = X_\star$  is equal to a singleton (but not necessarily the existence of $X_{\star\star}$). Strong monotoncity also implies fast linear convergence is possible for deterministic feedback~\citep{facchinei2007finite}.
When $\alpha =\beta$, it implies at least monotonicity, by which we know $X_\*$ is non-empty.

We emphasize that the condition $\alpha\geq\beta$ is not necessary for monotonicity. The  monotonicity condition of $\nabla f$ more precisely results from the monotonicity
of $\nabla f_{\cdot}(x')$ and $\nabla f_{x}(\cdot)$, as 
{\small
$
\lr{\nabla f_x(x) - \nabla f_{x'}(x')}{x-x'}  = \lr{\nabla f_x(x) - \nabla f_x(x')}{x-x'}  + \lr{\nabla f_x(x') - \nabla f_{x'}(x')}{x-y}
$.
}
From this decomposition, we can observe that as long as the sum of $\nabla f_{\cdot}(x')$ and $\nabla f_{x}(\cdot)$ is monotone for any $x,x' \in \XX$, then $\nabla f$ is monotone. In the definition of $(\alpha,\beta)$-\rregular problems, no condition is imposed on $\nabla f_{\cdot}(x)$, so we need $\alpha \geq \beta$ in~\cref{pr:beta-alpha strongly monotone}.

\subsection{Fixed-point Perspective} \label{sec:FP perspective}

We can also study the feasibility of sublinear dynamic regret from the perspective of the FP in~\cref{th:equivalent problems}. Here again we consider $(\alpha,\beta)$-\rregular problems.
\begin{proposition} \label{pr:contraction condition}
	Let $\alpha > 0$. If $\alpha > \beta$, then $T$ is $\frac{\beta}{\alpha}$-contractive; if $\alpha = \beta$, $T$ is non-expansive.
\end{proposition}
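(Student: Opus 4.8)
The plan is to exploit the first-order (variational) characterization of the constrained minimizer that defines $T$. First I would observe that since $f_x(\cdot)$ is $\alpha$-strongly convex with $\alpha>0$, the minimizer $T(x) = \argmin_{x'\in\XX} f_x(x')$ is unique, so $T$ is a genuine point-to-point map and the target inequality $\norm{T(x_1)-T(x_2)} \le \frac{\beta}{\alpha}\norm{x_1-x_2}$ is well posed. Writing $z_1 = T(x_1)$ and $z_2 = T(x_2)$, the optimality conditions give $\lr{\nabla f_{x_1}(z_1)}{x'-z_1}\ge 0$ and $\lr{\nabla f_{x_2}(z_2)}{x'-z_2}\ge 0$ for all $x'\in\XX$.

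Next I would specialize $x'=z_2$ in the first condition and $x'=z_1$ in the second and subtract, obtaining $\lr{\nabla f_{x_1}(z_1)-\nabla f_{x_2}(z_2)}{z_2-z_1}\ge 0$. The key algebraic step is to split this difference by inserting $\nabla f_{x_1}(z_2)$:
\[
\lr{\nabla f_{x_1}(z_1)-\nabla f_{x_1}(z_2)}{z_2-z_1} + \lr{\nabla f_{x_1}(z_2)-\nabla f_{x_2}(z_2)}{z_2-z_1} \ge 0.
\]
The first inner product holds the query argument fixed at $x_1$ and varies only the decision argument, so $\alpha$-strong convexity (equivalently, $\alpha$-strong monotonicity of $\nabla f_{x_1}(\cdot)$) bounds it above by $-\alpha\norm{z_1-z_2}^2$. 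The second inner product holds the decision argument fixed at $z_2$ and varies the query argument, which is exactly where the $\beta$-\rregularity condition of \cref{def:alpha-beta regular problems} applies.

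Rearranging then yields $\alpha\norm{z_1-z_2}^2 \le \lr{\nabla f_{x_1}(z_2)-\nabla f_{x_2}(z_2)}{z_2-z_1}$, and Cauchy--Schwarz with the dual norm together with the $\beta$-Lipschitz bound $\norm{\nabla f_{x_1}(z_2)-\nabla f_{x_2}(z_2)}_* \le \beta\norm{x_1-x_2}$ gives $\alpha\norm{z_1-z_2}^2 \le \beta\norm{x_1-x_2}\,\norm{z_1-z_2}$. Dividing by $\norm{z_1-z_2}$ (the case $z_1=z_2$ being trivial) produces the Lipschitz constant $\beta/\alpha$, which is contractive when $\alpha>\beta$ and non-expansive when $\alpha=\beta$, as claimed.

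The main obstacle is bookkeeping rather than conceptual: I must keep the two roles of $f$ straight --- strong convexity acts in the decision argument while \rregularity acts in the query argument --- and apply each inequality to the correct one of the two inserted differences. Some care is also needed in the non-Euclidean setting to pair $\norm{\cdot}$ with its dual $\norm{\cdot}_*$ correctly in the Cauchy--Schwarz step, and to invoke uniqueness of the minimizer so that $T$ is single-valued before the contraction statement is even meaningful.
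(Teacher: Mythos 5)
Your proof is correct and is essentially the paper's own argument (its Lemma on Lipschitz continuity of the argmin map): both rely on uniqueness from $\alpha$-strong convexity, the two first-order optimality conditions, insertion of the cross-gradient $\nabla f_{x_1}(z_2)$, strong monotonicity of $\nabla f_{x_1}(\cdot)$ in the decision argument, and Cauchy--Schwarz with $\beta$-\rregularity in the query argument. The only difference is cosmetic ordering --- you combine the two optimality conditions before splitting, while the paper applies strong monotonicity first --- so the two derivations coincide term for term.
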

We see again that the ratio $\frac{\beta}{\alpha}$ plays an important role in rating the difficulty of the problem. 
When $\alpha > \beta$, an efficient algorithm for obtaining the the fixed point solution is readily available (i.e. by contraction)
An alternative interpretation is that $x_n^*$ changes at a slower rate than $x_n$ when $\alpha > \beta$ with respect to $\|\cdot\|$.



\section{MONOTONE EP AS COL} \label{sec:monotone ep as col}

After understanding the structures that determine the difficulty of COL,
we describe a converse result of \cref{th:equivalent problems}, which converts monotone EPs into COL. Here we assume that $\Phi(x,\cdot)$ is convex.
\begin{theorem} \label{th:monotone EP as COL}
Let $\EP(\XX,\Phi)$ be monotone with $\Phi(x,x)=0$.\footnotemark Consider COL with $f_x(x') = \Phi(x,x')$. Let $\{x_n\}_{n=1}^N$ be any sequence of decisions and define $\hat{x}_N \coloneqq \frac{1}{N}\sum_{n=1}^{N} x_n$
It holds that
$r_{dep}(\hat{x}_N) \leq \frac{1}{N}\regret_N^s$, where $r_{dep}(x') \coloneqq \max_{x\in\XX} \Phi(x,x')$ is the dual residual.
\end{theorem}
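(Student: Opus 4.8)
The plan is to exploit the two defining structural facts, $\Phi(x,x)=0$ for every $x$ and the monotonicity $\Phi(x,x')+\Phi(x',x)\le 0$, together with the assumed convexity of $\Phi(x,\cdot)$. First I would simplify the static regret. Since the played losses satisfy $l_n(x_n)=f_{x_n}(x_n)=\Phi(x_n,x_n)=0$, the first sum in $\regret_N^s$ vanishes and
\begin{align*}
	\regret_N^s = -\min_{x\in\XX}\sum_{n=1}^N \Phi(x_n,x) = \max_{x\in\XX}\left(-\sum_{n=1}^N \Phi(x_n,x)\right).
\end{align*}
This reduces the target inequality to comparing $r_{dep}(\hat x_N)=\max_{x}\Phi(x,\hat x_N)$ against a quantity built only from the reversed evaluations $\Phi(x_n,x)$.

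Next I would push the average into the first argument of $\Phi$ and then flip the arguments. By convexity of $\Phi(x,\cdot)$ (Jensen) followed by monotonicity, for each fixed $x\in\XX$,
\begin{align*}
	\Phi(x,\hat x_N) \le \frac1N\sum_{n=1}^N \Phi(x,x_n) \le -\frac1N\sum_{n=1}^N \Phi(x_n,x).
\end{align*}
Taking $\max_{x\in\XX}$ on both sides and inserting the rewriting of the static regret gives
\begin{align*}
	r_{dep}(\hat x_N) = \max_{x\in\XX}\Phi(x,\hat x_N) \le \max_{x\in\XX}\left(-\frac1N\sum_{n=1}^N \Phi(x_n,x)\right) = \frac1N\regret_N^s,
\end{align*}
which is the main claim. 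The only point needing care is that a single maximization over $x$ commutes correctly with the averaging: convexity lets the average sit inside the query slot of $\Phi$, so the adversary's best response $x$ reacts to the averaged iterate rather than to each $x_n$ separately. I expect this part to go through cleanly.

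For the best-decision statement I would start from a per-round bound. Monotonicity gives $\Phi(x,x_n)\le -\Phi(x_n,x)$ for all $x$, hence $r_{dep}(x_n)=\max_x\Phi(x,x_n)\le -\min_x\Phi(x_n,x)=l_n(x_n)-\min_x l_n(x)=\rho(x_n)$, the per-round gap; since the minimum over $n$ is at most the average, this yields $\min_n r_{dep}(x_n)\le \frac1N\sum_{n=1}^N\rho(x_n)$. I expect this to be the main obstacle, because that sum is the \emph{dynamic} regret $\regret_N^d$, not the static regret: the averaging that produced $\regret_N^s$ happened inside $\Phi(x,\cdot)$ \emph{before} the maximization, and it does not transfer to a bound on a single played point (for the convex map $r_{dep}$, the value at $\hat x_N$ is controlled by the \emph{average} of the $r_{dep}(x_n)$, not by their minimum). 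I would therefore scrutinize this step most carefully, checking whether the intended reading bounds the best decision by $\tfrac1N\regret_N^d$ (which follows immediately and is consistent with $\regret_N^s\le\regret_N^d$), or whether an extra argument is required to recover the static-regret rate for an individual iterate.
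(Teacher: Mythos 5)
Your proof of the bound for $\hat{x}_N$ is correct and is essentially the paper's own argument (\cref{app:reduction from EP to COL}, where it is stated for general positive weights $w_n$): convexity of $\Phi(x,\cdot)$ gives $\Phi(x,\hat{x}_N) \le \frac{1}{N}\sum_{n=1}^N \Phi(x,x_n)$, monotonicity flips this to $-\frac{1}{N}\sum_{n=1}^N \Phi(x_n,x)$, and $\Phi(x_n,x_n)=0$ identifies the maximum over $x\in\XX$ with $\frac{1}{N}\regret_N^s$; you run the identical three steps, merely invoking $\Phi(x_n,x_n)=0$ at the start instead of at the end.

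Your hesitation about the best-decision clause is well founded, and the issue you flag is a gap in the paper rather than in your argument: the appendix proves only the averaged-iterate bound and never returns to the best decision. In fact, that clause is false as stated with $\regret_N^s$. Take $\XX=[-1,1]^2$ and the skew-symmetric (hence monotone) bifunction $\Phi((u,v),(u',v')) = -uv'+u'v$ arising from the saddle-point problem $\phi(u,v)=uv$, with decisions $x_1=(1,1)$ and $x_2=(-1,-1)$. Then $l_1(\cdot)+l_2(\cdot)\equiv 0$ and $l_n(x_n)=0$, so $\regret_2^s=0$, yet $r_{dep}(x_1)=r_{dep}(x_2)=2$: the averaged iterate $\hat{x}_2=(0,0)$ satisfies the theorem, but neither individual decision does, under any notion of ``best'' (and repeating this pair of decisions keeps $\regret_N^s=0$ while $\min_n r_{dep}(x_n)=2$ for every $N$). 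What survives is exactly what you derived: monotonicity gives $r_{dep}(x_n)\le r_{ep}(x_n)$ pointwise, hence
\begin{align*}
\min_{1\le n\le N} r_{dep}(x_n) \;\le\; \frac{1}{N}\sum_{n=1}^N r_{ep}(x_n) \;=\; \frac{1}{N}\regret_N^d,
\end{align*}
the dynamic-regret analogue, which is consistent with the chain $r_{dep}(\hat{x}_N)\le\frac{1}{N}\regret_N^s\le\frac{1}{N}\regret_N^d$ displayed after the theorem and is tight in the example above, where $\frac{1}{2}\regret_2^d = 2$. So read the last sentence of the theorem with $\regret_N^d$ in place of $\regret_N^s$; with $\regret_N^s$ it cannot be repaired.
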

\footnotetext{$\Phi(x,x)=0$ is not a restriction; see \cref{app:reduction from EP to COL}.}
\cref{th:monotone EP as COL} shows monotone EPs can be solved by achieving sublinear static regret in COL, at least in terms of the dual residual. Below we relate bounds on the dual residual back to the primal residual, which we recall is given as $r_{ep}(x) \coloneqq \max_{x'\in\XX} -\Phi(x,x')$.
\begin{theorem} \label{th:from duel residual to primal residual}
Suppose $\Phi(\cdot, x)$ is $L$-Lipschitz, $\forall x\in\XX$. If $\Phi$ satisfies $\Phi(x,x')=-\Phi(x',x)$, i.e. $\Phi$ is skew-symmetric, then $r_{ep}(x) = r_{dep}(x)$.
Otherwise,
\begin{enumerate}
\item For $x\in\XX$ such that $r_{dep} (x) \leq 2 L D_{\XX}$, it holds $r_{ep}(x) \leq 2 \sqrt{2 LD_{\XX}} \sqrt{r_{dep}(x)}$.
\item If $\Phi(x,\cdot)$ is in addition $\mu$-strongly convex with $\mu > 0$, for $x\in\XX$ such that $r_{dep} (x) \leq L^2/\mu$, it holds $r_{ep}(x)  \leq 2.8  (L^2/\mu)^{1/3}  r_{dep}(x)^{2/3}$
\end{enumerate}
\end{theorem}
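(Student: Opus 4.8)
The plan is to dispatch the skew-symmetric case by a one-line substitution and to treat the general case by a segment-interpolation argument. For the skew-symmetric case, observe that $r_{ep}(x) = \max_{x'\in\XX} -\Phi(x,x') = \max_{x'\in\XX}\Phi(x',x) = r_{dep}(x)$, where the middle equality is simply $-\Phi(x,x')=\Phi(x',x)$; this identity needs neither the Lipschitz nor the convexity hypotheses. For the general case I fix $x$ and abbreviate $\rho := r_{ep}(x)$ and $r := r_{dep}(x)$. Let $\hat y \in \argmin_{x'\in\XX}\Phi(x,x')$, so $\Phi(x,\hat y)=-\rho$, and note that $\Phi(x,x)=0$ makes both residuals nonnegative. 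The tension to resolve is that $r$ only controls $\Phi(y,x)$ (first argument varying) while $\rho$ concerns $\Phi(x,\cdot)$ (second argument varying), and the two are coupled only through the one-sided Lipschitz continuity of $\Phi(\cdot,x)$ and the convexity of $\Phi(x,\cdot)$.

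The key device is to interpolate: for $t\in[0,1]$ set $z_t := (1-t)x + t\hat y \in \XX$. First I would invoke convexity of $\Phi(z_t,\cdot)$ together with $\Phi(z_t,z_t)=0$ to get $0 \le (1-t)\Phi(z_t,x) + t\Phi(z_t,\hat y)$, hence $(1-t)\Phi(z_t,x)\ge -t\Phi(z_t,\hat y)$. Next, the $L$-Lipschitzness of $\Phi(\cdot,\hat y)$ and $\norm{z_t-x} = t\norm{\hat y - x} \le tD_\XX$ give $\Phi(z_t,\hat y) \le \Phi(x,\hat y) + tLD_\XX = -\rho + tLD_\XX$, so that $(1-t)\Phi(z_t,x) \ge t\rho - t^2 L D_\XX$. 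Finally the dual-residual bound $\Phi(z_t,x) \le r$ (legitimate since $z_t\in\XX$), applied in the regime where the right-hand side is nonnegative so that $(1-t)\Phi(z_t,x)\le\Phi(z_t,x)\le r$, yields the master inequality $t\rho - t^2 L D_\XX \le r$. Dividing by $t$ gives $\rho \le r/t + tLD_\XX$, and minimizing this one-parameter bound at $t=\sqrt{r/(LD_\XX)}$ (which lies in $(0,1]$ exactly under the stated smallness hypothesis on $r_{dep}(x)$) produces $\rho \le 2\sqrt{LD_\XX\, r}$, i.e. the claimed $\sqrt{r_{dep}(x)}$ rate up to the absolute constant.

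For the strengthened second bound I would exploit strong convexity twice. Since $\hat y$ minimizes the $\mu$-strongly convex map $\Phi(x,\cdot)$ and $\Phi(x,x)=0$, quadratic growth gives $\frac{\mu}{2}\norm{\hat y - x}^2 \le -\Phi(x,\hat y) = \rho$, hence $\norm{\hat y - x} \le \sqrt{2\rho/\mu}=:R$. Rerunning the segment argument with the crude bound $\norm{\hat y - x}\le D_\XX$ replaced by the sharper radius $R$ turns the master inequality into $\rho \le r/t + tLR$, optimized to $\rho \le 2\sqrt{LR\,r} = 2\sqrt{Lr}\,(2\rho/\mu)^{1/4}$. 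Solving this self-referential inequality for $\rho$ (raising $\rho^{3/4}\le 2\sqrt{Lr}\,(2/\mu)^{1/4}$ to the $4/3$ power) collapses the $\rho$ from both sides and gives $\rho \le C\,(L^2/\mu)^{1/3} r^{2/3}$, the stated $2/3$-power rate; retaining the strong-convexity term dropped in the convexity step would sharpen the constant $C$ toward $2.8$.

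I expect the main obstacle to be organizing the interpolation so that each structural assumption enters on the correct side: convexity of $\Phi(z_t,\cdot)$ is what lets the dual residual evaluated at $z_t$ "see" the primal-gap point $\hat y$, whereas Lipschitzness in the first argument is what transports the primal gap measured at $x$ to the gap measured at $z_t$. The delicate bookkeeping is verifying the sign conditions (in particular $\Phi(z_t,x)\ge 0$ where the step $(1-t)\Phi(z_t,x)\le\Phi(z_t,x)$ is used) and checking that the optimizing $t$ stays in $(0,1]$ against the hypotheses $r_{dep}(x)\le 2LD_\XX$ and $r_{dep}(x)\le L^2/\mu$. Notably, monotonicity of $\Phi$ is never needed; continuity alone drives the reduction, consistent with \cref{pr:primal and dual solutions}, and the exponents $\tfrac12$ and $\tfrac23$ are forced by the one-parameter optimization rather than being artifacts of the proof.
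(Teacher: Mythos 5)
Your skew-symmetric identity and your part~1 argument are essentially the paper's own proof: the paper also interpolates $z = \tau x + (1-\tau)y$ between $x$ and the primal-gap witness, combines convexity of $\Phi(z,\cdot)$ with $\Phi(z,z)=0$ and the dual bound $\Phi(z,x)\le r_{dep}(x)$, transports along the segment by Lipschitz continuity in the first argument, and optimizes the interpolation parameter. Your version is slightly leaner (one Lipschitz application instead of the paper's two, hence the master inequality $\rho \le r/t + tLD_\XX$ in place of the paper's $\rho \le r/s + 2sLD_\XX$), and your sign worry is vacuous: multiplying $\Phi(z_t,x)\le r$ by $1-t\ge 0$ gives $t(\rho - tLD_\XX) \le (1-t)\Phi(z_t,x) \le (1-t)r \le r$ with no case analysis. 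The one genuine slip is the parenthetical claim that $t^\star=\sqrt{r/(LD_\XX)}$ lies in $(0,1]$ under $r\le 2LD_\XX$; the hypothesis only gives $t^\star\le\sqrt2$. On the regime $LD_\XX < r \le 2LD_\XX$ you must clamp $t=1$, yielding $\rho \le r + LD_\XX$, which is strictly larger than your interim bound $2\sqrt{LD_\XX\, r}$ there, yet still satisfies $r + LD_\XX \le 2\sqrt{2LD_\XX\, r}$ (equivalently $(u+1)^2\le 8u$ for $u = r/(LD_\XX)\in(1,2]$). So the stated bound survives, and the extra $\sqrt 2$ in the theorem's constant is exactly what absorbs the clamped regime; the paper never faces this because its factor-$2$ master inequality has optimizer $\sqrt{r/(2LD_\XX)}$, which is feasible under the stated hypothesis.

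Part~2 is where you genuinely diverge, and where the gap lies. The paper does not bound $\norm{x-\hat{y}}$ by quadratic growth; it keeps the strong-convexity term inside the interpolation step, obtaining $\rho \le \frac{r}{1-\tau} + 2L(1-\tau)\norm{x-y} - \frac{\mu\tau}{2}\norm{x-y}^2$, and absorbs the Lipschitz term into the quadratic via $b\xi - \frac{a}{2}\xi^2 \le \frac{b^2}{2a}$, eliminating any diameter or self-referential inequality. Your bootstrap $\rho \le 2\sqrt{Lr}\,(2\rho/\mu)^{1/4}$ solves to $\rho \le 2^{5/3}(L^2/\mu)^{1/3}r^{2/3} \approx 3.17\,(L^2/\mu)^{1/3}r^{2/3}$, which does \emph{not} meet the stated constant $2.8$; moreover the feasibility of your optimizer $t^\star=\sqrt{r/(LR)}$ is circular in $\rho$, though rescuable, since $t^\star>1$ forces $\rho < \mu r^2/(2L^2) \le \frac{1}{2}(L^2/\mu)^{1/3}r^{2/3}$ under $r\le L^2/\mu$. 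So as written your proposal establishes part~2 only with a worse constant. For what it is worth, the paper's own numeric step is also suspect: with its choice $1-\tau = (r\mu/(2L^2))^{1/3}$ and hypothesis $r\le L^2/\mu$, the factor $1 + \bigl(1-(r\mu/(2L^2))^{1/3}\bigr)^{-1}$ can reach $1+(1-2^{-1/3})^{-1}\approx 5.85$ rather than the claimed $2.2$, so neither proof as executed certifies $2.8$; your route, patched as above, actually gives the sharper verifiable constant, but if the goal is the theorem as stated, the absorption step (or a tightened hypothesis on $r_{dep}$) is what you would need to import.
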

We can view the above results as a generalization of the classic reduction from convex optimization and Blackwell approachability to no-regret learning~\citep{abernethy2011blackwell}. Generally, the rate of primal residual converges slower than the dual residual. However, when the problem is skew-symmetric (which is true for EPs coming from optimization and saddle-point problems; see \cref{app:reduction from EP to COL}), we recover the classic results.
In this case,
we can show
$
r_{ep}(\hat{x}_N) = r_{dep}(\hat{x}_N) \leq \frac{1}{N}\regret_N^s\leq \frac{1}{N}\regret_N^d  = \frac{1}{N} \sum_{n=1}^N r_{ep}(x_n)
$.

These results complement the discussion in \cref{sec:EP and VI perspectives}, as monotonicity implies the dual solution set $X_{\*\*}$ is non-empty.
Namely, these monotone EPs constitute a class of source problems of COL for which efficient algorithms are available.
Proofs and further discussions of this reduction are given in \cref{app:reduction from EP to COL}.

\section{REDUCTION BY REGULARITY} \label{sec:reductions}

Inspired by Theorem~\ref{th:equivalent problems}, we present a reduction from minimizing dynamic regret to minimizing static regret and convergence to $X^\*$. Intuitively, this is possible, because Theorem~\ref{th:equivalent problems} suggests achieving sublinear dynamic regret should not be harder than finding $x^\* \in X^\*$. Define $\regret_N^s(x^\star) \coloneqq \sum_{n = 1}^N l_n(x_n) - l_n(x^\star) \leq \regret_N^s$.
\begin{theorem} \label{th:reduction of dynamic regret}
	Let $x^\* \in X^\*$ and $\Delta_n \coloneqq \norm{x_n - x^\*}$. If $f$ is $(\alpha, \beta)$-\rregular for $\alpha,\beta \in [0,\infty)$, then for all $N$,
	\begin{align*}
		\regret_N^d &\leq \min\{\textstyle G\sum_{n=1}^{N}\Delta_n, \regret_N^s(x^\star)\} \\
		&\quad + \textstyle \sum_{n=1}^{N} \min\{\beta D_\XX\Delta_n, \frac{\beta^2}{2\alpha}\Delta_n^2\}
	\end{align*}
	If further $X_{\*\*} $ of the dual EP  is non-empty, $\textstyle
	\regret_N^d \geq \frac{\alpha}{2} \sum_{n=1}^{N} \norm{ x_n^* - x_\*}^2$,
	where $x_\* \in X_{\*\*}  \subseteq X^\*$.
\end{theorem}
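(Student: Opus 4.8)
The plan is to split Theorem~\ref{th:reduction of dynamic regret} into the upper bound and the lower bound, treating the upper bound term-by-term via two competing estimates of the per-round dynamic regret $l_n(x_n)-l_n(x_n^*)$. The key observation is that by definition $l_n = f_{x_n}$, so the per-round loss $l_n(x_n) - l_n(x_n^*) = f_{x_n}(x_n) - f_{x_n}(x_n^*) = \rho(x_n)$ is exactly the gap function from \cref{th:equivalent problems}, which vanishes precisely when $x_n \in X^\*$. First I would decompose $f_{x_n}(x_n) - f_{x_n}(x_n^*)$ by inserting the fixed solution $x^\star \in X^\*$, writing the dynamic regret summand as
\begin{align*}
	f_{x_n}(x_n) - f_{x_n}(x_n^*) = \bigl(f_{x_n}(x_n) - f_{x_n}(x^\star)\bigr) + \bigl(f_{x_n}(x^\star) - f_{x_n}(x_n^*)\bigr).
\end{align*}
The first bracket is the static-regret summand $l_n(x_n) - l_n(x^\star)$, whose sum is $\regret_N^s(x^\star)$. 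The second bracket is nonpositive since $x_n^*$ minimizes $f_{x_n}$, so I also have the trivial bound $\regret_N^d \le \regret_N^s(x^\star)$. To obtain the competing $G\sum_n \Delta_n$ bound I would instead use the $G$-bound on $\|\nabla f_{x_n}(x_n)\|_*$ together with the fact that $x^\star$ is a VI solution for $F(x)=\nabla f_x(x)$: since $\langle F(x^\star), x_n - x^\star\rangle \ge 0$ and $f_{x_n}$ is convex, a direct gradient/Cauchy--Schwarz estimate gives $\rho(x_n) \le G\,\Delta_n$, yielding $\min\{G\sum_n\Delta_n, \regret_N^s(x^\star)\}$ after taking the minimum.

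Next I would isolate the \emph{variation} correction term $\sum_n \min\{\beta D_\XX \Delta_n, \frac{\beta^2}{2\alpha}\Delta_n^2\}$. The idea here is to compare $f_{x_n}$ against a reference loss evaluated at the fixed point. Using $(\alpha,\beta)$-regularity, the difference between $\nabla f_{x_n}(\cdot)$ and $\nabla f_{x^\star}(\cdot)$ is controlled by $\beta \Delta_n$ pointwise via the $\beta$-Lipschitz continuity of $\nabla f_{\cdot}(x')$ in the query argument. I would convert this gradient discrepancy into a function-value discrepancy in two ways: a crude one bounding the integrated gradient gap by $\beta D_\XX \Delta_n$ (Cauchy--Schwarz over the diameter), and a sharper one using $\alpha$-strong convexity to bound the displacement $\|x_n^* - x_n^{*,\star}\|$ of the per-round minimizers by $\frac{\beta}{\alpha}\Delta_n$, which feeds a quadratic $\frac{\beta^2}{2\alpha}\Delta_n^2$ estimate. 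Taking the minimum of the two gives the stated correction.

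For the lower bound I would assume $X_{\*\*} \neq \emptyset$, pick $x_\* \in X_{\*\*} \subseteq X^\*$ (the inclusion follows from \cref{pr:primal and dual solutions} and \cref{def:regular problems}), and exploit that $x_\*$ is a \emph{dual} EP solution, so $\Phi(x_n, x_\*) \le 0$, i.e. $f_{x_n}(x_\*) \le f_{x_n}(x_n)$. Since $x_n^*$ is the minimizer of $f_{x_n}$ and $f_{x_n}$ is $\alpha$-strongly convex, I would apply the strong-convexity inequality at $x_n^*$ evaluated toward $x_\*$, giving $f_{x_n}(x_\*) \ge f_{x_n}(x_n^*) + \frac{\alpha}{2}\|x_n^* - x_\*\|^2$ (the first-order term drops by optimality of $x_n^*$). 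Combining these two facts yields $f_{x_n}(x_n) - f_{x_n}(x_n^*) \ge \frac{\alpha}{2}\|x_n^* - x_\*\|^2$, and summing over $n$ gives the claimed $\regret_N^d \ge \frac{\alpha}{2}\sum_n \|x_n^* - x_\*\|^2$.

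The main obstacle I expect is the sharp quadratic half of the correction term: establishing that the per-round minimizer $x_n^*$ stays within $\frac{\beta}{\alpha}\Delta_n$ of the corresponding minimizer of the reference problem, and then converting this minimizer displacement into the $\frac{\beta^2}{2\alpha}\Delta_n^2$ function-value bound without double-counting against the static-regret term. This requires carefully choosing the reference function and reconciling whether the variation is measured against $x^\star$'s own loss or against a per-round comparison, so that the two summands in the theorem combine cleanly rather than overlapping. The rest — the VI/dual-EP optimality conditions and the strong-convexity inequalities — is routine given the preliminaries.
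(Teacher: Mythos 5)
Your decomposition and your lower-bound argument match the paper's proof exactly, but your upper bound contains a genuine sign error that breaks the structure of the whole argument. You claim the second bracket $f_{x_n}(x^\star) - f_{x_n}(x_n^*)$ is nonpositive ``since $x_n^*$ minimizes $f_{x_n}$''; minimality of $x_n^*$ gives precisely the opposite: $f_{x_n}(x_n^*) \le f_{x_n}(x^\star)$, so this bracket is \emph{nonnegative}. Consequently your ``trivial bound'' $\regret_N^d \le \regret_N^s(x^\star)$ is false --- dynamic regret dominates static regret by definition, not the other way around --- and this nonnegative bracket is exactly the quantity that the correction term $\sum_{n}\min\{\beta D_\XX \Delta_n, \frac{\beta^2}{2\alpha}\Delta_n^2\}$ in the theorem exists to control. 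Relatedly, your claim $\rho(x_n) \le G\,\Delta_n$ is unjustified: convexity plus Cauchy--Schwarz gives $\rho(x_n) \le G\norm{x_n - x_n^*}$, involving the per-round minimizer $x_n^*$, not $x^\star$, and these are different quantities (one can only relate them through the contraction property, which costs an extra factor). In the theorem, the $G\sum_{n}\Delta_n$ estimate applies to the \emph{first} bracket alone, via $f_{x_n}(x_n) - f_{x_n}(x^\star) \le \lr{\nabla f_{x_n}(x_n)}{x_n - x^\star} \le G\Delta_n$; both entries of the outer $\min$ are bounds on the first bracket, and the correction term is then added on top.

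The missing idea --- which your ``main obstacle'' paragraph shows you had not pinned down --- is how to bound the second bracket. The paper does it in a single chain, needing neither smoothness nor your minimizer-displacement detour: by $\alpha$-strong convexity of $f_{x_n}$ at $x^\star$, $f_{x_n}(x^\star) - f_{x_n}(x_n^*) \le \lr{\nabla f_{x_n}(x^\star)}{x^\star - x_n^*} - \frac{\alpha}{2}\norm{x^\star - x_n^*}^2$; since $x^\star$ solves the VI, $\lr{\nabla f_{x^\star}(x^\star)}{x_n^* - x^\star} \ge 0$, so one may replace $\nabla f_{x_n}(x^\star)$ by the difference $\nabla f_{x_n}(x^\star) - \nabla f_{x^\star}(x^\star)$ inside the inner product; Cauchy--Schwarz and $\beta$-regularity then give the bound $\beta\Delta_n \norm{x^\star - x_n^*} - \frac{\alpha}{2}\norm{x^\star - x_n^*}^2$. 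Dropping the quadratic term and using $\norm{x^\star - x_n^*} \le D_\XX$ yields $\beta D_\XX \Delta_n$, while maximizing the concave quadratic over $\norm{x^\star - x_n^*}$ yields $\frac{\beta^2}{2\alpha}\Delta_n^2$; taking the minimum gives the correction term. Your proposed route --- first establishing $\norm{x_n^* - x^\star} \le \frac{\beta}{\alpha}\Delta_n$ and then converting that displacement into a function-value gap --- cannot recover the stated constant: without any smoothness assumption on $f_{x_n}(\cdot)$ (none is made in this theorem), the only conversion available is plain convexity, which gives at best $\frac{\beta^2}{\alpha}\Delta_n^2$, a factor of $2$ worse, and the strong-convexity inequality you would want to invoke points in the wrong direction (it lower-bounds, not upper-bounds, the function-value gap by the squared distance).
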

\cref{th:reduction of dynamic regret} roughly shows that when $x^\*$ exists (e.g. given by the sufficient conditions in the previous section), it provides a stabilizing effect to
the problem, so the dynamic regret behaves almost like the static regret when the decisions are around $x^\*$.

This relationship can be used as a powerful tool for understanding the dynamic regret of existing algorithms designed for EPs, VIs, and FPs. These include, e.g., mirror descent~\citep{beck2003mirror}, mirror-prox~\citep{nemirovski2004prox,juditsky2011solving}, conditional gradient descent~\citep{jaggi2013revisiting}, Mann iteration~\citep{mann1953mean}, etc. Interestingly, many of those are also standard tools in online learning, with static regret bounds that are well known~\citep{hazan2016introduction}.

We can apply \cref{th:reduction of dynamic regret} in different ways, depending on the known convergence of an algorithm. For algorithms whose convergence rate of $\Delta_n$ to zero is known, \cref{th:reduction of dynamic regret} essentially shows that their dynamic regret is at most $O(  \sum_{n=1}^{N}\Delta_n)$.
For the algorithms with only known static regret bounds, we can use a corollary. 
\begin{corollary} \label{cr:full reduction to static regret}
	If $f$ is $(\alpha,\beta)$-\rregular and $\alpha > \beta$, it holds that
	$
	\regret_N^d \leq \regret_N^s(x^\star) + \frac{\beta^2 \widetilde{\regret_N^s}(x^\star) }{2\alpha(\alpha-\beta)}
	$,
	where {\scriptsize$\widetilde{\regret_N^s}(x^\star)$} denotes the static regret of the linear online learning problem with  $l_n(x) = \lr{\nabla f_n (x_n)}{x}$.
\end{corollary}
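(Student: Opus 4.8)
The plan is to invoke \cref{th:reduction of dynamic regret} and then to control the quadratic term $\sum_{n=1}^{N} \Delta_n^2$ using the linearized static regret. Since $\alpha > \beta$ forces $\alpha > 0$, the hypotheses of \cref{th:reduction of dynamic regret} hold, so its bound applies. In that bound I would select the $\regret_N^s(x^\star)$ branch of the first minimum and the $\frac{\beta^2}{2\alpha}\Delta_n^2$ branch of each summand in the second term, which is legitimate because a minimum never exceeds either of its arguments. This yields $\regret_N^d \leq \regret_N^s(x^\star) + \frac{\beta^2}{2\alpha}\sum_{n=1}^{N} \Delta_n^2$, so it suffices to prove $\sum_{n=1}^{N} \Delta_n^2 \leq \frac{1}{\alpha-\beta}\,\widetilde{\regret_N^s}(x^\star)$.

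To this end I would first rewrite the linearized regret in a convenient form. Writing $F(x) \coloneqq \nabla f_x(x)$ and noting that $\nabla f_n(x_n) = F(x_n)$, the static regret of the linear problem with losses $l_n(x) = \lr{\nabla f_n(x_n)}{x}$ and comparator $x^\star$ is exactly $\widetilde{\regret_N^s}(x^\star) = \sum_{n=1}^{N} \lr{F(x_n)}{x_n - x^\star}$.

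Next I would lower bound each inner product by $(\alpha-\beta)\Delta_n^2$. By \cref{pr:beta-alpha strongly monotone}, $F$ is $(\alpha-\beta)$-strongly monotone, so $\lr{F(x_n) - F(x^\star)}{x_n - x^\star} \geq (\alpha-\beta)\norm{x_n - x^\star}^2 = (\alpha-\beta)\Delta_n^2$. Since $x^\star \in X^\star$ solves $\VI(\XX, F)$ (by \cref{th:equivalent problems}), the optimality condition gives $\lr{F(x^\star)}{x_n - x^\star} \geq 0$, and adding this to the previous inequality yields $\lr{F(x_n)}{x_n - x^\star} \geq (\alpha-\beta)\Delta_n^2$. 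Summing over $n$ and dividing by $\alpha - \beta > 0$ produces $\sum_{n=1}^{N} \Delta_n^2 \leq \frac{1}{\alpha-\beta}\,\widetilde{\regret_N^s}(x^\star)$. Substituting into the bound from the first paragraph gives $\regret_N^d \leq \regret_N^s(x^\star) + \frac{\beta^2\,\widetilde{\regret_N^s}(x^\star)}{2\alpha(\alpha-\beta)}$, as claimed.

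I do not expect a serious obstacle: the argument is a direct chaining of the already-established reduction with strong monotonicity. The one point requiring care is the consistent use of the comparator: the same $x^\star \in X^\star$ must serve both as the comparator in $\widetilde{\regret_N^s}(x^\star)$ and as the $\VI$ solution whose optimality condition annihilates the $\lr{F(x^\star)}{x_n - x^\star}$ term, since it is precisely this cancellation that converts the strong-monotonicity bound into a bound on the linearized regret.
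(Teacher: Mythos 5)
Your proposal is correct and follows essentially the same route as the paper's own proof: apply \cref{th:reduction of dynamic regret} with the $\regret_N^s(x^\star)$ and $\frac{\beta^2}{2\alpha}\Delta_n^2$ branches, then use the $(\alpha-\beta)$-strong monotonicity of $\nabla f$ from \cref{pr:beta-alpha strongly monotone} to bound $\sum_{n=1}^N \Delta_n^2$ by $\frac{1}{\alpha-\beta}\widetilde{\regret_N^s}(x^\star)$. In fact you are slightly more careful than the paper, which writes the step $\lr{\nabla f_{x_n}(x_n)}{x_n - x^\star} \geq (\alpha-\beta)\Delta_n^2$ as an immediate consequence of strong monotonicity, whereas you make explicit the needed VI optimality condition $\lr{F(x^\star)}{x_n - x^\star} \geq 0$ at $x^\star \in X^\star$ that cancels the $F(x^\star)$ term.
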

The purpose of~\cref{cr:full reduction to static regret} is not to give a tight bound, but to show that for nicer problems with $\alpha > \beta$, achieving sublinear dynamic regret is not harder than achieving sublinear static regret.
For tighter bounds, we still refer to~\cref{th:reduction of dynamic regret} to leverage the equilibrium convergence.
We note that the results in \cref{sec:monotone ep as col} and here concern different classes of COL in general, because $\alpha > \beta$ does \red{not} necessarily imply the $\EP(\XX,\Phi)$ is monotone, but only $\VI(\XX,F)$ unless $f_{x}(\cdot)$ is linear.

Finally, we remark \cref{th:reduction of dynamic regret} is directly applicable to expected dynamic regret (the right-hand side of the inequality will be replaced by its expectation) when the learner only has access to stochastic feedback, because the COL setup in non-anticipating.
Similarly, high-probability bounds can be obtained based on martingale convergence theorems, as in~\citep{cesa2004generalization}. In these cases, we note that the regret is defined with respect to $l_n$ in COL, \emph{not} the sampled losses.

\subsection{Example Algorithms} \label{sec:algorithms}

We showcase applications of~\cref{th:reduction of dynamic regret}. These bounds are \emph{non-asymptotic} and depend polynomially on $d$. Also, these algorithms do not need to know $\alpha$ and $\beta$, except to set the stepsize upper bound for first-order methods. Please refer to~\cref{app:proofs of reduction} for the proofs.

\subsubsection{Functional Feedback}

We first consider the simple  greedy update, which sets 
$x_{n+1} = \argmin_{x\in X} l_n(x)$.
By~\cref{pr:contraction condition} and~\cref{th:reduction of dynamic regret}, we see that if $\alpha>\beta$, it has $
\regret_N^d = O(1)$. 
For $\alpha = \beta$, we can use algorithms for non-expansive fixed-point problems~\citep{mann1953mean}.
\begin{proposition} \label{pr:alpha equals beta full information}
	For $\alpha=\beta$, there is an algorithm that achieves sublinear dynamic regret in $poly(d)$.
\end{proposition}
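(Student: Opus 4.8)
The plan is to instantiate \emph{Krasnoselskii--Mann} (KM) iteration on the fixed-point map $T(x)=\argmin_{x'\in\XX}f_x(x')$ from \cref{th:equivalent problems}, and to bound the dynamic regret directly by the cumulative fixed-point residual. First I would note why plain greedy (the Picard iteration $x_{n+1}=T(x_n)$ of the preceding paragraph) is insufficient here: when $\alpha=\beta$, \cref{pr:contraction condition} only yields that $T$ is non-expansive, and a non-expansive map can cycle (e.g.\ a rotation), so $\norm{x_n-T(x_n)}$ need not decay and the regret can remain linear. I would therefore use the damped update
\begin{align*}
	x_{n+1}=(1-\lambda)\,x_n+\lambda\,T(x_n),\qquad \lambda\in(0,1)
\end{align*}
(say $\lambda=\tfrac12$), which stays in $\XX$ by convexity. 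Under functional feedback the learner observes $l_n=f_{x_n}$, so each $T(x_n)$ is a single $\alpha$-strongly convex minimization over $\XX$, computable in $poly(d)$; a fixed point $x^\*\in X^\*=\mathrm{Fix}(T)$ exists by \cref{eq:existence of primal solution}, which the Fej\'er argument below will use.

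The second step reduces regret to the residual. Because $x_n^\*=T(x_n)$ is the minimizer of $l_n=f_{x_n}$, the dynamic regret is exactly the cumulative gap $\regret_N^d=\sum_{n=1}^N\rho(x_n)$ with $\rho(x)=f_x(x)-\min_{x'}f_x(x')$. Convexity of $f_{x_n}$ gives $\rho(x_n)=f_{x_n}(x_n)-f_{x_n}(T(x_n))\le\lr{\nabla f_{x_n}(x_n)}{x_n-T(x_n)}\le G\,\norm{x_n-T(x_n)}$, using $\norm{\nabla f_{x_n}(x_n)}_*\le G$. Hence $\regret_N^d\le G\sum_{n=1}^N\norm{x_n-T(x_n)}$, so it suffices to show the cumulative residual is $o(N)$. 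I note that this is the natural quantity for KM, whereas the $G\sum_n\Delta_n$ term of \cref{th:reduction of dynamic regret} is stated with $\Delta_n=\norm{x_n-x^\*}$, whose decay rate KM does not directly control.

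The third step is the residual rate $\norm{x_n-T(x_n)}=O(D_\XX/\sqrt n)$. In the Euclidean case this is elementary: non-expansiveness together with $T(x^\*)=x^\*$ gives $\lr{x_n-x^\*}{T(x_n)-x_n}\le-\tfrac12\norm{T(x_n)-x_n}^2$, whence $\norm{x_{n+1}-x^\*}^2\le\norm{x_n-x^\*}^2-\lambda(1-\lambda)\norm{x_n-T(x_n)}^2$; telescoping yields $\sum_n\norm{x_n-T(x_n)}^2\le D_\XX^2/(\lambda(1-\lambda))$. A purely algebraic computation using only the triangle inequality and non-expansiveness shows $\norm{x_n-T(x_n)}$ is non-increasing (this part is norm-free), so the summable-squares bound forces $\norm{x_n-T(x_n)}^2\le D_\XX^2/(n\,\lambda(1-\lambda))$. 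Summing $\sum_{n=1}^N n^{-1/2}=O(\sqrt N)$ then gives $\regret_N^d=O(G D_\XX\sqrt N)$, which is sublinear and $poly(d)$.

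The main obstacle is that COL is posed in a general norm $\norm{\cdot}$, with the non-expansiveness of $T$ holding in $\norm{\cdot}$ (exactly as in the proof of \cref{pr:contraction condition}), whereas the summable-squares step above is Hilbertian. One cannot reduce to the Euclidean norm by equivalence, since the distortion can raise the Lipschitz constant of $T$ above $1$ and destroy non-expansiveness. To close this gap I would invoke the norm-independent rate analysis of KM iteration, which establishes the same $O(1/\sqrt n)$ residual decay in an arbitrary normed space from convexity of the norm alone (this underlies the non-expansive fixed-point solvers of \citet{mann1953mean}); substituting such a bound into the residual estimate of the second step yields the claim verbatim in $\norm{\cdot}$, completing the proof.
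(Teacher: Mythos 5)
Your proposal is correct, and it settles on the same \emph{algorithm} as the paper's own proof---Mann (Krasnoselskii--Mann) averaging $x_{n+1} = (1-\lambda)x_n + \lambda T(x_n)$ on the fixed-point map from \cref{th:equivalent problems}---but the \emph{analysis} is genuinely different and strictly more informative. The paper argues asymptotically: it cites convergence of Mann iterates to some $x^\* \in X^\*$ (a Hilbert-space result) and then invokes the equivalence direction of \cref{th:equivalent problems} (via \cref{lm:continuity lemma}, convergence of iterates forces $\rho(x_n)\to 0$), so it obtains sublinearity with no rate and a constant only claimed to be $poly(d)$. You instead bound the regret directly by the cumulative residual, $\regret_N^d \le G\sum_{n=1}^{N}\norm{x_n - T(x_n)}$, and extract the quantitative decay $\norm{x_n - T(x_n)} = O(D_\XX/\sqrt{n})$ from Fej\'er-type telescoping plus the (norm-free) monotonicity of the residual, yielding the explicit, dimension-free bound $\regret_N^d = O(G D_\XX \sqrt{N})$; you also correctly note why the paper's \cref{th:reduction of dynamic regret} is not directly usable here, since Mann iteration controls the residual rather than $\Delta_n$. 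The one loose thread is attribution: the norm-independent $O(1/\sqrt{n})$ residual rate for Krasnoselskii--Mann iteration on a bounded convex subset of an arbitrary normed space is the Baillon--Bruck theorem (with Ishikawa's earlier theorem giving the rate-free statement $\norm{x_n - T(x_n)}\to 0$ in general Banach spaces); neither result is in \citet{mann1953mean}, so the citation should be corrected---though for the proposition as stated, Ishikawa's asymptotic regularity alone already suffices, since sublinearity only needs $\norm{x_n - T(x_n)}\to 0$ together with Ces\`aro averaging. Notably, once that citation is fixed, your route also repairs a looseness in the paper's own proof, which appeals to convergence ``in general Hilbert space'' even though the non-expansiveness of $T$ in \cref{pr:contraction condition} holds in the COL norm $\norm{\cdot}$, which need not be Euclidean; your residual-based argument covers the general norm directly.
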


\subsubsection{Exact First-order Feedback}

Next we use the reduction in Theorem \ref{th:reduction of dynamic regret}
to derive dynamic regret bounds for 
 mirror descent, under deterministic first-order feedback.
We recall that mirror descent with step size $\eta_n>0$ follows
\begin{align}\label{eq:mirror descent update}
	x_{n+1} = \argmin_{x \in \XX}\  \< \eta_n g_n, x\> + B_R(x \| x_{n}).
\end{align}
where $g_n$ is feedback direction, $B_R$ is a Bregman divergence with respect to some 1-strongly convex function $R$.
Here we assume additionally that $f_x(\cdot)$ is $\gamma$-smooth with $\gamma > 0$ for all $x \in \XX$. 




\begin{proposition} \label{pr:mirror descent dynamic regret}
	Let $f$ be $(\alpha, \beta)$-\rregular and $f_x(\cdot)$ be $\gamma$-smooth, $\forall x \in \XX$. Let $R$ be $1$-strongly convex and $L$-smooth. If $\alpha > \beta$, $g_n = \nabla l_{n}(x_n)$, and $\eta_n < \frac{2(\alpha - \beta)}{L(\gamma + \beta)^2}$, then, for some $0 < \nu < 1$,
	$\textstyle
	\regret_N^d \leq (G + \beta D_\XX) \sqrt{2 B_R(x^\star \| x_1)} \sum_{n = 1}^N \nu^{n - 1} = O(1)$ for~\eqref{eq:mirror descent update}.
\end{proposition}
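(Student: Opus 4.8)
The plan is to invoke the reduction of \cref{th:reduction of dynamic regret} and then show the iterates approach $x^\*$ geometrically, so that the sum $\sum_n \Delta_n$ is a convergent geometric series. First I would pin down $x^\*$: since $\alpha>\beta$, \cref{pr:beta-alpha strongly monotone} makes $F=\nabla f$ strongly monotone, so the $\VI(\XX,F)$ of \cref{th:equivalent problems} has a unique solution $x^\*\in X^\*$, which by \cref{pr:contraction condition} is also the unique fixed point of $T$. Taking the first branch of each minimum in \cref{th:reduction of dynamic regret} gives $\regret_N^d \leq (G+\beta D_\XX)\sum_{n=1}^N \Delta_n$ with $\Delta_n=\norm{x_n-x^\*}$, which already exhibits the prefactor in the claim. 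It then remains only to prove $\Delta_n \leq \nu^{n-1}\sqrt{2B_R(x^\* \| x_1)}$.

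The core is a one-step contraction in Bregman divergence, $B_R(x^\* \| x_{n+1}) \leq \nu^2 B_R(x^\* \| x_n)$. To obtain it I would first record two facts about $F$: it is $(\alpha-\beta)$-strongly monotone (\cref{pr:beta-alpha strongly monotone}), and it is $(\gamma+\beta)$-Lipschitz, since $\norm{F(x)-F(x')}_* \leq \norm{\nabla f_x(x)-\nabla f_x(x')}_* + \norm{\nabla f_x(x')-\nabla f_{x'}(x')}_* \leq (\gamma+\beta)\norm{x-x'}$ by $\gamma$-smoothness of $f_x(\cdot)$ and $\beta$-\rregularity. Writing \eqref{eq:mirror descent update} with $g_n=F(x_n)$ and combining its optimality condition at the test point $x^\*$ with the three-point identity for $B_R$, I would get
\[
B_R(x^\* \| x_{n+1}) \leq B_R(x^\* \| x_n) - B_R(x_{n+1}\|x_n) - \eta_n \lr{F(x_n)}{x_{n+1} - x^\*}.
\]
The delicate point is handling $\lr{F(x_n)}{x_{n+1}-x^\*}$ even though $F(x^\*)$ is nonzero on the boundary. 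I would split off $\lr{F(x_n)}{x_n-x^\*}$ and lower bound it by $(\alpha-\beta)\Delta_n^2$ using strong monotonicity together with the variational characterization $\lr{F(x^\*)}{x_n-x^\*}\geq 0$ of $x^\*$; the boundary gradient is thus absorbed precisely by the VI optimality of $x^\*$. The residual cross term $-\eta_n\lr{F(x_n)}{x_{n+1}-x_n}$ is controlled with $\norm{F(x_n)-F(x^\*)}_*\leq(\gamma+\beta)\Delta_n$, Young's inequality, and $B_R(x_{n+1}\|x_n)\geq\frac12\norm{x_{n+1}-x_n}^2$ (from $1$-strong convexity of $R$), leaving
\[
B_R(x^\* \| x_{n+1}) \leq B_R(x^\* \| x_n) - \left(\eta_n(\alpha-\beta) - \frac{\eta_n^2(\gamma+\beta)^2}{2}\right)\Delta_n^2.
\]

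To close the recursion I would convert the $\Delta_n^2$ penalty back into a multiple of $B_R(x^\*\|x_n)$ using $L$-smoothness of $R$, i.e. $\Delta_n^2 \geq \frac{2}{L}B_R(x^\*\|x_n)$, giving $B_R(x^\*\|x_{n+1})\leq \nu^2 B_R(x^\*\|x_n)$ with $\nu^2 = 1 - \frac{2}{L}\left(\eta_n(\alpha-\beta)-\frac{\eta_n^2(\gamma+\beta)^2}{2}\right)$; the stepsize bound $\eta_n<\frac{2(\alpha-\beta)}{L(\gamma+\beta)^2}$ (recall $L\geq 1$ for a $1$-strongly convex, $L$-smooth $R$) guarantees $0<\nu<1$. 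Iterating and using $1$-strong convexity once more, $\Delta_n \leq \sqrt{2B_R(x^\*\|x_n)} \leq \nu^{n-1}\sqrt{2B_R(x^\*\|x_1)}$; substituting into the reduction bound yields $\regret_N^d \leq (G+\beta D_\XX)\sqrt{2B_R(x^\*\|x_1)}\sum_{n=1}^N \nu^{n-1}$, which is $O(1)$ because the geometric series converges. I expect the one-step contraction — in particular the clean cancellation of the boundary gradient $F(x^\*)$ through the variational inequality, and the bookkeeping needed to reconcile the Bregman and Euclidean geometries via the $1$- and $L$-bounds on $R$ — to be the main obstacle; the remaining assembly is routine.
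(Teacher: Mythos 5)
Your overall route is the same as the paper's: bound $\regret_N^d \leq (G+\beta D_\XX)\sum_{n}\Delta_n$ via \cref{th:reduction of dynamic regret}, then establish a per-step Bregman contraction for \eqref{eq:mirror descent update} using the three-point identity, the $(\alpha-\beta)$-strong monotonicity of $F$ (\cref{pr:beta-alpha strongly monotone}), the $(\gamma+\beta)$-Lipschitz bound on $F$, and the $1$-strong convexity and $L$-smoothness of $R$; these are exactly the paper's \cref{lm:gamma plus beta smooth} and \cref{lm:mirror descent contraction}, and your final assembly (geometric decay of $\Delta_n$, then summation) matches the paper. However, there is one concrete misstep in your treatment of the cross term, and as written that step fails.

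You split $-\eta_n\lr{F(x_n)}{x_{n+1}-x^\*}$ at $x_n$, use the VI inequality $\lr{F(x^\*)}{x_n-x^\*}\geq 0$ (discarding this nonnegative quantity) to get $-\eta_n\lr{F(x_n)}{x_n-x^\*}\leq-\eta_n(\alpha-\beta)\Delta_n^2$, and then claim the remaining cross term $-\eta_n\lr{F(x_n)}{x_{n+1}-x_n}$ is controlled by $\norm{F(x_n)-F(x^\*)}_*\leq(\gamma+\beta)\Delta_n$. But that cross term contains $F(x_n)$, not $F(x_n)-F(x^\*)$: writing $F(x_n)=\bigl(F(x_n)-F(x^\*)\bigr)+F(x^\*)$ leaves a stranded piece $-\eta_n\lr{F(x^\*)}{x_{n+1}-x_n}$, and $\norm{F(x^\*)}_*$ can be as large as $G$ when $x^\*$ sits on the boundary. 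Having already discarded $\lr{F(x^\*)}{x_n-x^\*}$, you cannot recombine it with this piece; bounding the stranded term crudely by $\eta_n G\norm{x_{n+1}-x_n}$ and absorbing it with Young's inequality injects an additive $\eta_n^2G^2/2$ per round, so the recursion only shows convergence of $x_n$ to an $O(\eta G)$-neighborhood of $x^\*$, which yields linear rather than $O(1)$ dynamic regret. The fix — and what the paper does in \cref{lm:mirror descent contraction} — is to test the VI optimality of $x^\*$ at $x_{n+1}$ rather than at $x_n$, i.e. use $\lr{\eta_n F(x^\*)}{x_{n+1}-x^\*}\geq0$ to subtract $F(x^\*)$ from the \emph{entire} inner product before splitting, giving $-\eta_n\lr{F(x_n)}{x_{n+1}-x^\*}\leq-\eta_n\lr{F(x_n)-F(x^\*)}{x_n-x^\*}-\eta_n\lr{F(x_n)-F(x^\*)}{x_{n+1}-x_n}$. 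Strong monotonicity then handles the first term and your Lipschitz/Young argument the second, after which your conversion to $B_R$ via the $1$- and $L$-bounds on $R$ and the geometric summation go through as you describe (your contraction factor $1-\tfrac{2}{L}\bigl(\eta_n(\alpha-\beta)-\tfrac{\eta_n^2(\gamma+\beta)^2}{2}\bigr)$ is a valid, in fact slightly tighter, variant of the paper's, since the combined coefficient is positive under the stated stepsize bound).
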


\subsubsection{Stochastic \& Adversarial Feedback} \label{sec:stochastic feedback}

We now consider stochastic and adversarial cases in COL. 
As discussed, these are directly handled in the feedback, while the (expected) regret is still measured against the true underlying bifunction. Importantly, we make the subtle assumption that bifunction $f$ is fixed before learning. We consider mirror descent in \eqref{eq:mirror descent update} with additive stochastic and adversarial feedback given as $g_n = \nabla l_n (x_n) + \epsilon_n + \xi_n$, where $\epsilon_n \in \R^d$ is zero-mean noise with $\E \left[ \| \epsilon_n \|^2_* \right] < \infty$ and $\xi_n \in \R^d$ is a bounded adversarial bias.
The component $\epsilon_n$ can come from observing a stochastic loss $l_n(x;\zeta_n)$ with random variable $\zeta_n$, when the true loss is $l_n(x) = \E_{\zeta_n} [ l_n(x; \zeta_n) ]$ (i.e. $\nabla l_n(x_n;\zeta_n) = \nabla l_n(x_n) + \epsilon_n$). On the other hand the adversarial component $\xi_n$ can describe extra bias in computation.
We consider the expected dynamic regret
$\E [\regret_N^d] = \E [ \sum_{n = 1}^N l_n (x_n) - \min_{x \in \XX} l_n (x)  ]
$, where the expectation is over $\epsilon_n$.
Define $\Xi \coloneqq \sum_{n = 1}^N \| \xi_n \|_*$. By reduction to static regret
in \cref{cr:full reduction to static regret}, we have the following proposition.

\begin{proposition}\label{pr:stochastic mirror descent}
	If $f$ is fixed before learning, $\alpha > \beta$ and $\eta_n = \frac{1}{\sqrt{n}}$, then mirror descent with $g_n = \nabla l_n(x_n) + \epsilon_n + \xi_n$ has
	$\E [\regret_N^d] = O(\sqrt N + \Xi)$.
\end{proposition}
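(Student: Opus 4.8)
The plan is to invoke the reduction in \cref{cr:full reduction to static regret}, which bounds the dynamic regret by a combination of the ordinary static regret and the static regret of the linearized game, and then to control both expected static regrets by a standard stochastic mirror descent argument. I would first note that the bound in \cref{cr:full reduction to static regret}, inherited from the pathwise inequality of \cref{th:reduction of dynamic regret}, holds for the \emph{realized} decision sequence $\{x_n\}$ regardless of the randomness in the feedback. Since $\alpha > \beta$, I may therefore take expectations on both sides to obtain $\E[\regret_N^d] \leq \E[\regret_N^s(x^\star)] + \frac{\beta^2}{2\alpha(\alpha-\beta)}\E[\widetilde{\regret_N^s}(x^\star)]$, where $x^\star \in X^\star$ is a \emph{fixed deterministic} equilibrium, which exists and is non-random precisely because $f$ is fixed before learning. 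It then suffices to show each expected static regret is $O(\sqrt N + \Xi)$. Because mirror descent in \eqref{eq:mirror descent update} only consumes the feedback $g_n$, and the linearized game has gradient $\nabla f_{x_n}(x_n) = \nabla l_n(x_n)$ matching that of the true game at $x_n$, the two runs produce identical iterates; moreover, convexity of $f_{x_n}(\cdot)$ gives $\regret_N^s(x^\star) \leq \widetilde{\regret_N^s}(x^\star) = \sum_{n=1}^N \langle \nabla l_n(x_n), x_n - x^\star\rangle$, so I only need to bound this linearized quantity.

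Next I would apply the standard one-step descent lemma for mirror descent with a $1$-strongly convex $R$, namely $\langle g_n, x_n - x^\star\rangle \leq \frac{1}{\eta_n}(B_R(x^\star\|x_n) - B_R(x^\star\|x_{n+1})) + \frac{\eta_n}{2}\norm{g_n}_*^2$, and sum over $n$. Substituting $g_n = \nabla l_n(x_n) + \epsilon_n + \xi_n$ and rearranging isolates the target: $\sum_n \langle \nabla l_n(x_n), x_n - x^\star\rangle \leq \sum_n \frac{1}{\eta_n}(B_R(x^\star\|x_n) - B_R(x^\star\|x_{n+1})) + \sum_n \frac{\eta_n}{2}\norm{g_n}_*^2 - \sum_n \langle \epsilon_n, x_n - x^\star\rangle - \sum_n \langle \xi_n, x_n - x^\star\rangle$. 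With $\eta_n = 1/\sqrt n$, so that $1/\eta_n = \sqrt n$ is increasing, an Abel summation bounds the Bregman telescope by $\frac{1}{\eta_N}\max_{x,x'\in\XX} B_R(x\|x') = O(\sqrt N)$, using compactness of $\XX$. For the step-size term, $\norm{g_n}_*^2 \leq 3(\norm{\nabla l_n(x_n)}_*^2 + \norm{\epsilon_n}_*^2 + \norm{\xi_n}_*^2)$ together with $\norm{\nabla f_x(x)}_* \leq G$, the bounded second moment of $\epsilon_n$, and boundedness of $\xi_n$ yield $\E[\sum_n \frac{\eta_n}{2}\norm{g_n}_*^2] = O(\sum_n 1/\sqrt n) = O(\sqrt N)$. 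The adversarial term is controlled deterministically by Cauchy--Schwarz, $|\sum_n \langle \xi_n, x_n - x^\star\rangle| \leq D_\XX \sum_n \norm{\xi_n}_* = D_\XX\,\Xi$, which supplies the $O(\Xi)$ contribution.

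The crux is the stochastic term $-\sum_n \langle \epsilon_n, x_n - x^\star\rangle$. Here I would use that the COL setup is non-anticipating: $x_n$ is measurable with respect to the feedback revealed through round $n-1$, and $x^\star$ is a fixed deterministic point (since $f$ is fixed before learning), so $\E[\langle \epsilon_n, x_n - x^\star\rangle \mid \mathcal{F}_{n-1}] = \langle \E[\epsilon_n \mid \mathcal{F}_{n-1}], x_n - x^\star\rangle = 0$ by the zero-mean assumption, whence the full sum has zero expectation. Assembling the three contributions gives $\E[\widetilde{\regret_N^s}(x^\star)] = O(\sqrt N + \Xi)$, and hence also $\E[\regret_N^s(x^\star)] = O(\sqrt N + \Xi)$; the reduction then delivers $\E[\regret_N^d] = O(\sqrt N + \Xi)$.

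I expect the main obstacle to be the stochastic term: making the martingale/filtration cancellation fully rigorous while respecting that the pathwise reduction inequality must remain valid under expectation. This is exactly where the hypothesis that $f$ be fixed before learning is essential — if $x^\star$ depended on the realized randomness it would correlate with $\epsilon_n$ and the term would no longer vanish. By contrast, the mirror descent algebra (the descent lemma, the Abel summation for the decreasing step size, and the moment bound on $\norm{g_n}_*$) is routine.
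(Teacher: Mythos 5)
Your proposal is correct and follows essentially the same route as the paper: reduce dynamic regret to the (linearized) static regret via \cref{cr:full reduction to static regret}, run the standard mirror descent one-step bound with the telescoping Bregman terms and $\eta_n = 1/\sqrt{n}$, control the bias term by $D_\XX \Xi$ via Cauchy--Schwarz, and kill the noise term in expectation using that $x_n$ is non-anticipating and $x^\star$ is deterministic because $f$ is fixed before learning. The only cosmetic difference is that the paper keeps $\nabla l_n(x_n) + \epsilon_n$ together on the left-hand side and identifies its expectation with $\E[\widetilde{\regret_N^s}(x^\star)]$ by the law of total expectation, whereas you move $\epsilon_n$ to the right-hand side and argue its sum vanishes in expectation --- the same martingale argument arranged differently.
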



\subsection{Remark}

Essentially, our finding indicates that the feasibility of sublinear dynamic regret is related to a problem's properties.
For example, the difficulty of the problem depends largely on the ratio $\frac{\beta}{\alpha}$ when there is no other directional information about $\nabla f_{\cdot}(x)$, such as monotonicity.
%
When $\beta\leq\alpha$, we have shown efficient algorithms are possible. But, for $\beta>\alpha$, we are not aware of any efficient algorithm. If one exists, it would solve all $(\alpha,\beta)$-\rregular problems, which, in turn, would efficiently solve all EP/VI/FP problems as we can formulate them into the problem of solving COL problems with sublinear dynamic regret by~\cref{th:equivalent problems}.

\section{EXTENSIONS} \label{sec:extensions}

The COL framework reveals some core properties of dynamic regret. However, while we allow feedback to be adversarial, we still assume that the same loss function $f_x(\cdot)$ must be returned by the bifunction for the same query argument $x \in \XX$.
Therefore, COL does not capture time-varying situations where the opponent's strategy can change across rounds.
Also, this constraint allows the learner to potentially enumerate the opponent.
Here we relax \eqref{eq:regular per-round loss} and define a generalization of COL. The proofs of this section are included in~\cref{app:proofs of extensions}.
\begin{definition} \label{def:predictable problems}
	We say an online learning problem is \emph{$(\alpha,\beta)$-predictable} with $\alpha,\beta \in [0,\infty)$ if $\forall x \in \XX$,
	\begin{enumerate}
		\item $l_{n}(\cdot)$ is a $\alpha$-strongly convex function.
		\item $\norm{ \nabla l_n (x) - \nabla l_{n-1} (x) }_* \leq \beta \norm{x_n - x_{n-1}} + a_n$, where $a_n \in [0, \infty)$ and  $\sum_{n=1}^{N} a_n = A_N  = o(N)$.

	\end{enumerate}

\end{definition}
This problem generalizse COL along two directions: 1) it makes the problem non-stationary; 2) it allows adversarial components within a sublinear budget inside the loss function. We note that the second condition above is different from having adversarial feedback, e.g., in \cref{sec:stochastic feedback}, because the regret now is measured with respect to the adversarial loss as opposed to those generated by a fixed bifunction. This new condition can make achieving sublinear dynamic regret considerably harder.

Let us further discuss the relationship between $(\alpha,\beta)$-predictable and $(\alpha,\beta)$-\rregular problems.
First, a contraction property like~\cref{pr:contraction condition} still holds.
\begin{proposition}\label{pr:generalized contraction property}
	For $(\alpha,\beta)$-predictable problems with $\alpha>0$,
	$\norm{x_n^* - x_{n-1}^*} \leq \frac{\beta}{\alpha}\norm{x_n - x_{n-1}} + \frac{a_n}{\alpha}$.
\end{proposition}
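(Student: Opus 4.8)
The plan is to treat $x_n^\*$ and $x_{n-1}^\*$ as the constrained minimizers of the strongly convex functions $l_n$ and $l_{n-1}$ over $\XX$, and to bound their distance via the first-order optimality (variational inequality) conditions. This is the natural generalization of the contraction argument behind \cref{pr:contraction condition}: there the minimizer map $T$ was shown to be $\frac{\beta}{\alpha}$-contractive in the stationary bifunction setting, whereas here the extra additive slack $a_n$ from \cref{def:predictable problems} must be carried through, producing the $\frac{a_n}{\alpha}$ term.

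First I would write down the optimality conditions $\lr{\nabla l_n(x_n^\*)}{x - x_n^\*} \geq 0$ and $\lr{\nabla l_{n-1}(x_{n-1}^\*)}{x - x_{n-1}^\*} \geq 0$ for all $x \in \XX$. Testing the first at $x = x_{n-1}^\*$ and the second at $x = x_n^\*$ and adding yields $\lr{\nabla l_n(x_n^\*) - \nabla l_{n-1}(x_{n-1}^\*)}{x_n^\* - x_{n-1}^\*} \leq 0$. The key algebraic move is to split the gradient difference as $\nabla l_n(x_n^\*) - \nabla l_{n-1}(x_{n-1}^\*) = \bigl(\nabla l_n(x_n^\*) - \nabla l_n(x_{n-1}^\*)\bigr) + \bigl(\nabla l_n(x_{n-1}^\*) - \nabla l_{n-1}(x_{n-1}^\*)\bigr)$. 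The first bracket, paired with $x_n^\* - x_{n-1}^\*$, is lower bounded by $\alpha \norm{x_n^\* - x_{n-1}^\*}^2$ using the strong monotonicity of $\nabla l_n$ implied by $\alpha$-strong convexity (condition~1). Rearranging then gives $\alpha \norm{x_n^\* - x_{n-1}^\*}^2 \leq -\lr{\nabla l_n(x_{n-1}^\*) - \nabla l_{n-1}(x_{n-1}^\*)}{x_n^\* - x_{n-1}^\*} \leq \norm{\nabla l_n(x_{n-1}^\*) - \nabla l_{n-1}(x_{n-1}^\*)}_* \norm{x_n^\* - x_{n-1}^\*}$ by Cauchy--Schwarz with the dual norm.

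To finish, I would invoke condition~2 of \cref{def:predictable problems} evaluated at the point $x = x_{n-1}^\*$, giving $\norm{\nabla l_n(x_{n-1}^\*) - \nabla l_{n-1}(x_{n-1}^\*)}_* \leq \beta \norm{x_n - x_{n-1}} + a_n$, and then divide both sides by $\norm{x_n^\* - x_{n-1}^\*}$ (treating the $\norm{x_n^\* - x_{n-1}^\*} = 0$ case separately, where the claim is immediate since the right-hand side is nonnegative). The main subtlety, rather than any deep obstacle, is bookkeeping: condition~2 bounds the gradient gap \emph{between rounds} in terms of the \emph{played} iterates $x_n, x_{n-1}$, so one must apply it at the argument $x_{n-1}^\*$ (not at $x_n$ or $x_{n-1}$) to match the inner product produced above, and one must keep the primal/dual norm pairing consistent throughout since $\norm{\cdot}$ need not be Euclidean.
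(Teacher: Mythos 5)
Your proposal is correct and follows essentially the same route as the paper's proof: both combine the first-order optimality conditions of $x_n^\*$ and $x_{n-1}^\*$ with the $\alpha$-strong monotonicity of $\nabla l_n$, reduce everything to the inner product $\lr{\nabla l_n(x_{n-1}^\*) - \nabla l_{n-1}(x_{n-1}^\*)}{x_{n-1}^\* - x_n^\*}$, and then apply condition~2 of \cref{def:predictable problems} at the point $x_{n-1}^\*$ together with Cauchy--Schwarz in the dual norm before dividing out $\norm{x_n^\* - x_{n-1}^\*}$. The only difference is cosmetic ordering (you add the two variational inequalities first and then split the gradient difference, whereas the paper starts from strong monotonicity and subtracts), and your explicit handling of the $\norm{x_n^\* - x_{n-1}^\*} = 0$ case is a small bookkeeping detail the paper leaves implicit.
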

\cref{pr:generalized contraction property} shows that when functional feedback is available and $\frac{\beta}{\alpha}<1$, sublinear dynamic regret can be achieved, e.g., by a greedy update.
However, one fundamental difference between predictable problems and COL problems is the lack of equilibria $X^*$, which is the foundation of the reduction in~\cref{th:reduction of dynamic regret}. 
This makes achieving sublinear dynamic regret 
much harder when functional feedback is unavailable or when $\alpha=\beta$.
Using \cref{pr:generalized contraction property}, we establish some preliminary results below.
\begin{theorem} \label{th:predictable problem with }
	Let $\frac{\beta}{\alpha} < \frac{\alpha}{2L^2 \gamma}$. For $(\alpha,\beta)$-predictable problems, if $l_n(\cdot)$ is $\gamma$-smooth and $R$ is $1$-strongly convex and $L$-smooth, then
	mirror descent with deterministic feedback and step size $\eta = \frac{\alpha}{2L\gamma^2}$
	achieves $\regret_N^d = O(1 +  A_{N} + \sqrt{N A_N})$.
\end{theorem}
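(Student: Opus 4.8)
The plan is to bound the dynamic regret by the cumulative tracking error and then to control that error through a contraction recursion forced by the adversarial budget. Writing $\Delta_n \coloneqq \norm{x_n - x_n^*}$ for the distance to the \emph{current} minimizer, I would first use convexity together with the (finite, since $\XX$ is compact) gradient bound $G$ to get $l_n(x_n) - l_n(x_n^*) \le \lr{\nabla l_n(x_n)}{x_n - x_n^*} \le G\Delta_n$, so that $\regret_N^d \le G\sum_{n=1}^N \Delta_n$, while also retaining the sharper smoothness estimate $l_n(x_n)-l_n(x_n^*)\le \tfrac{\gamma}{2}\Delta_n^2$ to feed an energy-type accounting. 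The engine is the standard one-step inequality for \eqref{eq:mirror descent update}: using optimality of $x_{n+1}$, the three-point identity for $B_R$, and $1$-strong convexity of $R$, together with $\alpha$-strong convexity of $l_n$, one obtains for the comparator $x_n^*$
\begin{align*}
\eta\bigl(l_n(x_n)-l_n(x_n^*)\bigr) &\le B_R(x_n^*\|x_n) - B_R(x_n^*\|x_{n+1}) \\
&\quad + \tfrac{\eta^2}{2}\norm{\nabla l_n(x_n)}_*^2 - \tfrac{\eta\alpha}{2}\Delta_n^2 .
\end{align*}
Smoothness lets me bound $\norm{\nabla l_n(x_n)}_* \lesssim \gamma\Delta_n$, and the step size $\eta=\tfrac{\alpha}{2L\gamma^2}$ is chosen precisely so that $\tfrac{\eta^2}{2}\gamma^2 \le \tfrac{\eta\alpha}{4}$, absorbing the quadratic gradient term and leaving a net contraction $-\tfrac{\eta\alpha}{4}\Delta_n^2$.

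The second step handles the \emph{moving} comparator, which is the essential novelty here since, unlike in \cref{th:reduction of dynamic regret}, there is no fixed equilibrium $X^*$ to telescope against. When I replace $B_R(x_n^*\|x_{n+1})$ by $B_R(x_{n+1}^*\|x_{n+1})$ to make the Bregman terms telescope, the $L$-smoothness of $R$ produces a drift cost controlled by $\norm{x_{n+1}^*-x_n^*}$. I would then invoke \cref{pr:generalized contraction property}, which bounds this minimizer displacement by $\tfrac{\beta}{\alpha}\norm{x_{n+1}-x_n} + \tfrac{a_{n+1}}{\alpha}$, and bound $\norm{x_{n+1}-x_n}$ in terms of $\Delta_n$ via the contraction of the mirror step. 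The hypothesis $\tfrac{\beta}{\alpha}<\tfrac{\alpha}{2L^2\gamma}$ is exactly what guarantees that this drift amplification is strictly dominated by the strong-convexity contraction, so the combined recursion reads $\Delta_{n+1} \le \kappa\,\Delta_n + \tfrac{a_{n+1}}{\alpha}$ with some $\kappa<1$ (equivalently, the $\Delta^2$-terms on both sides can be consolidated into a genuine energy decay $\sum_n \Delta_n^2 = O(1+A_N)$).

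Summing then splits into a homogeneous part and a budget-driven part. The geometric tail $\kappa^{n-1}\Delta_1$ contributes $O(1)$ to $\sum_n\Delta_n$, which accounts for the $O(1)$ term and correctly recovers constant regret in the stationary case $A_N=0$. The budget-driven part contributes $O(A_N)$ by direct summation of the geometric convolution, while applying Cauchy--Schwarz in the form $\sum_n\Delta_n \le \sqrt{N}\,\bigl(\sum_n\Delta_n^2\bigr)^{1/2}$ to the energy bound $\sum_n\Delta_n^2 = O(1+A_N)$ yields the $\sqrt{NA_N}$ term; combining the estimates through $\regret_N^d \le G\sum_n\Delta_n$ gives the claimed $O(1 + A_N + \sqrt{NA_N})$, which is sublinear since $A_N=o(N)$. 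The main obstacle I expect is the second step: verifying with the exact constants that the comparator-drift terms are dominated by the contraction under the stated $\eta$ and $\tfrac{\beta}{\alpha}$ condition, because this couples two interacting trajectories (the learner's iterates and the moving minimizer), and one must track carefully which cross-terms collapse to a linear $A_N$ dependence versus the $\sqrt{NA_N}$ dependence emerging from the energy-to-regret Cauchy--Schwarz step.
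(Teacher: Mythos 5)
Your proposal follows the same skeleton as the paper's proof: a per-round contraction on the tracking error, a drift term for the moving minimizer controlled by \cref{pr:generalized contraction property} together with the $L$-smoothness of $R$, the hypothesis $\beta/\alpha < \alpha/(2L^2\gamma)$ ensuring the contraction dominates the drift, and finally $\regret_N^d \le G\sum_n \Delta_n$ with a geometric summation. However, there is a genuine gap in the step that powers the whole recursion: the claim that smoothness gives $\norm{\nabla l_n(x_n)}_* \lesssim \gamma \Delta_n$. This is valid only when $\nabla l_n(x_n^*) = 0$, i.e., when the minimizer is interior. In this paper $x_n^* \in \argmin_{x\in\XX} l_n(x)$ over a compact convex set, and at a boundary minimizer the gradient need not vanish; all that holds is $\lr{\nabla l_n(x_n^*)}{x - x_n^*} \ge 0$ for $x \in \XX$. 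If $\norm{\nabla l_n(x_n^*)}_* = g_0 > 0$, your one-step inequality retains a per-round additive term $\tfrac{\eta^2}{2}\norm{\nabla l_n(x_n)}_*^2 \approx \tfrac{\eta^2 g_0^2}{2}$ that does not vanish as $\Delta_n \to 0$, so the recursion only shows $\Delta_n$ converges to a ball of radius $\Theta(\eta g_0)$; since the theorem requires the constant step size $\eta = \alpha/(2L\gamma^2)$, this yields linear regret rather than $O(1 + A_N + \sqrt{N A_N})$. The same flaw infects your bound of $\norm{x_{n+1} - x_n}$ by $O(\eta\gamma\Delta_n)$ ``via the contraction of the mirror step,'' which your drift estimate needs. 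The paper's proof avoids both instances by never invoking $\norm{\nabla l_n(x_n)}_*$ itself: it pairs the optimality condition of the mirror step with the optimality condition of $x_n^*$, so that only the difference $\nabla l_n(x_n) - \nabla l_n(x_n^*)$ appears (bounded by $\gamma\Delta_n$ by smoothness; see the second inequality of \cref{lm:predictable current round bound}), and it treats $\norm{x_{n+1}-x_n}$ as a free variable and maximizes the resulting concave quadratic instead of bounding it by $\Delta_n$. Your argument is repairable exactly along these lines --- indeed the proof of \cref{pr:generalized contraction property}, which you invoke, uses this very pairing trick.

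A secondary issue: your recursion $\Delta_{n+1} \le \kappa \Delta_n + a_{n+1}/\alpha$, with an additive term \emph{linear} in $a_n$, is asserted rather than derived. What the paper actually establishes is a recursion at the level of squared distances, $B_R(x_{n+1}^* \,\Vert\, x_{n+1}) \le \rho\, B_R(x_n^* \,\Vert\, x_n) + \zeta_n$ with $\zeta_n = O(a_n D_\XX + a_n^2)$, so the distance-level convolution involves $\sqrt{\zeta_k} \sim \sqrt{a_k}$, and the $\sqrt{N A_N}$ term arises from $\sum_n \sqrt{a_n} \le \sqrt{N A_N}$ --- not from the energy-plus-Cauchy--Schwarz route you describe (which, applied naively to $\sum_n \Delta_n^2 = O(1 + A_N)$, produces an extraneous $\sqrt{N}$ term that violates the claimed $O(1)$ bound when $A_N = 0$). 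Had your linear-in-$a_n$ recursion been established, it would in fact give the strictly stronger bound $O(1 + A_N)$; that is a claim requiring proof, not a remark, and it signals that the constants in your drift-versus-contraction bookkeeping were never actually checked.
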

We find that, in \cref{th:predictable problem with }, mirror descent must maintain a sufficiently large step size in predictable problems, unlike COL problems which allow for decaying step size.
When $\alpha = \beta$, we can show that sublinear dynamic regret is possible under functional feedback.
\begin{theorem} \label{th:alpha equals beta predictable problem}
	For $\alpha = \beta$, if $A_{\infty} < \infty$ and $\|\cdot\|$ is the Euclidean norm, then there is an algorithm with functional feedback achieving sublinear dynamic regret. For $d=1$ and $a_n = 0$ for all $n$, sublinear dynamic regret is possible regardless of $\alpha, \beta$.
\end{theorem}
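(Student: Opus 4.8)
The plan is to treat the two claims separately, since they rest on different features: the first exploits the summability of the adversarial budget, the second the one-dimensional compact geometry. The common difficulty, flagged just above, is that $(\alpha,\beta)$-predictable problems need not possess any equilibrium $X^\*$, so the reduction of \cref{th:reduction of dynamic regret} is unavailable and we cannot simply track a fixed point.

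For the first claim ($\alpha=\beta$, Euclidean norm, $A_\infty<\infty$), I would run the same averaged (Krasnoselskii--Mann) fixed-point iteration used for the COL case in \cref{pr:alpha equals beta full information}, now fed the functional feedback $x_n^* = \argmin_{x\in\XX} l_n(x)$, i.e.\ $x_{n+1} = (1-\lambda_n)x_n + \lambda_n x_n^*$ with $\lambda_n$ bounded inside $(0,1)$. By \cref{pr:generalized contraction property} with $\alpha=\beta$, the induced target map is non-expansive up to a per-round additive error $a_n/\alpha$, so this is an \emph{inexact} non-expansive iteration whose total perturbation $\sum_n a_n/\alpha = A_\infty/\alpha$ is finite. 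The key computation is the Hilbert-space expansion $\norm{x_{n+1}-z}^2 = (1-\lambda_n)\norm{x_n-z}^2 + \lambda_n\norm{x_n^*-z}^2 - \lambda_n(1-\lambda_n)\Delta_n^2$ with $\Delta_n=\norm{x_n-x_n^*}$; non-expansiveness absorbs the first two terms up to a summable error (controlled by $a_n$ and $D_\XX$), giving quasi-Fej\'er monotonicity. Telescoping then yields $\sum_n\lambda_n(1-\lambda_n)\Delta_n^2<\infty$, hence $\sum_n\Delta_n^2<\infty$; finally $\regret_N^d \le G\sum_{n=1}^N\Delta_n \le G\sqrt{N\sum_{n=1}^N\Delta_n^2} = O(\sqrt N)$, using $l_n(x_n)-l_n(x_n^*)\le G\Delta_n$. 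The reference point $z$ is where the missing-fixed-point issue bites: I would take $z$ to be the limit of the minimizer sequence $\{x_n^*\}$, whose existence follows from the summable-error quasi-Fej\'er structure in the Euclidean norm (the standard convergence theorem for inexact Krasnoselskii--Mann iterations), so that the finite budget $A_\infty$ plays exactly the role of summable iteration errors. This is the step where the Euclidean (Hilbert) geometry is essential and where I expect the main technical work.

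For the second claim ($d=1$, $a_n=0$, arbitrary $\alpha,\beta$), write $\XX=[0,D_\XX]$ and note that functional feedback reveals $x_n^*$, in particular the side of $x_n$ on which it lies, while \cref{pr:generalized contraction property} with $a_n=0$ gives $\abs{x_n^*-x_{n-1}^*}\le\frac\beta\alpha\abs{x_n-x_{n-1}}$: the minimizer is frozen whenever the learner stands still and drifts by at most $\frac\beta\alpha$ times the learner's step. I would use a pursuit/bisection rule that always steps toward the observed minimizer. When $\beta\le\alpha$ this is a (non-)contraction and a greedy or averaged step suffices as above. The genuinely hard regime is $\beta>\alpha$, where a single chasing step can transiently \emph{enlarge} the gap; here I would exploit compactness of the interval: since the minimizer cannot leave $[0,D_\XX]$, repeatedly stepping toward it drives it against a boundary where, unable to drift further, it is cornered and the gap is forced to zero, after which the learner holds still and the minimizer stays frozen. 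Equivalently, the self-consistency map sending the learner's limiting play to the induced limiting minimizer is a continuous self-map of $[0,D_\XX]$ and hence has a fixed point by the intermediate value theorem; the sign feedback lets a bisection locate it with summable total movement, forcing $\Delta_n\to0$ and thus $\sum_{n\le N}\Delta_n=o(N)$ and $\regret_N^d=o(N)$. The main obstacle here is making the cornering argument quantitative in the expansive regime $\beta>\alpha$, where only the boundedness of $\XX$ (not any contraction) prevents the adversary from keeping the gap bounded away from zero.
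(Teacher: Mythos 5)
Your proposal has genuine gaps in both halves, and in both cases the gap is exactly at the point where the paper's proof does its real work.

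\textbf{First claim ($\alpha=\beta$, $A_\infty<\infty$).} Your quasi-Fej\'er/inexact Krasnoselskii--Mann plan founders on the anchor point $z$, and the fix you propose is circular. The hypothesis $\mathrm{Fix}(T)\neq\emptyset$ in the inexact KM convergence theorem is not cosmetic, and in the predictable setting there is no operator $T$ at all: \cref{pr:generalized contraction property} is a constraint relating the \emph{consecutive} pairs $(x_{n-1},x_{n-1}^*)$ and $(x_n,x_n^*)$ along the realized trajectory; it does not say that $x\mapsto \argmin_{x'} l_n(x')$ is a single (approximately) non-expansive self-map of $\XX$. Hence there is no fixed-point set with respect to which quasi-Fej\'er monotonicity can even be stated, and taking ``$z=\lim_n x_n^*$'' assumes what the Fej\'er machinery is supposed to deliver. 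Moreover, the inequality you need for telescoping, $\norm{x_n^*-z}\leq\norm{x_n-z}+\epsilon_n$ with summable $\epsilon_n$, has no justification for a mere cluster point of $\{x_n^*\}$: chaining the consecutive-round constraint produces the learner's total path length, not $\norm{x_n-z}$. A symptom that the route cannot work as stated is that it would yield $\sum_n\Delta_n^2<\infty$ and an $O(\sqrt N)$ rate, whereas the paper's proof of \cref{th:alpha equals beta predictable problem} deliberately obtains no rate. The paper instead (i) shows, via an elementary lemma on nearly monotone sequences (\cref{lm:general monotone convergence theorem}), that $c_n=\norm{x_n-x_n^*}$ converges under the midpoint update $x_{n+1}=(x_n+x_n^*)/2$ --- this uses only the consecutive-round constraint and needs no reference point --- and then (ii) rules out a positive limit $C>0$ by a pursuit-evasion argument (\cref{lm:pursuit-evasion straight line}): the law of cosines forces $\cos\theta_n\to-1$, so learner and minimizer asymptotically move along a straight line by a fixed amount per round, contradicting compactness of $\XX$. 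Compactness, not Fej\'er monotonicity, substitutes for the missing equilibrium.

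\textbf{Second claim ($d=1$, $a_n=0$).} Your pursuit intuition is right, but both load-bearing steps are unsound as stated. The ``self-consistency map'' from limiting play to induced limiting minimizer is not well-defined: in the predictable setting the minimizer at round $n$ depends on the whole history of losses, not just the learner's current decision, so there is no continuous self-map of $[0,D_\XX]$ to which the intermediate value theorem applies. And the cornering claim fails without a quantitative cap on the step: in the expansive regime $\lambda=\beta/\alpha>1$, if the learner steps all the way to (or too close to) $x_n^*$, the next minimizer may land on \emph{either} side of $x_{n+1}$, so the pursuit direction can reverse forever and the gap can stay bounded away from zero while the learner oscillates in the middle of the interval --- compactness alone gives no contradiction, and nothing drives the minimizer to a boundary. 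The paper closes precisely this hole with the update $x_{n+1}=(\lambda x_n+x_n^*)/(1+\lambda)$: the step $\abs{x_{n+1}-x_n}=\frac{1}{1+\lambda}\abs{x_n-x_n^*}$ is chosen so that the minimizer's maximal drift $\lambda\abs{x_{n+1}-x_n}$ exactly equals the residual gap $\abs{x_n^*-x_{n+1}}$, hence $x_{n+1}^*$ can never cross $x_{n+1}$. The pursuit direction is therefore constant for all rounds, and if the gap exceeded some $C>0$ infinitely often, the learner's monotone displacement would exceed the length of the interval. You would need to add this (or an equivalent) no-crossing device to turn your sketch into a proof.
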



We do not know, however, whether sublinear dynamic regret is feasible when $\alpha = \beta $ and $A_{\infty} = \infty$. We conjecture this is infeasible when the feedback is only first-order, as mirror descent is insufficient to solve monotone problems using the last iterate~\citep{facchinei2007finite} which contain COL with $\alpha=\beta$ (a simpler case than predictable online learning with $\alpha=\beta$).



%


\section{CONCLUSION}

We present COL, a new class of online problems where the gradient varies continuously across rounds with respect to the learner's decisions.
We show that this setting can be equated with certain equilibrium problems (EPs).
Leveraging this insight, we present a reduction from monotone EPs to COL, and show necessary conditions and sufficient conditions for achieving sublinear dynamic regret. 
Furthermore, we show a reduction from dynamic regret to static regret and the convergence to equilibrium points. 
%

There are several directions for future research on this topic.
Our current analyses focus on classical algorithms in online learning. We suspect that the use of adaptive or optimistic methods can accelerate convergence to equilibria, if some coarse model can be estimated.
%
In addition, although we present some preliminary results showing the possibility for interpretable dynamic regret rates in predictable online learning, further refinement and understanding the corresponding lower bounds remain important future work. Finally, while the current formulations restrict the loss to be determined solely by the learner's current decision, extending the discussion to history-dependent bifunctions is an interesting topic.

\subsubsection*{Acknowledgements}
We thank Alekh Agarwal and Chen-Yu Wei for their insightful feedback on the problem setup. We are also grateful to Geoff Gordon and Remi Tachet des Combes for suggestions on refining the connection between COL and monotone EPs.





\bibliography{../ref}

\clearpage
\appendix
\onecolumn
\allowdisplaybreaks

\section{Complete Proofs of \cref{sec:possibility of sublinear dynamic regret}} \label{app:proofs of possibility}

\subsection{Proof of~\cref{th:equivalent problems}}

\subsubsection{Highlight}

The key idea to proving~\cref{th:equivalent problems} is that the gap function $\rho(x) \coloneqq f_{x}(x) - \min_{x' \in X}f_{x}(x')$
can be used as a residual function for the above EP/VI/FP in~\cref{th:equivalent problems}. That is, $\rho(x)$ is non-negative, computable in polynomial time (it is a convex program), and $\rho(x) = 0 $ if and only if $x \in X^\*$ (because $f_x(\cdot)$ is convex $\forall x \in \XX$).
Therefore, to show~\cref{th:equivalent problems},
we only need to prove that solving one of these problems is equivalent to achieving sublinear dynamic regret.  

First, suppose an algorithm generates a sequence $\{x_n \in \XX \}$ such that $\lim_{n\to \infty} x_n = x^\*$, for some $x^\* \in X^\*$. To show this implies $\{x_n \in \XX \}$ has sublinear dynamic regret, we first show
$\lim_{x\to x^\* \in X^\*} \rho(x) = 0$.
Then define $\rho_n = \rho(x_n)$. Because $\lim_{n\to \infty} \rho_n = 0$, we have $\regret_N^d = \sum_{n=1}^N \rho_n= o(N)$.

Next, we prove the opposite direction. Suppose an algorithm generates a sequence $\{x_n \in \XX \}$ with sublinear dynamic regret. This implies that $\hat{\rho}_N \coloneqq \min_n \rho_n \leq \frac{1}{N} \sum_{n=1}^{N} \rho_n$ is in $o(1)$ and non-increasing. Thus, $\lim_{N\to\infty}\hat{\rho}_N = 0$. As $\rho$ is a proper residual, the algorithm solves the EP/VI/FP problem by returning the decision associated with $\hat{\rho}_N$.

The proof of PPAD-completeness is based on converting the residual of a Brouwer's fixed-point problem 
to a bifunction, and use the solution along with $\hat{\rho}_N$ above as the approximate solution.

Note that the gap function $\rho$, despite motivated by dynamic regret here, corresponds to a natural gap function $r_{ep}(x) \coloneqq \max_{x'\in\XX} -\Phi(x,x')$ used in the EP literature, showing again a close connection between the dynamic regret and the EP in~\cref{th:equivalent problems}.
Nonetheless, $\rho(x)$ 
is not conventional for VIs and FPs. Below we relate $\rho(x)$ to some standard residuals of VIs and FPs under a stronger assumption on $f$.

\begin{proposition} \label{pr:other conventional residuals}
	For $\epsilon > 0$, consider some $x_\epsilon \in \XX$ such that $\rho(x_\epsilon) \leq \epsilon$. If $f_{x_\epsilon}(\cdot)$ is $\alpha$-strongly convex, then  $\lim_{\epsilon \to 0} \lr{\nabla f_{x_\epsilon}(x_\epsilon)}{x -x_\epsilon} \geq 0$, $\forall x \in \XX$, and $\lim_{\epsilon \to 0} \norm{x_\epsilon - T(x_\epsilon)}  = 0$.
\end{proposition}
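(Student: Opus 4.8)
The plan is to introduce the auxiliary minimizer $y_\epsilon \coloneqq T(x_\epsilon) = \argmin_{x'\in\XX} f_{x_\epsilon}(x')$, which is well defined and unique because $f_{x_\epsilon}(\cdot)$ is $\alpha$-strongly convex with $\alpha>0$. The whole argument hinges on first turning the hypothesis $\rho(x_\epsilon)\le\epsilon$ into a quantitative bound on $\norm{x_\epsilon-y_\epsilon}$. Since $y_\epsilon$ minimizes $f_{x_\epsilon}$ over $\XX$, the first-order optimality condition gives $\lr{\nabla f_{x_\epsilon}(y_\epsilon)}{x'-y_\epsilon}\ge 0$ for all $x'\in\XX$; feeding $x'=x_\epsilon$ into the $\alpha$-strong-convexity inequality and dropping the nonnegative inner-product term yields $\rho(x_\epsilon)=f_{x_\epsilon}(x_\epsilon)-f_{x_\epsilon}(y_\epsilon)\ge\frac{\alpha}{2}\norm{x_\epsilon-y_\epsilon}^2$. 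Hence $\norm{x_\epsilon-T(x_\epsilon)}\le\sqrt{2\epsilon/\alpha}$, which immediately settles the fixed-point claim $\lim_{\epsilon\to 0}\norm{x_\epsilon-T(x_\epsilon)}=0$.

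For the variational-inequality claim the idea is to transport the \emph{exact} optimality of $y_\epsilon$ to the query point $x_\epsilon$. For any fixed $x\in\XX$ I would start from the algebraic identity
\[
\lr{\nabla f_{x_\epsilon}(x_\epsilon)}{x-x_\epsilon}
= \lr{\nabla f_{x_\epsilon}(y_\epsilon)}{x-y_\epsilon}
+ \lr{\nabla f_{x_\epsilon}(y_\epsilon)}{y_\epsilon-x_\epsilon}
+ \lr{\nabla f_{x_\epsilon}(x_\epsilon)-\nabla f_{x_\epsilon}(y_\epsilon)}{x-x_\epsilon}.
\]
The first term is nonnegative by the optimality condition at $y_\epsilon$, so it suffices to show the two remaining error terms vanish as $\epsilon\to 0$. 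Applying Cauchy--Schwarz with the dual norm bounds the second term by $\norm{\nabla f_{x_\epsilon}(y_\epsilon)}_*\,\norm{y_\epsilon-x_\epsilon}$ and the third by $\norm{\nabla f_{x_\epsilon}(x_\epsilon)-\nabla f_{x_\epsilon}(y_\epsilon)}_*\,D_\XX$, where $\norm{y_\epsilon-x_\epsilon}\to 0$ by the first part and $\norm{x-x_\epsilon}\le D_\XX$ is bounded.

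The main obstacle is controlling the two gradient factors \emph{uniformly in $\epsilon$}: I must argue both that $\norm{\nabla f_{x_\epsilon}(y_\epsilon)}_*$ stays bounded and that $\norm{\nabla f_{x_\epsilon}(x_\epsilon)-\nabla f_{x_\epsilon}(y_\epsilon)}_*\to 0$, despite the fact that the underlying function $f_{x_\epsilon}$ itself drifts as $x_\epsilon$ moves. Note that the ambient assumption $\norm{\nabla f_x(x)}_*\le G$ only controls the diagonal and does not by itself bound the off-diagonal gradient $\nabla f_{x_\epsilon}(y_\epsilon)$. I would resolve this by invoking the joint continuity of the map $(x,x')\mapsto\nabla f_x(x')$ on the compact product $\XX\times\XX$ (guaranteed, as noted in the preliminaries, by continuous differentiability of $f_{x'}(\cdot)$ together with continuity of $\nabla f_{\cdot}(x)$): joint continuity on a compact set gives both uniform boundedness and uniform continuity, so as the argument pairs $(x_\epsilon,x_\epsilon)$ and $(x_\epsilon,y_\epsilon)$ coalesce---their separation being exactly $\norm{y_\epsilon-x_\epsilon}\to 0$---the gradient difference tends to $0$ uniformly. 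Taking $\liminf_{\epsilon\to 0}$ of the displayed identity then yields $\lim_{\epsilon\to 0}\lr{\nabla f_{x_\epsilon}(x_\epsilon)}{x-x_\epsilon}\ge 0$ (understood as the limit inferior being nonnegative, uniformly over the chosen family $\{x_\epsilon\}$), completing the proof.
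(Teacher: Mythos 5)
Your proof is correct and follows essentially the same route as the paper's: the same strong-convexity-plus-optimality bound $\frac{\alpha}{2}\norm{x_\epsilon - T(x_\epsilon)}^2 \leq \rho(x_\epsilon) \leq \epsilon$, followed by the same three-term decomposition that transports the first-order optimality of $T(x_\epsilon)$ to the query point $x_\epsilon$ via Cauchy--Schwarz. Your explicit appeal to joint continuity of $(x,x')\mapsto\nabla f_x(x')$ on the compact set $\XX\times\XX$ to obtain uniformity in $\epsilon$ is a more careful rendering of the paper's terser ``by continuity of $\nabla f_{x_\epsilon}$'' step, but it is the same argument.
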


%


\subsubsection{Full proof}

Now we give the details of the steps above.

We first show the solutions sets of the EP, the VI, and the FP are identical.
\begin{itemize}
	\item $2. \implies 3.$\\ Let $x_\VI^\star \in \XX$ be a solution to $\VI(\XX, F)$ where $F(x) = \nabla f_x(x)$.
	That is, for all $x \in \XX$, $\<\nabla f_{x_\VI^\star}(x_\VI^\star),x-x_\VI^\star\>\geq0$.
	The sufficient first-order condition for optimality implies that $x_\VI^\star$ is optimal for $f_{x_\VI^\star}$.
	Therefore, $f_{x_\VI^\star}(x_\VI^\star) \leq f_{x_\VI^\star}(x)$ for all $x \in \XX$, meaning that $x_\VI^\star$ is also a solution to $\EP(\XX, \Phi)$ where $\Phi(x, x') = f_x(x') - f_x(x)$.


	\item $3. \implies 4.$\\
	Let $x_\EP^\star \in \XX$ be a solution to $\EP(\XX, \Phi)$. By definition, it satisfies $f_{x_\EP^\star}(x_\EP^\star) \leq f_{x_\EP^\star}(x)$ for all $x \in \XX$, which implies $x_\EP^\star = \argmin_{x \in \XX} f_{x_\EP^\star}(x) = T(x_\EP^\star)$.
	Therefore, $x_\EP^\star$ is a also solution to $\FP(\XX, T)$, where $T(x') = \argmin_{x \in \XX} f_{x'}(x)$.


	\item $4. \implies 2.$\\ If $x_\FP^\star$ is a solution to $\FP(\XX, T)$, then $x_\FP^\star = \argmin_{x \in \XX} f_{x_\FP^\star}(x)$. By the necessary first-order condition for optimality, we have $\<\nabla f_{x_\FP^\star}(x_\FP^\star)x - x_\FP^\star\> \geq 0$ for all $x \in \XX$. Therefore $x_\FP^\star$ is also a solution to $\VI(\XX, F)$ where $F(x) = \nabla f_x(x)$.
\end{itemize}

	Let $X^\*$ denote their common solution sets. To finish the proof of equivalence in \cref{th:equivalent problems}, we only need to show that converging to $X^\*$ is equivalent to achieving sublinear dynamic regret.
	\begin{itemize}
	\item Suppose there is an algorithm that generates a sequence $\{x_n \in \XX \}$ such that $\lim_{n\to \infty} x_n = x^\*$, for some $x^\* \in X^\*$. To show this implies $\{x_n \in \XX \}$ has sublinear dynamic regret, we need a continuity lemma.

	\begin{lemma} \label{lm:continuity lemma}
		$\lim_{x\to x^\* \in X^\*} \rho(x) = 0$.
	\end{lemma}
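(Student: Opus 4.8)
The plan is to prove $\lim_{x \to x^\*} \rho(x) = 0$ where $\rho(x) = f_x(x) - \min_{x' \in \XX} f_x(x')$ and $x^\* \in X^\*$ is an equilibrium. First I would recall that membership in $X^\*$ means $x^\*$ solves the fixed-point problem, i.e. $x^\* = \argmin_{x' \in \XX} f_{x^\*}(x') = T(x^\*)$, which is the characterization already established in the equivalence chain above. Consequently $f_{x^\*}(x^\*) = \min_{x' \in \XX} f_{x^\*}(x')$, so $\rho(x^\*) = 0$. The entire content of the lemma is therefore a \emph{continuity} statement: $\rho$ is continuous at $x^\*$, so that $\rho(x) \to \rho(x^\*) = 0$ as $x \to x^\*$.

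To establish continuity of $\rho$, I would decompose it as $\rho(x) = f_x(x) - V(x)$ where $V(x) \coloneqq \min_{x' \in \XX} f_x(x')$, and argue each term is continuous in $x$. The first term $x \mapsto f_x(x)$ is continuous: $\nabla f_x(x')$ is continuous in $x$ by \cref{def:regular problems}, and $f_{x}(\cdot)$ is continuously differentiable, which together (as noted in the Preliminaries) make $\nabla f$ continuous and $f_x(x)$ itself a continuous function of $x$. The harder term is the value function $V(x) = \min_{x' \in \XX} f_x(x')$. The key step is to invoke Berge's maximum theorem (or a direct Danskin-type argument): since $\XX$ is compact and convex and $(x, x') \mapsto f_x(x')$ is jointly continuous on $\XX \times \XX$, the optimal value $V(x)$ is continuous and the argmin correspondence $T$ is upper hemicontinuous with nonempty compact values. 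Continuity of $V$ then gives continuity of $\rho = f_x(x) - V(x)$, completing the argument.

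The main obstacle I anticipate is verifying the hypotheses of the maximum theorem cleanly, specifically establishing \emph{joint} continuity of $(x, x') \mapsto f_x(x')$ rather than just separate continuity in each argument. The definition only asserts that $\nabla f_x(x')$ is continuous in $x$ for each fixed $x'$ and that $f_x(\cdot)$ is differentiable; I would need to argue that these, combined with the uniform bound $\norm{\nabla f_x(x)}_* \leq G$ and compactness of $\XX$, upgrade to joint continuity of $f$. One clean route is to write $f_x(x') = f_x(x) + \int_0^1 \lr{\nabla f_x(x + t(x'-x))}{x'-x}\, dt$ and show the integrand is jointly continuous and uniformly bounded, so dominated convergence yields joint continuity of $f_x(x')$; continuity of $f_x(x)$ is handled as above. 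With joint continuity in hand, compactness of the constraint set $\XX$ makes the maximum theorem directly applicable, and the remainder is routine.

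Alternatively, if one prefers to avoid the full machinery of the maximum theorem, I would give a direct $\epsilon$-$\delta$ estimate: bound $\abs{V(x) - V(x^\*)}$ by $\sup_{x' \in \XX} \abs{f_x(x') - f_{x^\*}(x')}$, and control this supremum uniformly over the compact set $\XX$ using the integral representation above together with the continuity of $\nabla f_{\cdot}(x')$ and boundedness. This sidesteps any need for hemicontinuity of the argmin and keeps the proof self-contained within the regularity assumptions already imposed on $f$.
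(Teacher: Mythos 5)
There is a genuine gap, and it sits close to where you flagged trouble---but the real obstacle is worse than the one you identified. Your route rests on splitting $\rho(x) = f_x(x) - V(x)$ and proving each piece continuous in $x$. Under the paper's assumptions, \emph{neither piece need be continuous}. \cref{def:regular problems} and the preliminaries impose continuity in the query argument only on the \emph{gradients} $\nabla f_{\cdot}(x')$; the paper explicitly exempts the values $f_x(x')$, viewed as a function of the query argument $x$, from any regularity assumption, and no amount of gradient information can recover it. Concretely, take $f_x(x') = g(x') + c(x)$ with $g$ smooth and convex and $c$ an arbitrary discontinuous function: every COL hypothesis holds (the gradients never see $c$), yet $x \mapsto f_x(x)$ and $V(x) = \min_{x'\in\XX} f_x(x')$ are both discontinuous. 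So your claim that continuity of $\nabla f$ makes $f_x(x)$ continuous in $x$ is false, and the same defect kills the fallback estimate in your last paragraph: in the counterexample, $\sup_{x'\in\XX} \abs{f_x(x') - f_{x^\*}(x')} = \abs{c(x) - c(x^\*)}$, which does not vanish as $x \to x^\*$. (Your other worry---that separate continuity of $\nabla f_x(x')$ in each argument does not give the joint continuity Berge's theorem needs---is legitimate too, and your integral/dominated-convergence fix does not resolve it, since joint continuity of the integrand is exactly the same question; but this issue is secondary to the one above.)

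The lemma nevertheless holds because $\rho$ only ever compares values of $f_x(\cdot)$ at the \emph{same} query argument: $\rho(x) = f_x(x) - f_x(\bar{x})$ with $\bar{x} \in T(x)$, so the offending $x$-dependent constant cancels. The paper's proof exploits this immediately by passing to gradients before anything is separated: convexity gives $\rho(x) = f_x(x) - f_x(\bar{x}) \leq \lr{\nabla f_x(x)}{x - \bar{x}}$, and then two Cauchy--Schwarz steps together with the fact that $x^\*$ solves $\VI(\XX, \nabla f)$ (so $\lr{\nabla f_{x^\*}(x^\*)}{x^\* - \bar{x}} \leq 0$) yield $\rho(x) \leq \norm{\nabla f_{x^\*}(x^\*)}_* \norm{x - x^\*} + \norm{\nabla f_{x^\*}(x^\*) - \nabla f_x(x)}_* \norm{x - \bar{x}}$. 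Since $\norm{x - \bar{x}} \leq D_\XX$ and the diagonal map $\nabla f : x \mapsto \nabla f_x(x)$ is continuous (this diagonal continuity is what the preliminaries actually provide, and it is all that is needed), the bound vanishes as $x \to x^\*$, and $\rho \geq 0$ finishes the argument. The structural lesson: never decompose $\rho$ into $f_x(x)$ and $\min_{x'\in\XX} f_x(x')$ separately; bound their difference through $\nabla f$ directly.
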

	\begin{proof}
		Let $\bar{x} \in T(x)$.
		Using convexity, we can derive that
			\begin{align*}
			&\rho(x)  = f_{x}(x) - f_{x}(\bar{x}) \leq \lr{\nabla f_x(x)}{x-\bar{x}} \\
			&\leq \lr{\nabla f_{x^\*}(x^\*)}{x-\bar{x}} + \norm{\nabla f_{x^\*}(x^\*) - \nabla f_x(x)}_*\norm{x-\bar{x}} \\
			&\leq \lr{\nabla f_{x^\*}(x^\*)}{x^\*-\bar{x}} + \norm{\nabla f_{x^\*}(x^\*)}_*\norm{x-x^\*} + \norm{\nabla f_{x^\*}(x^\*) - \nabla f_x(x)}_*\norm{x-\bar{x}}\\
			&\leq \norm{\nabla f_{x^\*}(x^\*)}_*\norm{x-x^\*}+ \norm{\nabla f_{x^\*}(x^\*) - \nabla f_x(x)}_*\norm{x-\bar{x}}
			\end{align*}
		where the second and the third inequalities are due to  Cauchy-Schwarz inequality, and the last inequality is due to that $x^\*$ solves $\VI(\XX,  \nabla f)$. By continuity of $\nabla f$, the above upper bound vanishes as $x \to x^\*$.
	\end{proof}
	For short hand, let us define $\rho_n = \rho(x_n)$; we can then write $\regret_N^d = \sum_{n=1}^N \rho_n$. By~\cref{lm:continuity lemma}, $\lim_{n \to \infty} x= x^\*$ implies that $\lim_{n\to \infty} \rho_n = 0$.
	Finally, we show by contradiction that $\lim_{n\to \infty} \rho_n = 0$ implies $\regret_N^d = o(N)$. Suppose the dynamic regret is linear. Then $c > 0$ exists such that there is a subsequence $\{\rho_{n_i}\}$ satisfying $\rho_{n_i} \geq c $ for all $n_i$. However, this contradicts with $\lim_{n\to \infty} \rho_n = 0$.

	\item
		We can also prove the opposite direction. Suppose an algorithm generates a sequence $\{x_n \in \XX \}$ with sublinear dynamic regret. This implies that $\hat{\rho}_N \coloneqq \min_n \rho_n \leq \frac{1}{N} \sum_{n=1}^{N} \rho_n$ is in $o(N)$ and  non-increasing. Thus, $\lim_{N\to\infty}\hat{\rho}_N = 0$ and the algorithm solves the VI/EP/FP problem because $\rho$ is a residual. Alternatively, we may view $\hat \rho$ as a Lyapunov-like function. The sequence of minimizers $\hat x_N = \argmin_{x_n} \rho(x_n)$ are confined to the level sets of $\rho$, which converge to the zero-level set. Since $\XX$ is compact, $\hat x_N$ converges to this set.
	\end{itemize}


Finally, we show the PPAD-completeness by proving that achieving sublinear dynamic regret with polynomial dependency on $d$ implies solving a Brouwer's problem (finding a fixed point of a continuous point-to-point map on a convex compact set). Because Brouwer's problem is known to be PPAD-complete~\cite{daskalakis2009complexity}, we can use this algorithm to solve all PPAD problems.

Given a Brouwer's problem on $\XX$  with some continuous map $\hat{T}$. We can define the bifunction $f$ as $f_{x'}(x) = \frac{1}{2}\norm{x - \hat{T}(x')}_2^2$, where $\norm{\cdot}_2$ is Euclidean. Obviously, this $f$ satisfies~\cref{def:regular problems}, and its gap function is zero at $x^\*$ if and only $x^\*$ is a solution to the Brouwer's problem.  Suppose we have an algorithm that achieves sublinear dynamic regret for \regular online learning.  We can use the definition $\hat{\rho}_N$ in the proof above to return a solution whose gap function is less than $\frac{1}{2}\epsilon^2$, which implies an $\epsilon$-approximate solution to Brouwer's problem (i.e. $\norm{x - \hat{T}(x)} \leq \epsilon$). If the dynamic regret depends polynomially on $d$, we have such an $N$ in $poly(d)$, which implies solving any Brouwer's problem in polynomial time.

\subsubsection{Proof of \cref{pr:other conventional residuals}}
	For the VI problem, let $x_\epsilon^* = T(x_\epsilon)$ and notice that
	\begin{align} \label{eq:a simple fact of epsilon instant regret}
	\frac{\alpha}{2}\norm{x_\epsilon-x_\epsilon^*}^2\leq f_{x_\epsilon}(x_\epsilon) - f_{x_\epsilon}(x_\epsilon^*) \leq \epsilon
	\end{align}
	for some $\alpha>0$.
	Therefore, for any $x \in \XX$,
		\begin{align*}
		\lr{\nabla f_{x_\epsilon}(x_\epsilon)}{x -x_\epsilon} & \geq \lr{\nabla f_{x_\epsilon}( x_\epsilon^*)}{x -x_\epsilon} -\norm{\nabla f_{x_\epsilon}( x_\epsilon^*)-\nabla f_{x_\epsilon}(x_\epsilon)}_*\norm{x -x_\epsilon} \\
		&\geq \lr{\nabla f_{x_\epsilon}( x_\epsilon^*)}{x - x_\epsilon^*} - \norm{\nabla f_{x_\epsilon}( x_\epsilon^*)}_*\norm{x_\epsilon - x_\epsilon^*}
		 -\norm{\nabla f_{x_\epsilon}( x_\epsilon^*)-\nabla f_{x_\epsilon}(x_\epsilon)}_*\norm{x -x_\epsilon} \\
		&\geq - \norm{\nabla f_{x_\epsilon}( x_\epsilon^*)}_*\norm{x_\epsilon - x_\epsilon^*} -\norm{\nabla f_{x_\epsilon}( x_\epsilon^*)-\nabla f_{x_\epsilon}(x_\epsilon)}_*\norm{x -x_\epsilon}
		\end{align*}
	Since $\norm{x_\epsilon-x_\epsilon^*}^2 \leq \frac{2\epsilon}{\alpha}$, by continuity of $\nabla f_{x_\epsilon}$, it satisfies that $\lim_{\epsilon \to 0} \lr{\nabla f_{x_\epsilon}(x_\epsilon)}{x -x_\epsilon} \geq 0$, $\forall x \in \XX$.

	For the fixed-point problem, similarly by \eqref{eq:a simple fact of epsilon instant regret}, we see that $\lim_{\epsilon \to 0} \norm{x_\epsilon - T(x_\epsilon)}  = 0$

\subsection{Proofs of \cref{pr:dual solutions of EP and VI} }

\begin{proof}[Proof of \cref{pr:dual solutions of EP and VI}]
	Let $x_\* \in X_{\*\*}$. It holds that $\forall x \in \XX$,
	$
	0 \geq \Phi(x,x_\*) = f_{x}(x_\*) - f_x(x) \geq \lr{\nabla f_x(x)}{x_\* -x}
	$,
	which implies $x_\* \in X_{\*}$. The condition for the converse case is obvious.
\end{proof}

\subsection{Proof of \cref{pr:beta-alpha strongly monotone}}
	Because $\nabla f_x$ is $\alpha$-strongly monotone, we can derive
	\begin{align*}
	\lr{\nabla f_x(x) - \nabla f_y(y)}{x-y} &= \lr{\nabla f_x(x) - \nabla f_x(y)}{x-y} + \lr{\nabla f_x(y) - \nabla f_y(y)}{x-y}\\
	&\geq \alpha\norm{x-y}^2 - \norm{\nabla f_x(y) - \nabla f_y(y)}_*\norm{x-y}\\
	&\geq (\alpha-\beta) \norm{x-y}^2
	\end{align*}
	$\forall x,y\in \XX$,
	where the last step is due to $\beta$-\rregularity.

\subsection{Proof of \cref{pr:contraction condition}}
	The result follows immediately from the following lemma.

	\begin{lemma} \label{lm:Lipschitz continuity of argmin map}
		Suppose $f$ is $(\alpha,\beta)$-\rregular with $\alpha > 0$. Then $F$ in~\cref{th:equivalent problems} is point-valued and $\frac{\beta}{\alpha}$-Lipschitz.
	\end{lemma}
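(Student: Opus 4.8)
The plan is to prove the stated bound for the argmin map $T(x) \coloneqq \argmin_{x'\in\XX} f_x(x')$ appearing in \cref{th:equivalent problems}; \cref{pr:contraction condition} then follows immediately by specializing to $\alpha > \beta$ (contractive) and $\alpha = \beta$ (non-expansive). First I would settle that $T$ is single-valued: because $\alpha > 0$, each $f_x(\cdot)$ is $\alpha$-strongly convex, hence strictly convex, and $\XX$ is compact and convex, so $f_x(\cdot)$ attains a unique minimizer over $\XX$. Thus $T$ is a genuine point-to-point map.

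For the Lipschitz estimate, I would fix $x_1, x_2 \in \XX$ and set $u_i \coloneqq T(x_i)$. The main tool is the first-order optimality characterization of a constrained minimizer of a convex function: since $u_i$ minimizes the convex $f_{x_i}(\cdot)$ over the convex set $\XX$, we have $\lr{\nabla f_{x_i}(u_i)}{x' - u_i} \geq 0$ for all $x' \in \XX$. Evaluating this at $i=1$ with $x'=u_2$ and at $i=2$ with $x'=u_1$ and adding the two inequalities gives $\lr{\nabla f_{x_2}(u_2) - \nabla f_{x_1}(u_1)}{u_2 - u_1} \leq 0$.

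The crux is to split this single inequality so that each structural assumption in \cref{def:alpha-beta regular problems} acts on the argument it governs. Inserting the intermediate gradient $\nabla f_{x_2}(u_1)$, I would rearrange into $\lr{\nabla f_{x_2}(u_2) - \nabla f_{x_2}(u_1)}{u_2 - u_1} \leq \lr{\nabla f_{x_1}(u_1) - \nabla f_{x_2}(u_1)}{u_2 - u_1}$. The left-hand side varies only in the decision argument, so $\alpha$-strong convexity of $f_{x_2}(\cdot)$ lower-bounds it by $\alpha \norm{u_2 - u_1}^2$; the right-hand side varies only in the query argument, so $\beta$-\rregularity of $\nabla f_{\cdot}(u_1)$ combined with Cauchy--Schwarz upper-bounds it by $\beta \norm{x_1 - x_2} \norm{u_2 - u_1}$. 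Chaining these yields $\alpha \norm{u_2 - u_1}^2 \leq \beta \norm{x_1 - x_2} \norm{u_2 - u_1}$, and dividing by $\norm{u_2 - u_1}$ (the case $u_1 = u_2$ being trivial) gives $\norm{T(x_1) - T(x_2)} \leq \frac{\beta}{\alpha}\norm{x_1 - x_2}$.

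I expect the main obstacle to be conceptual rather than computational: one must correctly separate the two roles of the bifunction's arguments, invoking $\alpha$-strong convexity (a property of $f_x(\cdot)$ in the \emph{decision} argument) on the left term and $\beta$-\rregularity (a Lipschitz property of $\nabla f_{\cdot}(x')$ in the \emph{query} argument) on the right term. The only edge case is the degenerate $u_1 = u_2$, where the bound holds vacuously; and I would double-check the direction of each inequality after summing the two optimality conditions and after the rearrangement, since a sign slip there would invert the final estimate.
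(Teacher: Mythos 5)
Your proof is correct and follows essentially the same route as the paper's: both arguments combine the first-order optimality conditions of the two minimizers, the $\alpha$-strong monotonicity of $\nabla f_x(\cdot)$ in the decision argument, and the $\beta$-Lipschitz continuity of $\nabla f_\cdot(x')$ in the query argument via Cauchy--Schwarz, to arrive at $\alpha\norm{u_2-u_1}^2 \leq \beta\norm{x_1-x_2}\norm{u_2-u_1}$. The only cosmetic difference is bookkeeping: the paper applies strong monotonicity first and then invokes the two optimality conditions one at a time (pivoting on the mixed gradient $\nabla f_{x_1}(u_2)$ rather than your $\nabla f_{x_2}(u_1)$), whereas you sum the optimality conditions first and then split; these are symmetric, equivalent arrangements of the same estimate.
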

	\begin{proof}
	Let $x^* = F(x)$ and $y^* = F(y)$ for some $x,y \in \XX$.
	By strong convexity, $x^*$ and $y^*$ are unique, and
	$\nabla f_x(\cdot)$ is $\alpha$-strongly monotone; therefore it holds that
	\begin{align*}
	\lr{\nabla f_x(y^*)}{y^* - x^*} &\geq \lr{\nabla f_x(x^*)}{y^* - x^*} +  \alpha\norm{x^* - y^*}^2\\
	& \geq \alpha\norm{x^* - y^*}^2
	\end{align*}
	Since $y^*$ satisfies $\lr{\nabla f_y(y^*)}{x^* - y^*} \geq 0$, the above inequality implies that
	\begin{align*}
	\alpha\norm{x^* - y^*}^2 &\leq \lr{\nabla f_x(y^*)}{y^* - x^*} \\
	&\leq \lr{\nabla f_x(y^*) - \nabla f_y(y^*)}{y^* - x^*} \\
	&\leq \norm{\nabla f_x(y^*) - \nabla f_y(y^*)}_*\norm{y^* - x^*}\\
	&\leq \beta \norm{x-y}\norm{y^* - x^*}
	\end{align*}
	Rearranging the inequality gives the statement.
	\end{proof}

\section{Dual Solution and Strongly Convex Sets} \label{app:strongly convex set}

We show when the strong convexity property of $\XX$ implies the existence of dual solution for VIs.
We first recall the definition of strongly convex sets.
\begin{definition}
	Let $\alpha_\XX \geq 0$.
	A set $\XX$ is called \emph{$\alpha_\XX$-strongly convex} if, for any $x,x' \in \XX$ and  $\lambda \in [0,1]$, it holds that, for all unit vector $v$,
	$
	\lambda x + (1-\lambda) x' +  \frac{\alpha_\XX\lambda (1-\lambda)}{2} \norm{x-x'}^2 v \in \XX
	$.
\end{definition}
When $\alpha_\XX =0$, the definition reduces to usual convexity. Also, we see that this definition implies
$\alpha_\XX \leq \frac{4}{D_\XX}$. In other words, larger sets are less strongly convex. This can also be seen from the lemma below.
\begin{lemma} {\normalfont\citep[Theorem 12]{journee2010generalized}}
	Let $f$ be non-negative, $\alpha$-strongly convex, and $\beta$-smooth on a Euclidean space. Then the set $\{ x | f(x ) \leq r\}$ is $\frac{\alpha}{\sqrt{2r \beta}}$-strongly convex.
\end{lemma}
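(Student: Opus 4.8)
The plan is to verify the definition of $\alpha_\XX$-strong convexity directly on the sublevel set $S_r := \{x : f(x) \leq r\}$ with the claimed constant $\alpha_\XX = \frac{\alpha}{\sqrt{2r\beta}}$. So I would fix $x,x' \in S_r$, $\lambda \in [0,1]$, and an \emph{arbitrary} unit vector $v$, write the chord point $x_\lambda := \lambda x + (1-\lambda)x'$ and the bulged point $z := x_\lambda + s\,v$ with $s := \frac{\alpha_\XX}{2}\lambda(1-\lambda)\|x-x'\|^2$, and aim to show $f(z) \leq r$, which is exactly membership $z \in S_r$.

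First I would extract a margin at the chord point from $\alpha$-strong convexity: $f(x_\lambda) \leq \lambda f(x) + (1-\lambda) f(x') - \frac{\alpha}{2}\lambda(1-\lambda)\|x-x'\|^2 \leq r - \frac{\alpha}{2}\lambda(1-\lambda)\|x-x'\|^2 =: g$. Next I would control the increase of $f$ along the perturbation $s\,v$ using $\beta$-smoothness, $f(z) \leq f(x_\lambda) + \langle \nabla f(x_\lambda), s\,v\rangle + \frac{\beta}{2}s^2 \leq f(x_\lambda) + \|\nabla f(x_\lambda)\|\, s + \frac{\beta}{2}s^2$, where the worst case over unit $v$ aligns $v$ with $\nabla f(x_\lambda)$, so it suffices to handle this aligned case.

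The key idea is the self-bounding property of non-negative smooth functions: since $f \geq 0$ and $f$ is $\beta$-smooth, minimizing the smoothness upper bound along $-\nabla f$ and invoking $f \geq 0$ gives $\|\nabla f(y)\|^2 \leq 2\beta f(y)$ for all $y$. Applying this at $x_\lambda$ and substituting collapses the smoothness estimate into a perfect square, $f(z) \leq f(x_\lambda) + \sqrt{2\beta f(x_\lambda)}\,s + \frac{\beta}{2}s^2 = \big(\sqrt{f(x_\lambda)} + \sqrt{\beta/2}\,s\big)^2$. Since $u \mapsto (\sqrt{u}+c)^2$ is increasing and $f(x_\lambda) \leq g$, it then suffices to prove $\sqrt{g} + \sqrt{\beta/2}\,s \leq \sqrt{r}$.

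The last step is a concavity estimate for $\sqrt{\cdot}$: $\sqrt{r} - \sqrt{g} = \frac{r-g}{\sqrt{r}+\sqrt{g}} \geq \frac{r-g}{2\sqrt{r}} = \frac{\alpha\lambda(1-\lambda)\|x-x'\|^2}{4\sqrt{r}}$, while $\sqrt{\beta/2}\,s = \sqrt{\beta/2}\cdot\frac{\alpha_\XX}{2}\lambda(1-\lambda)\|x-x'\|^2$. Cancelling the common factor $\lambda(1-\lambda)\|x-x'\|^2$ reduces the goal to $\sqrt{\beta/2}\cdot\frac{\alpha_\XX}{2} \leq \frac{\alpha}{4\sqrt{r}}$, which rearranges to precisely $\alpha_\XX \leq \frac{\alpha}{\sqrt{2r\beta}}$, closing the argument (the case $\|x-x'\|=0$ being trivial). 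The main obstacle is recognizing that the self-bounding bound must be combined with the strong-convexity \emph{margin} so that the smoothness term collapses to a perfect square; once that structure is spotted, the remaining square-root concavity estimate lands exactly on the claimed constant, and the worst-case choice of $v$ is handled for free.
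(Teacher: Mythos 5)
Your proof is correct. Note that the paper itself gives no proof of this lemma --- it is quoted verbatim from \citet[Theorem 12]{journee2010generalized} --- and your argument is essentially the one in that reference: the strong-convexity margin at the chord point, the self-bounding inequality $\norm{\nabla f(y)}^2 \leq 2\beta f(y)$ for non-negative $\beta$-smooth functions, the collapse of the smoothness estimate into the perfect square $\bigl(\sqrt{f(x_\lambda)} + \sqrt{\beta/2}\,s\bigr)^2$, and the square-root concavity step, which lands exactly on the constant $\alpha/\sqrt{2r\beta}$. The edge cases are also handled soundly (non-negativity of the margin $g$ follows from $g \geq f(x_\lambda) \geq 0$, and $\norm{x-x'}=0$ is trivial), so nothing is missing.
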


Here we present the existence result.

\begin{proposition} \label{pr:VI+strongly convex set}
	Let $x^\* \in X^\*$. If $\XX$ is $\alpha_\XX$-strongly convex
	$\forall x \in \XX$, it holds that
	$
	\lr{F(x^*)}{x-x^*} \geq  \frac{\alpha_\XX}{2} \norm{x-x^*}^2 \norm{F(x^*)}_*
	$. 
	If further $F$ is $L$-Lipschitz, this implies
	$
	\lr{F(x)}{x-x^*} \geq  (\frac{\alpha_\XX}{2}\norm{F(x^*)}_* - L) \norm{x-x^*}^2
	$, 
	i.e. when $\alpha_\XX \geq \frac{2L}{\norm{F(x^*)}_*}$, $x^\* \in X_{\*}$.
\end{proposition}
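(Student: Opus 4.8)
The plan is to leverage the first-order optimality of $x^\star$ as a solution of $\VI(\XX,F)$---namely $\lr{F(x^\star)}{z-x^\star}\geq 0$ for every $z\in\XX$---by feeding it feasible points manufactured from the strong convexity of $\XX$. Fix an arbitrary $x\in\XX$ and $\lambda\in(0,1]$. Applying the definition of $\alpha_\XX$-strong convexity to the pair $(x,x^\star)$, the perturbed point
\[
z_\lambda = \lambda x + (1-\lambda) x^\star + \frac{\alpha_\XX\lambda(1-\lambda)}{2}\norm{x-x^\star}^2 v
\]
lies in $\XX$ for \emph{every} unit vector $v$. This is the sole place where strong convexity is used, and the freedom to pick $v$ is precisely what upgrades the bare inequality $\lr{F(x^\star)}{x-x^\star}\geq 0$ into a quantitative margin.

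Substituting $z_\lambda$ into the VI condition and expanding $z_\lambda - x^\star = \lambda(x-x^\star) + \frac{\alpha_\XX\lambda(1-\lambda)}{2}\norm{x-x^\star}^2 v$ gives
\[
\lambda\lr{F(x^\star)}{x-x^\star} + \frac{\alpha_\XX\lambda(1-\lambda)}{2}\norm{x-x^\star}^2 \lr{F(x^\star)}{v} \geq 0.
\]
The crux is to choose $v$ as the unit vector minimizing $\lr{F(x^\star)}{v}$; by the definition of the dual norm this attains $\lr{F(x^\star)}{v} = -\norm{F(x^\star)}_*$, with the infimum realized because the unit sphere in $\R^d$ is compact. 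Dividing by $\lambda>0$ and then letting $\lambda\to 0^+$ discards the $(1-\lambda)$ factor and produces the first claim, $\lr{F(x^\star)}{x-x^\star}\geq \frac{\alpha_\XX}{2}\norm{x-x^\star}^2\norm{F(x^\star)}_*$.

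For the second assertion I would decompose $\lr{F(x)}{x-x^\star} = \lr{F(x^\star)}{x-x^\star} + \lr{F(x)-F(x^\star)}{x-x^\star}$. The first term is bounded below by the claim just established; for the second, Cauchy--Schwarz together with $L$-Lipschitzness of $F$ gives $\lr{F(x)-F(x^\star)}{x-x^\star}\geq -\norm{F(x)-F(x^\star)}_*\norm{x-x^\star}\geq -L\norm{x-x^\star}^2$. Summing the two bounds yields $\lr{F(x)}{x-x^\star}\geq\big(\frac{\alpha_\XX}{2}\norm{F(x^\star)}_* - L\big)\norm{x-x^\star}^2$. When $\alpha_\XX\geq \frac{2L}{\norm{F(x^\star)}_*}$ the leading coefficient is nonnegative, so $\lr{F(x)}{x-x^\star}\geq 0$ for all $x\in\XX$, which is exactly the defining condition for $x^\star$ to solve $\DVI(\XX,F)$, i.e. $x^\star\in X_\star$.

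The algebra is routine; the one genuine idea is recognizing that strong convexity of the feasible set lets one inject an arbitrary outward perturbation $v$ into the VI inequality and then optimize over $v$ through the dual norm. The main point to verify carefully is that the worst-case $v$ is genuinely attainable as a unit vector in the (possibly non-Euclidean) norm $\norm{\cdot}$, which holds here by finite dimensionality, and that the $\lambda\to 0$ limit is legitimate since the intermediate inequality holds uniformly for all $\lambda\in(0,1]$.
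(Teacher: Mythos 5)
Your proof is correct and follows essentially the same route as the paper: both inject the strong-convexity perturbation $\frac{\alpha_\XX\lambda(1-\lambda)}{2}\norm{x-x^\star}^2 v$ into the VI inequality at $x^\star$, optimize over the unit vector $v$ via the dual norm, and send $\lambda\to 0$ (the paper phrases this as ``setting $\lambda=0$'' after dividing by $\lambda$, which your limit argument handles more carefully). You also spell out the Lipschitz decomposition $\lr{F(x)}{x-x^\star}=\lr{F(x^\star)}{x-x^\star}+\lr{F(x)-F(x^\star)}{x-x^\star}$ for the second claim, which the paper leaves implicit; this is a faithful completion rather than a different method.
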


\begin{proof}[Proof of \cref{pr:VI+strongly convex set}]
	Let $g = F(x^\*)$.
	Let $y = \lambda x + (1-\lambda) x^\*$ and $d = -\lambda (1-\lambda) \frac{\alpha_\XX}{2} \norm{x-y}^2 v $, for some $\lambda \in [0,1]$ and some unit vector $v$ to be decided later. By $\alpha_\XX$-strongly convexity of $\XX$, we have $y+d \in \XX$. We can derive
	\begin{align*}
	\lr{g}{x-x^\*} &= \lr{g}{x-y-d} + \lr{g}{y+d -x^\*} \\
	&\geq \lr{g}{x-y} - \lr{g}{d}\\ 
	&= (1-\lambda)\lr{g}{x-x^\*} - \lr{g}{d}
	\end{align*}
	which implies
	$
	\lr{g}{x-x^\*} \geq \frac{-\lr{g}{d}}{\lambda} = (1-\lambda) \frac{\alpha_\XX}{2} \norm{x-x^\*}^2 \lr{g}{v}
	$.
	Since we are free to choose $\lambda$ and $v$, we can set $\lambda =0$ and $v = \argmax_{v:\norm{v}\leq 1} \lr{g}{v}$, which yields the inequality in the statement.
\end{proof}

\section{Complete Proofs of \cref{sec:monotone ep as col}} \label{app:reduction from EP to COL}

In this section, we describe a general strategy to reduce monotone equilibrium problems (EPs) to continuous online learning (COL) problems.
This reduction can be viewed as refinement and generalization of the classic reduction from convex optimization to adversarial online learning and that from saddle-point problem to two-player adversarial online learning.
In comparison, our reduction
\begin{enumerate}
\item results in a single-player online learning problem, which allows for unified algorithm design
\item   considers potential continuous relationship of the losses between different rounds through the setup of COL, which leads to a predictable online problem amenable to acceleration techniques, such as \citep{rakhlin2013online,juditsky2011solving,cheng2018predictor}.
\item and extends the concept to general convex problems, namely, monotone EPs, which includes of course convex optimization and convex-concave saddle-point problems but also fixed-point problems (FPs), variational inequalities (VIs), etc.
\end{enumerate}

The results here are summarized as \cref{th:monotone EP as COL} and \cref{th:from duel residual to primal residual}.

Here we further suppose $\Phi(x,x)=0$ in the definition of EP. This is not a strong condition. First all the common source problems in introduced below in \cref{sec:ep examples} satisfy this condition. Generally, suppose we have some EP problem with $\Phi'(x,x)>0$ for some $x$. We can define $\Phi(x,x) = \Phi'(x,x') - \Phi'(x,x) $. Then the solution of $\EP(\XX,\Phi)$ are subset of the solution $\EP(\XX,\Phi')$. In other words, allowing $\Phi(x,x)>0$ only makes problem easier. We note that the below reduction and discussion can easily be extended to work directly with EPs with $\Phi(x,x)>0$ by defining instead $f_x(x') = \Phi(x,x') - \Phi(x,x)$, but this will make the presentation less clean.

\subsection{Background: Equilibrium Problems (EPs)}

Let $\XX$ be a compact and convex set in a finite dimensional space.
Let $F: x\times x' \mapsto \Phi(x,x')$ be a bifunction\footnote{We impose convexity and continuity to simplify the setup; similar results hold for subdifferentials and Lipschitz continuity defined based on hemi-continuity.} that is continuous in the first argument, convex in the second argument, and satisfies $\Phi(x,x)=0$.\footnote{As discussed, we concern only EP with $\Phi(x,x)=0$ here} The problem $\EP(\XX,F)$ aims to find $x^\* \in \XX$ such that
\begin{align*}
\Phi(x^\*,x) \geq 0, \qquad  \forall x \in \XX
\end{align*}
Its dual problem $\DEP(\XX,F)$ finds $x_{\*\*} \in \XX$ such that
\begin{align*}
\Phi(x, x_{\*\*}) \leq 0, \qquad  \forall x \in \XX
\end{align*}

Based on the problem's definition, a natural residual (or gap function) of $\EP(\XX,F)$ is 
\begin{align*}
r_{ep}(x) \coloneqq - \min_{x'\in\XX}\Phi(x,x')
\end{align*}
which says the degree that the inequality in the EP definition is violated.
A residual for $\DEP(\XX,F)$ can be defined similarly as 
\begin{align*}
r_{dep}(x') \coloneqq \max_{x\in\XX} \Phi(x,x')
\end{align*}
Sometimes EPs are called {maxInf} (or {minSup}) problems~\citep{jofre2014variational}, because
\begin{align*}
x^\* \in \argmin_{x\in\XX} r_{ep}(x) = \arg\max_{x\in\XX} \min_{x'\in\XX}\Phi(x,x')
\end{align*}
In a special case, when $\Phi(\cdot, x)$ is concave. It reduces to a saddle-point problem.

We say a bifunction $F$ is \emph{monotone} if it satisfies
\begin{align*}
\Phi(x,x') + \Phi(x',x)  \leq 0,
\end{align*}
and we say $F$ is skew-symmetric if
\begin{align*}
\Phi(x,x') = - \Phi(x,x'),
\end{align*}
which implies $F$ is monotone. Finally, we say the problem $\EP(\XX,F)$ is monotone, if its bifunction $F$ is monotone.

\subsubsection{Examples} \label{sec:ep examples}

We review some source problems of EPs. Please refer to e.g. ~\citep{jofre2014variational,konnov2000duality} for a more complete survey.

\paragraph{Convex Optimization}
Consider $\min_{x\in\XX} h(x)$ where $h$ is convex. We can simply define
\begin{align*}
\Phi(x,x') = h(x') - h(x)
\end{align*}
which is a skew-symmetric (and therefore monotone) bifunction.

We can also define (following the VI given by its optimality condition)
\begin{align*}
\Phi(x,x') = \lr{\nabla h(x)}{x'-x} .
\end{align*}
We can easily verify that this construction is also monotone
\begin{align*}
\Phi(x,x') + \Phi(x',x) = \lr{\nabla h(x)}{x'-x}  + \lr{\nabla h(x')}{x-x'} = \lr{\nabla h(x)-\nabla h(x')}{x'-x}  \leq 0.
\end{align*}

Suppose $h$ is $\mu$-strongly convex. We can also consider
\begin{align*}
\Phi(x,x') = \lr{\nabla h(x)}{x'-x} + \frac{\mu'}{2}\norm{x'-x}^2
\end{align*}
where $\mu' \leq \mu$. Such $F$ is still monotone:
\begin{align*}
\Phi(x,x') + \Phi(x',x) = \lr{\nabla h(x)-\nabla h(x')}{x'-x}  + \mu'\norm{x'-x}^2  \leq 0.
\end{align*}

\paragraph{Saddle-Point Problem}

Let $\UU$ and $\VV$ to convex and compact sets in a finite dimensional space.
Consider a convex-concave saddle point problem
\begin{align} \label{eq:saddle-point problem}
\min_{u\in\UU} \max_{v\in\VV} \phi(u, v)
\end{align}
in which $\phi$ is continuous, $\phi(\cdot, y)$ is convex, and $\phi(x, \cdot)$ is concave. It is well known that in this case
\begin{align*}
\min_{u\in\UU} \max_{v\in\VV} \phi(u, v) =  \max_{v\in\VV} \min_{u\in\UU} \phi(u, v) \eqqcolon \phi^\*.
\end{align*}

We can define a EP by the bifunction
\begin{align} \label{eq:EP bifunction}
\Phi(x,x') \coloneqq - \phi(u, v') + \phi(u', v).
\end{align}
By definition we have the skew symmetry property, which implies monotonicity.


\paragraph{Variational Inequality}
A VI with a vector-valued map $F$ finds $x^\* \in \XX$ such that
\begin{align*}
\lr{F(x^\*)}{x-x^\*} \geq 0,\qquad \forall x \in \XX.
\end{align*}
To turn that into a EP, we can simply define
\begin{align*}
\Phi(x,x') = \lr{F(x)}{x'-x}.
\end{align*}

\paragraph{Mixed Variational Inequality (MVI)}
MVI considers problems that finds $x^\* \in \XX$ such that
\begin{align*}
h(x) - h(x^\*) + \lr{F(x^\*)}{x-x^\*}  \geq 0, \qquad \forall x \in \XX.
\end{align*}
Following the previous idea, we can define its equivalent EP through the bifunction
\begin{align*}
\Phi(x,x') = h(x') - h(x) + \lr{F(x)}{x'-x}
\end{align*}

\subsection{More insights into residuals of primal and dual EPs}

We derive further relationships between primal and dual EPs. These properties will be useful for understanding the reduction introduced in the next section.

\subsubsection{Monotonicity}

By the definition of monotonicity,
$
\Phi(x,x') + \Phi(x',x) \leq 0
$,
we can relate the primal and the dual residuals: for $\hat{x} \in \XX$,
\begin{align*}
r_{dep}(\hat{x}) = \max_{x\in\XX} \Phi(x, \hat{x}) \leq \max_{x \in \XX} - \Phi(\hat{x},x) =   r_{ep}(\hat{x})
\end{align*}
Let $X^\*$ and $X_{\*\*}$ be the solution sets of the EP and DEP, respectively. In other words, for monotone EPs, $X^\* \subseteq X_{\*\*}$.

\subsubsection{Continuity}

When  $\Phi(\cdot, x)$ is continuous, it can be shown that $X^\* \subseteq X_{\*\*}$ \citep{konnov2000duality} (this can be relaxed to hemi-continuity). 
Below we relate the primal and the dual residuals in this case.
It implies that the convergence rate of the primal residual is slower than the dual residual.

\begin{proposition}
	Suppose $\Phi(\cdot, x)$ is $L$-Lipschitz continuous for any $x \in \XX$ and $\max_{x,x'\in\XX} \norm{x-x'} \leq D$.
	If $r_{dep} (x) \leq 2 L D$, the $r_{ep}(x) \leq 2 \sqrt{2 LD} \sqrt{r_{dep}(x)}$.

	Suppose in addition $\Phi(x,\cdot)$ is $\mu$-strongly convex with $\mu > 0$.
	If $r_{dep} (x) \leq \frac{L^2}{\mu}$, we can remove the dependency on $D$ and show	$
		r_{ep}(x)  \leq 2.8  (\frac{L^2}{\mu})^{1/3}  r_{dep}(x)^{2/3}
$.
\end{proposition}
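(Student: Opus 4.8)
The plan is to bound the primal residual $r_{ep}$ by the dual residual $r_{dep}$ at a common point through a single interpolation argument, and then specialize the way one controls a displacement to obtain the two rates. Fix $z \in \XX$ and let $\bar x \in \argmin_{x'\in\XX}\Phi(z,x')$, so that $r_{ep}(z) = -\Phi(z,\bar x)$. For $\lambda \in (0,1]$ introduce the interpolate $x_\lambda \coloneqq (1-\lambda) z + \lambda \bar x$. The engine of the proof is the convexity of $\Phi(x_\lambda, \cdot)$ evaluated precisely at the convex combination $x_\lambda$: since $\Phi(x_\lambda, x_\lambda) = 0$ by assumption,
\begin{align*}
0 = \Phi(x_\lambda, x_\lambda) \leq (1-\lambda)\Phi(x_\lambda, z) + \lambda \Phi(x_\lambda, \bar x).
\end{align*}
This single inequality is what couples the two residuals, and casting it in this form is the conceptual heart of the argument.

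Next I would bound the two terms on the right. The first is controlled by the dual residual directly, $\Phi(x_\lambda, z) \leq \max_{x\in\XX}\Phi(x,z) = r_{dep}(z)$. For the second, I transfer from $x_\lambda$ back to $z$ in the \emph{first} argument using $L$-Lipschitzness of $\Phi(\cdot,\bar x)$, giving $\Phi(x_\lambda, \bar x) \leq \Phi(z, \bar x) + L\norm{x_\lambda - z} = -r_{ep}(z) + L\lambda\norm{\bar x - z}$. Substituting and rearranging yields the master estimate
\begin{align*}
r_{ep}(z) \leq \frac{1-\lambda}{\lambda}\, r_{dep}(z) + L\lambda \norm{\bar x - z}.
\end{align*}

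For the first claim I would simply bound $\norm{\bar x - z} \leq D$ and minimize the right-hand side over $\lambda$; balancing the two terms at $\lambda \asymp \sqrt{r_{dep}(z)/(LD)}$, which lies in $(0,1]$ exactly in the stated regime, gives the $O(\sqrt{LD\,r_{dep}(z)})$ rate. For the second claim the improvement comes from replacing the crude bound $\norm{\bar x - z}\le D$ by a self-referential one: since $\bar x$ minimizes the $\mu$-strongly convex map $\Phi(z,\cdot)$ over $\XX$, the first-order optimality condition gives $\Phi(z,z) \geq \Phi(z,\bar x) + \tfrac{\mu}{2}\norm{z-\bar x}^2$, i.e.\ $\norm{\bar x - z} \leq \sqrt{2\, r_{ep}(z)/\mu}$. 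Plugging this into the master estimate turns it into an implicit inequality $r_{ep}(z) \lesssim r_{dep}(z)/\lambda + L\lambda\sqrt{r_{ep}(z)/\mu}$; optimizing over $\lambda$ and solving the resulting scalar inequality for $r_{ep}(z)$ produces the $(L^2/\mu)^{1/3} r_{dep}(z)^{2/3}$ rate, with the threshold $r_{dep}(z)\le L^2/\mu$ again guaranteeing an admissible $\lambda$.

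The main obstacle I anticipate is conceptual rather than computational: finding the right auxiliary point and the right inequality so that $r_{ep}$ and $r_{dep}$ surface \emph{together}. The interpolation $x_\lambda$ combined with convexity of $\Phi(x_\lambda,\cdot)$ at $x_\lambda$ itself is exactly the device that achieves this, bridged by a Lipschitz transfer in the first argument; once the master estimate is in hand, both claims reduce to choosing how to bound $\norm{\bar x - z}$ and optimizing a single scalar. The remaining care is only in tracking constants to land the stated $2\sqrt{2}$ and $2.8$ factors, and the skew-symmetric equality $r_{ep}=r_{dep}$ asserted in \cref{th:from duel residual to primal residual} drops out immediately by substituting $\Phi(x,x')=-\Phi(x',x)$ into the two definitions.
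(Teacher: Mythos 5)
Your argument is sound and establishes both rates, and it runs on the same engine as the paper's proof --- form a convex interpolate, exploit convexity of the bifunction's second argument evaluated at the interpolate itself, transfer in the first argument via Lipschitz continuity, then optimize the interpolation parameter --- but the arrangement genuinely differs in two ways. First, the paper interpolates between the point of interest $x$ and an \emph{arbitrary} $y \in \XX$ (setting $z = \tau x + (1-\tau)y$), applies the dual residual to $\Phi(z,x)$, and then needs \emph{two} Lipschitz transfers (from $\Phi(z,y)$ to $\Phi(x,y)$ and from $\Phi(z,x)$ to $\Phi(x,x)$), arriving at $-\Phi(x,y) \leq \frac{\epsilon}{1-\tau} + 2(1-\tau)L\norm{x-y}$ with $\epsilon = r_{dep}(x)$. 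You interpolate toward the minimizer $\bar x$ and apply the dual residual to $\Phi(x_\lambda, z)$, so a single transfer suffices and your master estimate carries $L$ rather than $2L$; this is why your part-1 bound ($2\sqrt{LD\,r_{dep}}$, after the small repair of capping $\lambda$ at $1$ --- in the regime $LD < r_{dep} \leq 2LD$ the balancing choice exceeds $1$, contrary to your parenthetical) actually improves on the stated $2\sqrt{2LD}\sqrt{r_{dep}}$. Second, strong convexity enters differently: the paper strengthens the interpolation inequality itself by the term $\frac{\mu\tau(1-\tau)}{2}\norm{x-y}^2$ and then maximizes the resulting concave quadratic in $\norm{x-y}$ to eliminate the displacement, whereas you bound the displacement $\norm{\bar x - z} \leq \sqrt{2 r_{ep}(z)/\mu}$ through optimality of $\bar x$ and solve an implicit inequality in $r_{ep}(z)$. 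Both deliver the $(L^2/\mu)^{1/3} r_{dep}^{2/3}$ rate under the stated threshold.

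The one caveat is your closing claim that the constants will land at $2.8$: solving your implicit inequality as tightly as I can gives a factor of roughly $3.2$ (closer to $4$ with the simplest Young/AM--GM step), so your route does not reach $2.8$. You should not be troubled by this, because the paper's own accounting is loose at the same spot: with $K = 2L^2/\mu$ and $\epsilon \leq K/2$, its factor $1 + \bigl(1 - (\epsilon/K)^{1/3}\bigr)^{-1}$ can be as large as about $5.8$, not the claimed $2.2$. The discrepancy is a matter of constant bookkeeping on both sides, not a gap in your argument; the regimes, the $\sqrt{LD\,r_{dep}}$ and $(L^2/\mu)^{1/3}r_{dep}^{2/3}$ rates, and the skew-symmetric equality $r_{ep} = r_{dep}$ are all correctly established by your proof.
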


\begin{proof}
	Let $y \in \XX$ be arbitrary. Define $z = \tau x + (1-\tau) y$, where $\tau \in [0,1]$. Suppose $x$ is an $\epsilon$-approximate dual solution, i.e.,
	\begin{align*}
	r_{dep}(x) = \max_{x'\in\XX} \Phi(x',x) = \epsilon
	\end{align*}
	By convexity and $\Phi(z,z) = 0$, we can write
	\begin{align*}
	\epsilon \geq \Phi(z,x)
	&= \Phi(z,x) - \Phi(z,z) \\
	&\geq \Phi(z,x) - \tau \Phi(z,x) - (1-\tau) \Phi(z,y)
	= (1-\tau) ( \Phi(z,x)- \Phi(z,y))
	\end{align*}
	Using this, we can then show
	\begin{align*}
	-\Phi(x,y) &=  -\Phi(x,y) + \Phi(z,y)  + \left( \Phi(z,x)- \Phi(z,y) \right) - \Phi(z,x) + \Phi(x,x)\\
	&\leq  |\Phi(z,y)-\Phi(x,y)| + |\Phi(x,x)-\Phi(z,x)|  +  \Phi(z,x)- \Phi(z,y) \\
	&\leq  2 (1-\tau) L \norm{x-y}   + \Phi(z,x)- \Phi(z,y) &\since{Lipschitz condition}\\
	&\leq  2 (1-\tau) L \norm{x-y}   + \frac{\epsilon}{1-\tau} &\since{The inequality above}\\
	&\leq  2 (1-\tau) L D  + \frac{\epsilon}{1-\tau}
	\end{align*}
	Assume  $\epsilon \leq 2LD$ and let $(1-\tau) = \sqrt{\frac{\epsilon}{2LD}}$, which satisfies $\tau\in[0,1]$. Then
	\begin{align*}
	-\Phi(x,y) \leq 2\sqrt{2LD \epsilon}
	\end{align*}

	When we have $\mu$-strong convexity, we have a tighter bound
	\begin{align*}
	\epsilon \geq \Phi(z,x)
	= \Phi(z,x) - \Phi(z,z)
	&\geq \Phi(z,x) - \tau \Phi(z,x) - (1-\tau) \Phi(z,y) + \frac{\mu \tau(1-\tau)}{2}\norm{x-y}^2\\
	&= (1-\tau)(\Phi(z,x)- \Phi(z,y) ) + \frac{\mu \tau(1-\tau)}{2}\norm{x-y}^2
	\end{align*}

	Using this, we can instead show (following similar steps as above)
	\begin{align*}
	-\Phi(x,y)
	&\leq  2 (1-\tau) L \norm{x-y} + \Phi(z,x) - \Phi(z,y)\\
	&\leq  2 (1-\tau) L \norm{x-y}   + \frac{\epsilon}{1-\tau} - \frac{\mu \tau}{2}\norm{x-y}^2\\
	&\leq 
	\frac{\epsilon}{1-\tau} + \frac{2L^2  (1-\tau)^2}{ \mu \tau}
	\end{align*}
	where the last inequality is simply $bx - \frac{a}{2}x^2 \leq \frac{b^2}{2a}$ for $a>0$.
	Assume $\epsilon \leq \frac{L^2}{\mu} \eqqcolon \frac{K}{2}$ and let $(1-\tau) = (\frac{\epsilon}{K})^{1/3} \in [0,1]$. We have the following inequality, where the last step uses $\epsilon \leq \frac{K}{2}$.
	\begin{align*}
	-\Phi(x,y) \leq 	\frac{\epsilon}{1-\tau} + \frac{2L^2  (1-\tau)^2}{ \mu \tau}
	= \epsilon^{2/3} K^{1/3} \left( 1 + \frac{1}{1-(\frac{\epsilon}{K})^{1/3}}\right)
	\leq 2.2\epsilon^{2/3} K^{1/3}
 	\end{align*}

\end{proof}

\subsubsection{Equivalence between primal and dual EPs.}
An interesting special case of EP is those with\emph{ skew-symmetric} bifunctions, i.e.
\begin{align*}
\Phi(x,x') = - \Phi(x',x)
\end{align*}
In this case, the EP and the DEP become identical
\begin{align*}
(DEP)\quad \Phi(x, x_{\*\*}) \leq 0  \qquad\Longleftrightarrow\qquad    -\Phi(x_{\*\*},x) \leq 0  \qquad\Longleftrightarrow\qquad
\Phi(x_{\*\*},x) \geq 0  \quad (EP)
\end{align*}
and we have $X^\*  = X_{\*\*}$ and naturally matching residuals
\begin{align*}
r_{dep}(\hat{x})= r_{ep}(\hat{x}).
\end{align*}

Recall from the results of the previous two subsections, generally, when $\Phi(\cdot,x)$ is Lipschitz and $F$ is monotone (but not skew-symmetric), we have $X^\* = X_{\*\*}$ (as known before) but only ($\Phi(x,\cdot)$ is convex)  
\begin{align} \label{eq:primal and dual residuals}
r_{dep}(x) \leq  r_{ep}(x) \leq \sqrt{2 LD} \sqrt{r_{dep}(x)}
\end{align}
or ($\Phi(x,\cdot)$ is $\mu$-strongly convex)
\begin{align*}
r_{dep}(x) \leq r_{ep}(x)  \leq 2.8  (\frac{L^2}{\mu})^{1/3}  r_{dep}(x)^{2/3}
\end{align*}

\subsubsection{Relationship with VIs}

We can reduce a EP into a VI problem. We observe that if a point $x^\* \in \XX$ satisfies
\begin{align*}
\Phi(x^\*,x) \geq 0, \qquad  \forall x \in \XX
\end{align*}
if only if
\begin{align*}
\nabla_2 \Phi(x^\*,x^\*)^\t(x - x^\*) \geq 0, \qquad  \forall x \in \XX
\end{align*}
(i.e. $x^\*$ is a global minimum of the function $\Phi(x^\*, \cdot)$), where $\nabla_2$ denotes the partial derivative with respect to the second argument.
 Therefore, $\EP(\XX,\Phi)$ is equivalent to $\VI(\XX,F)$
\begin{align*}
\text{find $x^\* \in \XX$ \quad s.t.} \qquad  \lr{F(x)}{x'-x} \geq 0,\qquad \forall x' \in \XX
\end{align*} if we define $F$ as
\begin{align} \label{eq:vector field based on EP}
F: x \in \XX \mapsto F(x) = \nabla_2 \Phi(x,x)
\end{align}
In a sense, this VI problem is a linearization of the EP problem. In other words, VIs are EPs whose bifunction satisfies that $\Phi(x,\cdot)$ is linear.

By the definition in~\eqref{eq:vector field based on EP}, we can show that
\begin{align*}
r_{dvi}(\hat{x}) \leq r_{dep}(\hat{x}) \qquad \text{and} \qquad  r_{ep}(\hat{x}) \leq r_{vi}(\hat{x})
\end{align*}
And if $\Phi$ is monotone, then $F=\nabla_2 \Phi(x,x)$ is monotone (though the opposite is not true), because
	\begin{align*}
	\lr{F(x)}{x'-x} = \lr{\nabla_2 \Phi(x,x)}{x' -x}
	&\leq \Phi(x,x')  &&\since{Convexity}\\
	&\leq -\Phi(x',x) &&\since{Monotonicity} \\
	&\leq \lr{\nabla_2 \Phi(x',x')}{x' -x} = \lr{F(x')}{x'-x}  &&\since{Convexity}
	\end{align*}
Note the converse is not true, unless $\Phi(x,\cdot)$ is linear.

\subsection{Reduction from Equilibrium Problems to Continuous Online Learning}
\label{app:general reduction}

Now we present the general reduction strategy.
Given a EP $(\XX, \Phi)$, we propose to define a COL problem by identifying
\begin{align*}
f_{x}(x') = \Phi(x,x')
\end{align*}
We can see that this definition is consistent with \cref{th:equivalent problems}: due to $\Phi(x,x)=0$, it satisfies
\begin{align*}
f_{x}(x') - f_{x}(x) = \Phi(x,x') - \Phi(x,x) = \Phi(x,x')
\end{align*}
Therefore, we can say a COL is \emph{normalized} if $f_{x}(x) = 0$. In this case, $f$ and $\Phi$ are interchangeable.

Below we relate the dynamic regret
$
\regret_N^d \coloneqq \sum_{n=1}^{N} f_{x_n} (x_n) - \min_{x\in\XX} f_{x_n} (x)
$
and the static regret
$
\regret_N^s \coloneqq  \sum_{n=1}^{N} f_{x_n} (x_n) - \min_{x\in\XX}  \sum_{n=1}^{N}   f_{x_n} (x)
$
of this problem to the convergence to the EP's solution; note that the above definitions use the fact that in COL $l_n(x) = f_{x_n}(x)$.

\subsubsection{Dynamic Regret and Primal Residual}

We first observe that each instant term in the dynamic regret of this COL problem is exactly the residual function:
\begin{align*}
f_{x_n} (x_n) - \min_{x\in\XX} f_{x_n} (x) = - \min_{x\in\XX} \Phi(x_n,x) = r_{ep}(x_n)
\end{align*}
Therefore, the average dynamic regret describes the rate the gap function converges to zero:
\begin{align*}
\sum_{n=1}^{N} r_{ep}(x_n)   = \sum_{n=1}^{N} f_{x_n} (x_n) - \min_{x\in\XX} f_{x_n} (x)  = \regret_N^d
\end{align*}
Note that the above relationship holds also for weighted dynamic regret.
In general, it means that if the average dynamic regret converges, then the last iterate must converge to the solution set of the EP (since the residual is non-negative.)

\subsubsection{Static Regret and Dual Residual of Monotone EPs}

Next we relate the weighted static regret to the dual residual of the EP.
Let $\{w_n\}$ be such that $w_n > 0$. Let $\hat{x}_N = \frac{1}{x_{1:N}}\sum_{n=1}^{N} w_n x_n$ for some $\{x_n\in \XX \}_{n=1}^N$, where we define $w_{1:N} \coloneqq \sum_{n=1}^{N} w_n$.
We can derive
\begin{align*}
r_{dep}(\hat{x}_N)
&=  \max_{x\in\XX} \Phi(x, \hat{x}_N)  \\
&\leq \max_{x\in\XX} \frac{1}{w_{1:N}}\sum_{n=1}^N w_n \Phi(x, x_n) &&\since{Convexity}\\
&\leq \max_{x\in\XX} \frac{1}{w_{1:N}}\sum_{n=1}^N - w_n \Phi(x_n, x) &&\since{Monotonocity} \\
&= -\min_{x\in\XX} \frac{1}{w_{1:N}}\sum_{n=1}^N  w_n \Phi(x_n, x)   \\
&= \frac{1}{w_{1:N}}\sum_{n=1}^N  w_n \Phi(x_n, x_n)  -\min_{x\in\XX} \frac{1}{w_{1:N}}\sum_{n=1}^N  w_n \Phi(x_n, x) &&\since{$\Phi(x_n,x_n)=0$}\\
&=  \frac{1}{w_{1:N}} \left( \sum_{n=1}^{N} w_n f_n(x_n) - \min_{x\in\XX}  \sum_{n=1}^N w_n f_n(x)  \right)\\
&\eqqcolon \frac{\regret_N^s(w)}{w_{1:N}}
\end{align*}
Note that the inequality
$
r_{dep}(\hat{x}_N)  \leq \frac{\regret_N^s(w)}{w_{1:N}}
$
holds for \emph{any} sequence $\{x_n\}$ and $\{w_n\}$.
Interestingly, by~\eqref{eq:primal and dual residuals}, we see that by the definition of regrets and the property of monotonicity and local Lipschitz continuity, it holds that
\begin{align*}
 \frac{r_{ep}(\hat{x}_N)^2}{2LD} \leq r_{dep}(\hat{x}_N) \leq \frac{\regret_N^s(w)}{w_{1:N}} \leq \frac{\regret_N^d(w)}{w_{1:N}} \eqqcolon \frac{\sum_{n=1}^{N} w_n r_{ep}(x_n) }{w_{1:N}}
\end{align*}
where $L$ is the Lipschitz constant of $\Phi(\cdot, x)$ and $D$ is the size of $\XX$.

\subsection{Summary}

Let us summarize the insights gained from the above discussions.
\begin{enumerate}
\item We can reduce $\EP(\XX,\Phi)$ with monotone $\Phi$ to the COL problem with $l_n(x) = \Phi(x_n,x)$

\item In this COL, the convergence in (weighted) average dynamic regret implies the convergence of the last iterate to the primal solution set. The convergence in (weighted) average static regret implies the convergence of the (weighted) average decision to the dual solution set.

\item Because any dual solution is a primal solution when $\Phi(\cdot, x)$ is continuous, this implies the (weighted) average solution above also converges to the primal solution set. Particularly, if the problem is Lipschitz, we can show $r_{ep} \leq O(\sqrt{r_{dep}})$ and therefore we can also quantify the exact quality of  $\hat{x}_N$ in terms of the primal EP (though it results in a slower rate).

\item When the problem is skew-symmetric (as in the case of common reductions  from optimization and saddle-point problems), we have exactly $r_{ep} = r_{dep}$. This means the average static regret rate directly implies the quality of $\hat{x}_N$ in terms of the primal residual, \emph{without} rate degradation.

\end{enumerate}

\section{Complete Proofs of \cref{sec:reductions}} \label{app:proofs of reduction}

\subsection{Proof of \cref{th:reduction of dynamic regret}}

The main idea is based on the decomposition that
\begin{equation}\label{eq:decomposition of dynamic regret}
\begin{aligned}
\regret_N^d &= \textstyle \sum_{n=1}^{N} f_{x_n} (x_n)  - f_{x_n} (x^\*)  + \sum_{n=1}^N f_{x_n} (x^\*) - f_{x_n} (x_n^*)
\end{aligned}
\end{equation}
For the first term, $\sum_{n=1}^{N} f_{x_n} (x_n) - f_{x_n} (x^\*) = \regret_N^s(x^\star) \leq \regret_N^s$ and $f_{x_n} (x_n)  - f_{x_n} (x^\*) \leq \lr{\nabla f_{x_n}(x_n)}{x_n - x^\*} \leq G \Delta_n$.
For the second term, we derive
	\begin{align*}
	&f_{x_n} (x^\*) - f_{x_n} (x_n^*) \\
	&\leq \lr{\nabla f_{x_n} (x^\*) }{x^\* - x_n^*} - \frac{\alpha}{2}\norm{x^\* - x_n^*}^2 \\ 
	&\leq   \lr{\nabla f_{x_n} (x^\*)- \nabla f_{x^\*}(x^\*) }{x^\* - x_n^*} - \frac{\alpha}{2}\norm{x^\* - x_n^*}^2 \\ 
	&\leq  \norm{\nabla f_{x_n} (x^\*)- \nabla_{x^\*}f(x^\*) }_*\norm{x^\* - x_n^*} - \frac{\alpha}{2}\norm{x^\* - x_n^*}^2 \\ 
	&\leq  \beta\norm{x_n -x^\*}\norm{x^\* - x_n^*} - \frac{\alpha}{2}\norm{x^\* - x_n^*}^2 \\ 
	&\leq  \min\{ \beta D_\XX \norm{x_n -x^\*},   \frac{\beta^2}{2\alpha}\norm{x_n - x^\*}^2 \}
	\end{align*}
in which the second inequality is due to that $x^\* \in X^\*$ and the fourth inequality is due to $\beta$-\rregularity. Combining the two terms gives the upper bound.
For the lower bound, we notice that when $x_\* \in X_\*$, we have $ f_{x_n} (x_n)  - f_{x_n} (x_\*) \geq 0$. Since by~\cref{pr:primal and dual solutions} $x_\* \in X^\*$ is also true, we can use~\eqref{eq:decomposition of dynamic regret} and the fact that $f_{x_n} (x_\*) - f_{x_n} (x_n^*) \geq \frac{\alpha}{2} \norm{x_\* - x_n^*}^2$ to derive  the lower bound.

\subsection{Proof of \cref{cr:full reduction to static regret}}

	By~\cref{pr:beta-alpha strongly monotone}, $\nabla f$ is $(\alpha-\beta)$-strongly monotone, implying
	$
	\lr{\nabla f_{x_n}(x_n)}{x_n - x^\*} \geq (\alpha -\beta) \Delta_n^2
	$, where we recall that $\Delta_n = \norm{x_n - x^\*}$ and $x^\* \in X^\*$.
	 Because $\sum_{n=1}^N \lr{\nabla f_{x_n}(x_n)}{x_n - x^\*} =  \widetilde{\regret_N^s}(x^\star) \leq \widetilde{\regret_N^s}$, we have by~\cref{th:reduction of dynamic regret} the inequality in the statement.

\subsection{Proof of~\cref{pr:alpha equals beta full information}}

	In this case, by~\cref{pr:contraction condition}, $T$ is non-expansive. We know that, e.g., Mann iteration~\citep{mann1953mean}, i.e., for $\eta_n \in (0,1)$ we set
	\begin{align} \label{eq:Mann iteration}
	x_{n+1} = \eta_n x_n + (1-\eta_n) x_n^*,
	\end{align}
	converges to some $x^\* \in X^\*$; in view of~\eqref{eq:Mann iteration}, the greedy is update is equivalent to Mann iteration with $\eta_n =1$.
	As Mann iteration converges in general Hilbert space, by~\cref{th:equivalent problems}, it has sublinear dynamic regret with some constant that is polynomial in $d$.

\subsection{Proof of~\cref{pr:mirror descent dynamic regret}}


We first establish a simple lemma related to the smoothness of $\nabla f_x(x)$ and then a result on the convergence of the Bregman divergence $B_R(x_n \| x^\star)$. The purpose of the second lemma is to establish essentially a contraction showing that the distance between the equilibrium point $x^\star$ and $x_n$ strictly decreases.

\begin{lemma}\label{lm:gamma plus beta smooth}
If, $\forall x\in \XX$, $\nabla f_\cdot (x)$ is $\beta$-Lipschitz continuous and $f_x(\cdot)$ is $\gamma$-smooth, then, for any $x,y \in \XX$,
\begin{align*}
\| \nabla f_x(x) - \nabla f_y(y) \|_* \leq (\gamma + \beta) \|x - y\|.
\end{align*}
\end{lemma}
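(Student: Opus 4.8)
The plan is to insert a single ``crossed'' intermediate gradient that shares the query argument with one end and the decision argument with the other, and then invoke each hypothesis on exactly one resulting difference. Concretely, I would fix $x,y \in \XX$ and write the decomposition
\[
\nabla f_x(x) - \nabla f_y(y) = \bigl( \nabla f_x(x) - \nabla f_x(y) \bigr) + \bigl( \nabla f_x(y) - \nabla f_y(y) \bigr),
\]
so that in the first parenthesis only the decision argument moves (the query is held fixed at $x$), while in the second parenthesis only the query argument moves (the decision is held fixed at $y$). This separation is exactly what makes the two hypotheses applicable one at a time.

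Next I would bound each piece. For the first difference I use the $\gamma$-smoothness of $f_x(\cdot)$: by definition $\nabla f_x(\cdot)$ is $\gamma$-Lipschitz, so $\norm{\nabla f_x(x) - \nabla f_x(y)}_* \leq \gamma \norm{x-y}$. For the second difference I use the hypothesis that the query map $\nabla f_\cdot(y)$ is $\beta$-Lipschitz continuous, which gives $\norm{\nabla f_x(y) - \nabla f_y(y)}_* \leq \beta \norm{x-y}$. Applying the triangle inequality in the dual norm $\norm{\cdot}_*$ to the displayed decomposition and summing the two bounds then yields $\norm{\nabla f_x(x) - \nabla f_y(y)}_* \leq (\gamma+\beta)\norm{x-y}$, which is the claim.

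There is no genuine obstacle here beyond careful bookkeeping of the two arguments: the only point requiring attention is to choose the intermediate term $\nabla f_x(y)$ so that each difference isolates precisely one of the smoothness and regularity constants, rather than entangling them. The symmetric choice $\nabla f_y(x)$ works identically, and since everything is carried out in the (possibly non-Euclidean) dual norm, I would only need the standard triangle inequality rather than any inner-product structure.
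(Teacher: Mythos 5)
Your proof is correct and is essentially the same argument as the paper's: insert a crossed intermediate gradient, apply the triangle inequality in the dual norm, and bound one difference by $\gamma$-smoothness and the other by $\beta$-Lipschitz continuity of the query map. The paper happens to use the symmetric intermediate term $\nabla f_y(x)$ rather than your $\nabla f_x(y)$, a distinction you already note is immaterial.
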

\begin{proof}
For any $x, y \in \XX$, it holds that
\begin{align*}
\| \nabla f_x(x) - \nabla f_y(y) \|_* & \leq \| \nabla f_x(x) - \nabla f_y(x) + \nabla f_y(x) - \nabla f_y(y) \|_* \\
& \leq \| \nabla f_x(x) - \nabla f_y(x) \|_* + \|\nabla f_y(x) - \nabla f_y(y) \|_* \\
& \leq \beta \| x - y\| + \gamma \|x - y\|.
\end{align*}
The last inequality uses $\beta$-regularity and $\gamma$-smoothness of $\nabla f_\cdot(x)$ and $f_y(\cdot)$, respectively.
\end{proof}



\begin{lemma}\label{lm:mirror descent contraction}
	If $f$ is $(\alpha,\beta)$-regular, $f_x(\cdot)$ is $\gamma$-smooth for all $x \in \XX$, and $R$ is $1$-strongly convex and $L$-smooth, then for the online mirror descent algorithm it holds that
	\begin{align*}
	& B_R(x^\star \| x_{n}) \leq \left(1 - 2\eta (\alpha - \beta)L^{-1} + \eta^2 (\gamma + \beta)^2 \right)^{n-1} B_R(x^\star \| x_{1}).
	\end{align*}
\end{lemma}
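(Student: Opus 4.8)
The plan is to establish the one-step contraction and then iterate it. I would begin with the standard mirror descent analysis: for the update $x_{n+1} = \argmin_{x \in \XX} \langle \eta g_n, x \rangle + B_R(x \| x_n)$ with $g_n = \nabla l_n(x_n) = \nabla f_{x_n}(x_n)$, the first-order optimality condition combined with the three-point identity for Bregman divergences yields, for any comparator $u \in \XX$,
\begin{align*}
B_R(u \| x_{n+1}) \leq B_R(u \| x_n) - \eta \langle \nabla f_{x_n}(x_n), x_n - u \rangle + \tfrac{\eta^2}{2}\norm{\nabla f_{x_n}(x_n)}_*^2,
\end{align*}
up to the usual handling of the $\norm{x_n - x_{n+1}}^2$ term via $1$-strong convexity of $R$. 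I would set $u = x^\star$, the equilibrium point, and the crux is to lower bound the inner product $\langle \nabla f_{x_n}(x_n), x_n - x^\star \rangle$ using the structure of COL.

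The key step is to exploit that $x^\star \in X^\star$ solves $\VI(\XX, F)$ with $F = \nabla f$, so $\langle \nabla f_{x^\star}(x^\star), x_n - x^\star \rangle \geq 0$. Writing $\langle \nabla f_{x_n}(x_n), x_n - x^\star \rangle = \langle \nabla f_{x_n}(x_n) - \nabla f_{x^\star}(x^\star), x_n - x^\star \rangle + \langle \nabla f_{x^\star}(x^\star), x_n - x^\star \rangle$, the second term is nonnegative, and the first term is bounded below by $(\alpha - \beta)\norm{x_n - x^\star}^2$ using \cref{pr:beta-alpha strongly monotone} (strong monotonicity of $\nabla f$). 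For the quadratic error term, I would apply \cref{lm:gamma plus beta smooth} together with the fact that $\nabla f_{x^\star}(x^\star) $ vanishes in the gradient difference sense; more precisely, since $x^\star$ is a fixed point I expect to bound $\norm{\nabla f_{x_n}(x_n)}_*$ by $(\gamma + \beta)\norm{x_n - x^\star}$, because at the equilibrium the projected gradient is zero, so the relevant gradient magnitude is controlled by the distance to $x^\star$. Substituting and converting $\norm{x_n - x^\star}^2 \geq 2 L^{-1} B_R(x^\star \| x_n)$ via $L$-smoothness of $R$ (which gives $B_R(x^\star \| x_n) \leq \tfrac{L}{2}\norm{x^\star - x_n}^2$, hence the displayed inequality after also using $1$-strong convexity for the lower direction) yields
\begin{align*}
B_R(x^\star \| x_{n+1}) \leq \left(1 - 2\eta(\alpha - \beta)L^{-1} + \eta^2(\gamma+\beta)^2\right) B_R(x^\star \| x_n).
\end{align*}
Iterating this from $n=1$ gives the claimed geometric bound.

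The main obstacle I anticipate is correctly controlling the quadratic term $\tfrac{\eta^2}{2}\norm{\nabla f_{x_n}(x_n)}_*^2$ and relating it cleanly to $\norm{x_n - x^\star}^2$ rather than to a generic bound $G^2$; a naive bound would produce an additive constant rather than a multiplicative contraction. The resolution relies on the equilibrium being a fixed point of $T$, so that $\nabla f_{x^\star}(x^\star)$ acts as a stationary reference and $\norm{\nabla f_{x_n}(x_n)}_* \le (\gamma+\beta)\norm{x_n - x^\star}$ via \cref{lm:gamma plus beta smooth}. The second delicate point is ensuring the contraction factor lies in $(0,1)$: this requires $\eta < \tfrac{2(\alpha-\beta)}{L(\gamma+\beta)^2}$, matching the step-size condition in \cref{pr:mirror descent dynamic regret}, and I would verify that under this condition $0 < 1 - 2\eta(\alpha-\beta)L^{-1} + \eta^2(\gamma+\beta)^2 < 1$ so that $\nu$ can be taken as the square root of this factor in the subsequent regret bound.
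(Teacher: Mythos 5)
Your overall skeleton (three-point identity, strong monotonicity of $\nabla f$ via \cref{pr:beta-alpha strongly monotone} for the linear term, and the norm-to-Bregman conversions via $1$-strong convexity and $L$-smoothness of $R$) matches the paper's proof, but there is a genuine gap at the step you yourself flag as the crux: the claimed bound $\norm{\nabla f_{x_n}(x_n)}_* \leq (\gamma+\beta)\norm{x_n - x^\star}$ is false in general. The equilibrium condition — whether phrased as $x^\star$ solving $\VI(\XX,\nabla f)$ or as $x^\star$ being a fixed point of $T$ — only gives the variational inequality $\lr{\nabla f_{x^\star}(x^\star)}{x - x^\star} \geq 0$ for all $x \in \XX$; it does \emph{not} give $\nabla f_{x^\star}(x^\star) = 0$ unless $x^\star$ lies in the interior of $\XX$. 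Since $\XX$ is a compact convex set, boundary equilibria are entirely possible (and typical in constrained problems). \cref{lm:gamma plus beta smooth} bounds only the \emph{difference} $\norm{\nabla f_{x_n}(x_n) - \nabla f_{x^\star}(x^\star)}_*$ by $(\gamma+\beta)\norm{x_n - x^\star}$, so your quadratic term is really $\tfrac{\eta^2}{2}\norm{\nabla f_{x_n}(x_n)}_*^2 \leq \eta^2\bigl(\norm{\nabla f_{x^\star}(x^\star)}_*^2 + (\gamma+\beta)^2\norm{x_n-x^\star}^2\bigr)$, and the residual $\eta^2\norm{\nabla f_{x^\star}(x^\star)}_*^2$ is an additive constant that does not shrink with $\norm{x_n - x^\star}$. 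Iterating then only shows convergence to an $O(\eta)$ neighborhood of $x^\star$, not the claimed geometric contraction of $B_R(x^\star\|x_n)$ to zero.

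The fix is where the paper's proof departs from yours: do not collapse to the "standard" inequality $B_R(u\|x_{n+1}) \leq B_R(u\|x_n) - \eta\lr{g_n}{x_n - u} + \tfrac{\eta^2}{2}\norm{g_n}_*^2$, which forces the raw gradient into the quadratic term. Instead, keep the optimality condition of \eqref{eq:mirror descent update} in the form $\eta\lr{\nabla f_{x_n}(x_n)}{x_{n+1} - x^\star} \leq B_R(x^\star\|x_n) - B_R(x^\star\|x_{n+1}) - B_R(x_{n+1}\|x_n)$, and apply the equilibrium inequality at the point $x_{n+1}$, i.e. $\lr{\nabla f_{x^\star}(x^\star)}{x_{n+1} - x^\star} \geq 0$, so that the gradient \emph{difference} $\nabla f_{x_n}(x_n) - \nabla f_{x^\star}(x^\star)$ appears in every remaining term. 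Splitting $x_{n+1} - x^\star = (x_{n+1}-x_n) + (x_n - x^\star)$, the second piece is handled by $(\alpha-\beta)$-strong monotonicity exactly as you propose, and the first piece is handled by Cauchy–Schwarz, \cref{lm:gamma plus beta smooth} applied to the difference, and completing the square against $-B_R(x_{n+1}\|x_n) \leq -\tfrac{1}{2}\norm{x_{n+1}-x_n}^2$. This yields the factor $1 - 2\eta(\alpha-\beta)L^{-1} + \eta^2(\gamma+\beta)^2$ with no assumption that the gradient vanishes at $x^\star$; your step-size condition $\eta < \tfrac{2(\alpha-\beta)}{L(\gamma+\beta)^2}$ for the factor to lie in $(0,1)$ is then correct as stated.
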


\begin{proof}
By the mirror descent update rule in (\ref{eq:mirror descent update}), $\< \eta \nabla f_{x_n}(x_n) + \nabla R (x_{n+1}) - \nabla R (x_n), x^\* - x_{n+1}\> \geq 0$. Since $x^\star \in X_{\*}$, $\<\eta \nabla f_{x^\*}(x^\*), x_{n+1} - x^\star \> \geq 0$. Combining these inequalities yields $\eta\< \nabla f_{x_n}(x_n) - \nabla f_{x^\*} (x^\star), x_{n+1} - x^\star\> \leq \< \nabla R(x_{n+1}) - \nabla R(x_n), x^\star - x_{n+1}\>$.
Then by the three-point equality of the Bregman divergence, we have
\begin{align*}
B_R(x^\star \| x_{n+1})
& \leq B_R(x^\star \| x_{n}) - B_R(x_{n+1} \| x_{n})  - \eta \< \nabla f_{x_n}(x_n) - \nabla f_{x^\*} (x^\star), x_{n+1} - x^\star\>.
\end{align*}
Because of the $(\alpha - \beta)$-strong monotonicity of $\nabla f_x(x)$, the above inequality implies
\begin{align*}
B_R(x^\star \| x_{n+1}) & \leq B_R(x^\star \| x_{n}) - B_R(x_{n+1} \| x_{n}) - \eta \< \nabla f_{x_n}(x_n) - \nabla f_{x^\*} (x^\star), x_{n+1} - x_n\> \\
& \quad - \eta \< \nabla f_{x_n}(x_n) - \nabla f_{x^\*} (x^\star), x_{n} - x^\star\> \\
& \leq B_R(x^\star \| x_{n}) - B_R(x_{n+1} \| x_{n}) - \eta \< \nabla f_{x_n}(x_n) - \nabla f_{x^\*} (x^\star), x_{n+1} - x_n\> - \eta(\alpha - \beta) \|x^\star - x_n\|^2\\
& \leq B_R(x^\star \| x_{n}) + \frac{\eta^2(\gamma + \beta)^2}{2}\|x^\star - x_n\|^2 - \eta (\alpha - \beta) \|x^\star - x_n\|^2 \\
& \leq \left( 1 + \eta^2(\gamma + \beta)^2 - 2\eta (\alpha - \beta) L^{-1} \right) B_R(x^\star\|x_n).
\end{align*}
The third inequality results  from the Cauchy-Scwharz inequality followed by maximizing over $\|x_{n+1} - x_n\|$ and then applying Lemma \ref{lm:gamma plus beta smooth}. The last inequality uses the fact that $R$ is $1$-strongly convex and $L$-smooth.
%
\end{proof}

If $\alpha > \beta$ and $\eta$ is chosen such that $\eta < \frac{2(\alpha - \beta)}{L(\gamma + \beta)^2}$, we can see that the online mirror descent algorithm guarantees linear convergence of $B_R(x^\star \| x_n)$ to zero with rate $(1 - 2\eta (\alpha - \beta)L^{-1} + \eta^2 (\gamma + \beta)^2) \in (0, 1)$. By strong convexity, we have,
\begin{align*}
	\Delta_n = \|x^\star - x_{n}\| & \leq \sqrt{2 B_R(x^\star \| x_{n})} \\
	& \leq \sqrt 2 \left( 1 + \eta^2(\gamma + \beta)^2 - 2\eta (\alpha - \beta) L^{-1} \right)^{\frac{n - 1}{2}} B_R(x^\star \| x_0)^{1/2}.
\end{align*}

The proposition follows immediately from combining this result and Theorem \ref{th:reduction of dynamic regret}.

\subsection{Proof of~\cref{pr:stochastic mirror descent}}
Recall that $g_n = \nabla l_n(x_n) + \epsilon_n + \xi_n$.
 As discussed previously, we assume there exist constants $0 \leq \sigma, \kappa < \infty$ such that $\E \left[\| \epsilon_n\|_*^2\right] \leq \sigma^2$ and $\|\xi_n\|_*^2 \leq \kappa^2$ for all $n$.
 The  mirror descent update rule is given by
\begin{align}\label{eq:stochastic mirror descent update}
x_{n+1} = \argmin_{x \in \XX}\<\eta_n g_n, x \> + B_R(x\|x_n).
\end{align}

We use~\cref{cr:full reduction to static regret} along with known results for the static regret to bound the dynamic regret in the stochastic case.
The main idea of the proof is to show the result for the linearized losses. By convexity, this can be used to bound both terms in~\cref{cr:full reduction to static regret}.

Let $u$ be any fixed vector in $\XX$, chosen independent of the learner's decisions $x_1, \ldots, x_n$.
The first-order condition for optimality of \eqref{eq:stochastic mirror descent update} yields $\< \eta_n g_n, x_{n+1} - u\> \leq \< u - x_{n+1} , \nabla R(x_{n+1}) - \nabla R(x_n)\>$.
We use this condition to bound the linearized losses as in the proof of~\cref{pr:mirror descent dynamic regret}. We can bound the linearized losses by the magnitude of the stochastic gradients and Bregman divergences between $u$ and the learner's decisions:
\begin{align*}
\< g_n, x_{n} - u\> & \leq \frac{1}{\eta_n} \< u - x_{n+1} , \nabla R(x_{n+1}) - \nabla R(x_n)\> + \< g_n, x_{n} - x_{n+1}\>\\
& = \frac{1}{\eta_n} B_R(u\|x_n) - \frac{1}{\eta_n}B_R(u\|x_{n+1}) - \frac{1}{\eta_n}B_R(x_{n+1}\| x_n)  + \< g_n, x_{n} - x_{n+1}\> \\
& \leq \frac{1}{\eta_n} B_R(u\|x_n) - \frac{1}{\eta_n}B_R(u\|x_{n+1}) - \frac{1}{2\eta_n}\|x_n - x_{n+1}\|^2  + \|g_n\|_* \|x_{n} - x_{n+1}\|\\
& \leq \frac{1}{\eta_n} B_R(u\|x_n) -\frac{1}{\eta_n} B_R(u\|x_{n+1}) + \frac{\eta_n}{2} \|g_n\|^2_*.
\end{align*}
The first inequality follows from adding $\<g_n, x_n - x_{n+1}\>$ to both sides of the inequality from the first-order condition for optimality. The equality uses the three-point equality of the Bregman divergence.
The second inequality follows from the Cauchy-Schwarz inequality and the fact that $\frac{1}{2}\|x_n - x_{n+1}\|^2 \leq B_R(x_{n+1} \| x_n)$ due to the $1$-strong convexity of $R$.
The last inequality maximizes over $\|x_n - x_{n+1}\|$.

Define $\RR = \sup_{w_1,w_2 \in \XX}B_R(w_1 \| w_2)$, which is bounded. Note that $\E\left[ \| g_n\|_*^2 \right] \leq 3 (G^2 + \sigma^2 + \kappa^2)$. Therefore, summing from $n = 1$ to $N$, it holds for any $u\in\XX$ selected before learning,
\begin{align*}
\E \left[ \sum_{n = 1}^N \< g_n, x_{n} - u \> \right] & \leq \E \left[ \sum_{n = 1 }^N
  \left(\frac{1}{\eta_n} - \frac{1}{\eta_{n-1}} \right) \RR + \frac{3}{2}(G^2 + \sigma^2 + \kappa^2)\eta_n \right]
\end{align*}
After rearrangement, we have
\begin{align*}
\E \left[ \sum_{n = 1}^N \< \nabla l_n(x_n) + \epsilon_n, x_{n} - u \> \right] & \leq \E \left[ \sum_{n = 1 }^N   \left(\frac{1}{\eta_n} - \frac{1}{\eta_{n-1}} \right) \RR + \frac{3}{2}(G^2 + \sigma^2 + \kappa^2)\eta_n +  D_\XX \|\xi_n\|_* \right].
\end{align*}
Choosing $\eta_n = \frac{1}{\sqrt n}$, $\eta_n = \eta_1$, and $u = x^\star$ (because $x^\star$ is fixed for a fixed $f$ selected before learning) yields $\E \left[ \sum_{n = 1}^N \< \nabla l_n(x_n) + \epsilon_n , x_{n} - x^\star \> \right] = O(\sqrt N + \Xi)$.
Because of the law of total expectation and that $x_n$ does not depend on $\epsilon_n$, we have $\E [\widetilde{\regret_N^s}(x^\star)] =\E \left[ \sum_{n = 1}^N \< \nabla l_n(x_n) + \epsilon_n, x_{n} - x^\star \> \right]$.
Further, by convexity, it follows $\E [\regret_N^s(x^\star)] \leq \E [\widetilde{\regret_N^s}(x^\star)]$. Then, we may apply~\cref{cr:full reduction to static regret} to obtain the result. Note that there is no requirement that $R$ is smooth.

\section{Complete Proofs of \cref{sec:extensions}} \label{app:proofs of extensions}

\subsection{Proof of~\cref{pr:generalized contraction property}}

	Because
	$\nabla l_n(\cdot)$ is $\alpha$-strongly monotone, it holds
	\begin{align*}
	\lr{\nabla l_n(x_{n-1}^*)}{x_{n-1}^* - x_n^*} & \geq \alpha\norm{x_{n-1}^* - x_n^*}^2
	\end{align*}
	Since $y^*$ satisfies $\lr{\nabla l_{n-1}(x_{n-1}^*)}{x_n^* - x_{n-1}^*} \geq 0$, the above inequality implies that
	\begin{align*}
	\alpha\norm{x_n^* - x_{n-1}^*}^2
	&\leq \lr{\nabla l_n(x_{n-1}^*) - \nabla l_{n-1}(x_{n-1}^*)}{x_{n-1}^* - x_n^*} \\
	&\leq (\beta \norm{x_n-x_{n-1}} + a_n)\norm{x_{n-1}^* - x_n^*}
	\end{align*}
	Rearranging the inequality gives the statement.

\subsection{Proof of~\cref{th:predictable problem with }}
	For convenience, define $\lambda := \frac{\beta}{\alpha}$. Recall that, by the mirror descent update rule, the first-order conditions for optimality of both $x_{x+1}$ and $x_n^*$ yield, for all $x \in \XX$,
	\begin{align*}
	\< \eta \nabla l_n(x_n), x - x_{n+1}\> & \geq \<\nabla R(x_{n}) - \nabla R(x_{n+1}), x - x_{n+1}\> \\
	\< \nabla l_n(x_n^*), x - x_n^* \> & \geq 0.
	\end{align*}
	The proof requires many intermediate steps, which we arrange in a series of lemmas that typically follow from each other in order. Ultimately, we aim to achieve a result that resembles a contraction as done in~\cref{pr:mirror descent dynamic regret} but with additional terms due to the adversarial component of the predictable problem. We begin with general bounds on the Bregman divergence beteween the learner's decisions and the optimal decisions.

	\begin{lemma}\label{lm:predictable current round bound}
		At round $n$, for an $(\alpha, \beta)$-predictable problem under the mirror descent algorithm, if $l_n$ is $\gamma$-smooth and $R$ is $1$-strongly convex and $L$-smooth, then it holds that
	\begin{align*}
		B_R(x_{n+1}^*\| x_{n+1}) & \leq B_R( x_{n+1}^*\| x_n^*) + B_R(x_n^*\| x_{n+1})  \\
		& \quad + \lambda \| x_{n+1} - x_n\| \| \nabla R(x_n^*) - \nabla R(x_{n+1})\|_* + \frac{a_n}{\alpha} \| \nabla R(x_n^*) - \nabla R(x_{n+1})\|_*
	\end{align*}
	and, in the next round,
	\begin{align*}
	B_R(x_n^*\| x_{n+1}) & \leq B_R(x_n^*\| x_n) - B_R(x_{n+1}\| x_n) - \alpha \eta \|x_n - x_n^*\|^2 + \eta \gamma \|x_n - x_n^*\| \|x_{n+1} - x_n\|.
	\end{align*}
	\end{lemma}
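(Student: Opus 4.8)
The plan is to prove the two inequalities separately, as each is a mirror-descent descent-lemma manipulation specialized to the predictable setting; they differ only in which reference points enter the three-point identity for the Bregman divergence, namely $B_R(x\|z) = B_R(x\|y) + B_R(y\|z) + \lr{\nabla R(y) - \nabla R(z)}{x - y}$.

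For the first inequality, I would instantiate this identity with $x = x_{n+1}^*$, $y = x_n^*$, and $z = x_{n+1}$, which reproduces the two Bregman terms on the right-hand side plus the cross term $\lr{\nabla R(x_n^*) - \nabla R(x_{n+1})}{x_{n+1}^* - x_n^*}$. Bounding this cross term by Cauchy--Schwarz (H\"older for the dual norm) yields the factor $\norm{\nabla R(x_n^*) - \nabla R(x_{n+1})}_* \norm{x_{n+1}^* - x_n^*}$, and I would then control the displacement of consecutive minimizers by the contraction property of \cref{pr:generalized contraction property}, which bounds $\norm{x_{n+1}^* - x_n^*}$ by $\lambda\norm{x_{n+1} - x_n}$ plus an additive $a$-over-$\alpha$ term with $\lambda = \frac{\beta}{\alpha}$. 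Substituting this splits the cross term into the last two summands of the claim, so this direction is essentially bookkeeping.

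For the second inequality, I would apply the same identity with $x = x_n^*$, $y = x_{n+1}$, $z = x_n$ and rearrange to isolate $B_R(x_n^*\|x_{n+1}) = B_R(x_n^*\|x_n) - B_R(x_{n+1}\|x_n) + \lr{\nabla R(x_n) - \nabla R(x_{n+1})}{x_n^* - x_{n+1}}$. The first-order optimality condition for $x_{n+1}$ stated before the lemma lets me replace the residual $\lr{\nabla R(x_n) - \nabla R(x_{n+1})}{x_n^* - x_{n+1}}$ by the larger quantity $\eta \lr{\nabla l_n(x_n)}{x_n^* - x_{n+1}}$. The crux is then to bound this gradient inner product so that a negative term $-\alpha\eta\norm{x_n - x_n^*}^2$ and the cross term $\eta\gamma\norm{x_n - x_n^*}\norm{x_{n+1} - x_n}$ emerge. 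I would achieve this by adding and subtracting $\nabla l_n(x_n^*)$ and splitting $x_n^* - x_{n+1} = (x_n^* - x_n) + (x_n - x_{n+1})$: the piece $\lr{\nabla l_n(x_n^*)}{x_n^* - x_{n+1}}$ is nonpositive by the optimality of $x_n^*$ tested at $x = x_{n+1}$, the piece $\lr{\nabla l_n(x_n) - \nabla l_n(x_n^*)}{x_n^* - x_n}$ is at most $-\alpha\norm{x_n - x_n^*}^2$ by $\alpha$-strong monotonicity of $\nabla l_n$, and the piece $\lr{\nabla l_n(x_n) - \nabla l_n(x_n^*)}{x_n - x_{n+1}}$ is at most $\gamma\norm{x_n - x_n^*}\norm{x_{n+1} - x_n}$ by $\gamma$-smoothness and Cauchy--Schwarz.

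I expect the main obstacle to be precisely this decomposition in the second inequality: one must choose the split of $x_n^* - x_{n+1}$ and the reference gradient $\nabla l_n(x_n^*)$ so that every leftover inner product is either sign-definite or directly controllable, and in particular so that the optimality condition for $x_n^*$ applies at the correct test point. Landing exactly on the stated right-hand side, rather than a looser bound carrying extra terms, is the only delicate point; the remaining estimates are routine applications of strong monotonicity, smoothness, and Cauchy--Schwarz.
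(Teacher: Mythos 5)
Your proposal is correct and follows essentially the same route as the paper's proof: the identical three-point identity instantiations, Cauchy--Schwarz together with \cref{pr:generalized contraction property} for the first inequality, and for the second inequality the same combination of the mirror-descent optimality condition at $x_{n+1}$, the optimality of $x_n^*$ tested at $x_{n+1}$, $\alpha$-strong monotonicity, and $\gamma$-smoothness (your split of $x_n^* - x_{n+1}$ is the paper's split up to sign). No gaps.
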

	\begin{proof}
		The first result uses the basic three-point equality of the Bregman divergence followed by the Cauchy-Schwarz inequality and \cref{pr:generalized contraction property}. Note that this first part of the lemma does not require that $x_n$ is generated from a mirror descent algorithm:
		\begin{align*}
		B_R(x_{n+1}^*\| x_{n+1}) & = B_R( x_{n+1}^*\| x_n^*) + B_R(x_n^*\| x_{n+1}) + \< x_{n+1}^* - x_n^*, \nabla R(x_n^*) - \nabla R(x_{n+1})\> \\
		& \leq B_R( x_{n+1}^*\| x_n^*) + B_R(x_n^*\| x_{n+1}) + \| x_{n+1}^* - x_n^*\| \| \nabla R(x_n^*) - \nabla R(x_{n+1})\|_* \\
		& \leq B_R( x_{n+1}^*\| x_n^*) + B_R(x_n^*\| x_{n+1})  \\
		& \quad + \lambda \| x_{n+1} - x_n\| \| \nabla R(x_n^*) - \nabla R(x_{n+1})\|_* + \frac{a_n}{\alpha} \| \nabla R(x_n^*) - \nabla R(x_{n+1})\|_*.
		\end{align*}

		For the second part of the lemma, we require using the first-order conditions of optimality of both $x_{n+1}$ for the mirror descent update and $x_n^*$ for $l_n$:
		\begin{align*}
		B_R(x_n^*\| x_{n+1}) & = B_R(x_n^*\| x_n) - B_R(x_{n+1}\| x_n) + \< x_n^* - x_{n+1}, \nabla R(x_n)  - \nabla R(x_{n+1})\> \\
		& \leq  B_R(x_n^*\| x_n) - B(x_{n+1}\| x_n) + \eta \< \nabla l_n(x_n^*) - \nabla l_n(x_n), x_n - x_n^*\> \\
		& \quad  +  \eta \< \nabla l_n(x_n^*) - \nabla l_n(x_n), x_{n+1} - x_n\> \\
		& \leq B_R(x_n^*\| x_n) - B_R(x_{n+1}\| x_n) - \alpha \eta \|x_n - x_n^*\|^2 + \eta \gamma \|x_n - x_n^*\| \|x_{n+1} - x_n\|.
	\end{align*}
	The first line again applies the three-point equality of the Bregman divergence.
	The second line combines both first-order optimality conditions to bound the inner product.
	The last inequality uses the strong convexity of $l_n$ to bound $\eta \< \nabla l_n(x_n^*) - \nabla l_n(x_n), x_n - x_n^*\> \leq -\alpha \eta \|x_n - x_n^*\|^2$ and the Cauchy-Schwarz inequality along with the smoothness of $l_n$ to bound the other inner product.
	\end{proof}

	The second result also leads to a natural corollary that will be useful later in the full proof.

	\begin{corollary}\label{cor:predictable current round}
		Under the same conditions as \cref{lm:predictable current round bound}, it holds that
		\begin{align*}
		B_R(x_n^*\| x_{n+1}) & = \left( 1 - 2\alpha \eta L^{-1} + \eta^2 \gamma^2 \right)B_R(x_n^*\| x_n).
		\end{align*}
	\end{corollary}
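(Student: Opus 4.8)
The plan is to obtain the stated bound directly from the second inequality of Lemma~\ref{lm:predictable current round bound}, namely
\begin{align*}
B_R(x_n^*\| x_{n+1}) \leq B_R(x_n^*\| x_n) - B_R(x_{n+1}\| x_n) - \alpha \eta \|x_n - x_n^*\|^2 + \eta \gamma \|x_n - x_n^*\| \|x_{n+1} - x_n\|,
\end{align*}
by rewriting every term on the right that involves $x_{n+1}$ as a multiple of $B_R(x_n^*\|x_n)$. (I read the displayed ``$=$'' in the statement as ``$\leq$'', since the right-hand side is an upper bound, matching the analogous step in the proof of Lemma~\ref{lm:mirror descent contraction}.)

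First I would remove the dependence on $\|x_{n+1}-x_n\|$. By the $1$-strong convexity of $R$, $-B_R(x_{n+1}\|x_n) \leq -\tfrac{1}{2}\|x_{n+1}-x_n\|^2$, so the two terms carrying $\|x_{n+1}-x_n\|$ combine into the concave scalar quadratic $-\tfrac{1}{2}t^2 + \eta\gamma\|x_n-x_n^*\|\,t$ in $t = \|x_{n+1}-x_n\|\geq 0$. Its maximum, attained at $t = \eta\gamma\|x_n-x_n^*\|$, equals $\tfrac{\eta^2\gamma^2}{2}\|x_n-x_n^*\|^2$. Substituting leaves an upper bound of the form $B_R(x_n^*\|x_n) + \big(\tfrac{\eta^2\gamma^2}{2}-\alpha\eta\big)\|x_n-x_n^*\|^2$.

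The key step, which is also the main subtlety, is to convert the remaining two multiples of $\|x_n-x_n^*\|^2$ into multiples of $B_R(x_n^*\|x_n)$ using \emph{different} properties of $R$, exactly as in Lemma~\ref{lm:mirror descent contraction}. For the positive term I invoke $1$-strong convexity once more, $\tfrac{1}{2}\|x_n-x_n^*\|^2 \leq B_R(x_n^*\|x_n)$, which turns $\tfrac{\eta^2\gamma^2}{2}\|x_n-x_n^*\|^2$ into $\eta^2\gamma^2 B_R(x_n^*\|x_n)$. For the negative term I instead use $L$-smoothness of $R$, i.e. $\|x_n-x_n^*\|^2 \geq \tfrac{2}{L}B_R(x_n^*\|x_n)$; because the coefficient $-\alpha\eta$ is negative, this yields $-2\alpha\eta L^{-1}B_R(x_n^*\|x_n)$. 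Summing the three contributions gives $\big(1 + \eta^2\gamma^2 - 2\alpha\eta L^{-1}\big)B_R(x_n^*\|x_n)$, as claimed. The one point requiring care is precisely this asymmetry---bounding the positive quadratic from above via strong convexity but the negative quadratic from above by lower-bounding $\|x_n-x_n^*\|^2$ via smoothness---and I note that no constraint on $\eta$ is needed, since each term is handled separately rather than through the sign of the combined coefficient $\tfrac{\eta^2\gamma^2}{2}-\alpha\eta$.
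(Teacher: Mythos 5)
Your proof is correct and takes essentially the same approach as the paper's: both start from the second inequality of \cref{lm:predictable current round bound}, use the $1$-strong convexity of $R$ to replace $-B_R(x_{n+1}\| x_n)$ by $-\tfrac{1}{2}\|x_{n+1}-x_n\|^2$, maximize the resulting concave quadratic in $\|x_{n+1}-x_n\|$, and then convert the two $\|x_n-x_n^*\|^2$ terms into Bregman divergences using strong convexity for the positive term and $L$-smoothness for the negative term. The only differences are the order in which these bounds are applied and your (correct) reading of the stated ``$=$'' as ``$\leq$'', which matches what the paper actually proves.
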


	\begin{proof}
		We start with the first inequality of \cref{lm:predictable current round bound} and then maximize over $\|x_{n+1} - x_n\|^2$. Finally, we applying the strong convexity and smoothness of $R$ to achieve the result:
		\begin{align*}
		B_R(x_n^*\| x_{n+1}) & \leq B_R(x_n^*\| x_n) - B_R(x_{n+1}\| x_n) - \alpha \eta \|x_n - x_n^*\|^2 + \eta \gamma \|x_n - x_n^*\| \|x_{n+1} - x_n\| \\
		& \leq (1 - 2\alpha \eta L^{-1})B_R(x_n^*\| x_n) - \frac{1}{2} \| x_{n+1} - x_n\|^2 + \eta \gamma \|x_n - x_n^*\| \|x_{n+1} - x_n\| \\
		& \leq  (1 - 2\alpha \eta L^{-1})B_R(x_n^*\| x_n) + \eta^2 \gamma^2 B_R(x_n^*\| x_n) = \left( 1 - 2\alpha \eta L^{-1} + \eta^2 \gamma^2 \right)B_R(x_n^*\| x_n).\qedhere
		\end{align*}
	\end{proof}

	We can combine both results of ~\cref{lm:predictable current round bound} in order to show
	\begin{align*}
	B_R(x_{n+1}^*\| x_{n+1}) & \leq B_R( x_{n+1}^*\| x_n^*) + \lambda \| x_{n+1} - x_n\| \| \nabla R(x_n^*) - \nabla R(x_{n+1})\|_* + \frac{a_n}{\alpha} \| \nabla R(x_n^*) - \nabla R(x_{n+1})\|_* \\
	& \quad + B_R(x_n^*\| x_n) - B(x_{n+1}\| x_n) - \alpha \eta \|x_n - x_n^*\|^2 + \eta \gamma \|x_n - x_n^*\| \|x_{n+1} - x_n\|.
	\end{align*}
	Some of the terms in the above inequality can be grouped and bounded above.
	By $L$-smoothness of $R$, we have $B_R(x_{n+1}^*\| x_n^*) \leq \frac{L}{2}\|x_{n+1}^* - x_n^*\|^2 \leq \frac{L}{2}\left(\lambda \|x_n - x_{n+1}\| + \frac{a_n}{\alpha}\right)^2 = \frac{L}{2}\left(\lambda^2 \|x_n - x_{n+1}\|^2 + \frac{a_n^2}{\alpha^2} + \frac{2a_n\lambda}{\alpha} \|x_n - x_{n+1}\|\right)
	$.
	Because, $R$ is $1$-strongly convex, $L \geq 1$; therefore, the previous inequality can be bounded from above using $L^2$ instead of $L$.
	While this artificially worsens the bound, it will be useful for simplifying the conditions sufficient for sublinear dynamic regret.
	$1$-strong convexity of $R$ also gives us $-B_R(x_{n+1}, x_n) \leq -\frac{1}{2}\|x_{n+1} - x_n\|^2$. Applying these upper bounds and then aggregating terms yields
	\begin{align*}
	B_R(x_{n+1}^*\| x_{n+1}) & \leq -\frac{(1 - L^2\lambda^2)}{2}\|x_n - x_{n+1}\|^2 + \left( \lambda \| \nabla R(x_n^*) - \nabla R(x_{n+1}) \|_*  + \eta \gamma \|x_n - x_n^*\|\right) \| x_n - x_{n+1}\| \\
	& \quad + B_R(x_n^*\| x_n) - \alpha \eta \|x_n - x_n^*\|^2 + \frac{a_n}{\alpha} \| \nabla R(x_n^*) - \nabla R(x_{n+1})\|_* + \frac{a_n^2 L}{2\alpha^2} + \frac{a_nL\lambda}{\alpha}\|x_n - x_{n+1}\| \\
	& \leq -\frac{(1 - L^2\lambda^2)}{2}\|x_n - x_{n+1}\|^2 + \left( \lambda \| \nabla R(x_n^*) - \nabla R(x_{n+1}) \|_*  + \eta \gamma \|x_n - x_n^*\|\right) \| x_n - x_{n+1}\| \\
	& \quad + B_R(x_n^*\| x_n) - \alpha \eta \|x_n - x_n^*\|^2 + \frac{a_nL}{\alpha}D_\XX + \frac{a_n^2 L}{2\alpha^2} + \frac{a_nL\lambda}{\alpha}D_\XX \\
	& \leq \frac{\lambda^2 \|\nabla R(x_n^*) - \nabla R(x_{n+1})\|_*^2 + \eta^2 \gamma^2 \|x_n - x_n^*\|^2 }{1 - L^2\lambda^2}+ B_R(x_n^*\| x_n) - \alpha \eta \|x_n - x_n^*\|^2  + \zeta_n \\
	& \leq \frac{\lambda^2L^2 \|x_n^* - x_{n+1}\|^2 + \eta^2 \gamma^2 \|x_n - x_n^*\|^2 }{1 - L^2\lambda^2}+ B_R(x_n^*\| x_n) - \alpha \eta \|x_n - x_n^*\|^2  + \zeta_n  \\
	& \leq \frac{2\lambda^2L^2 B_R(x_n^*\| x_{n+1}) + 2\eta^2 \gamma^2 B_R(x^*_n\| x_n) }{1 - L^2\lambda^2}+ B_R(x_n^*\| x_n) - \alpha \eta \|x_n - x_n^*\|^2  + \zeta_n ,
	\end{align*}
	where $\zeta_n = \frac{a_n L D_\XX}{\alpha} \left(1 + \lambda \right) + \frac{a_n^2 L}{2\alpha^2}$. The third inequality follows from maximizing over $\|x_n - x_{n+1}\|$ and then applying $(a + b)^2 \leq 2a^2 + 2b^2$ for any $a, b \in \R$.
	For this operation, we require that $L^2\lambda^2 < 1$.
	The fourth inequality uses $L$-smoothness of $R$.
	The last inequality uses the fact that $R$ is $1$-strongly convex to bound the squared normed differences by the Bregman divergence.


	We then use \cref{cor:predictable current round} to bound this result on $B_R(x_{n+1}^*\| x_{n+1})$ in terms of only $B_R(x_{n}^*\| x_{n})$ and the appropriate constants:
	\begin{align*}
	B_R(x_{n+1}^*\| x_{n+1}) & \leq  \frac{2L^2\lambda^2 B_R(x_n^*\| x_{n+1}) + 2\eta^2 \gamma^2 B_R(x^*_n\| x_n) }{1 - L^2\lambda^2}+ B_R(x_n^*\| x_n) - \alpha \eta \|x_n - x_n^*\|^2 + \zeta_n  \\
	&\leq \frac{2L^2\lambda^2}{1 - L^2\lambda^2}\left( 1 - 2\alpha \eta L^{-1} + \eta^2 \gamma^2 \right)B_R(x_n^*\| x_n) + \frac{2\eta^2 \gamma^2}{1 - L^2\lambda^2} B_R(x_n^*\| x_n)  \\
	& \quad  + B_R(x_n^*\| x_n) - 2\alpha \eta L^{-1} B_R(x_n^*\| x_n) + \zeta_n  \\
	& = \left(1 - 2 \alpha \eta L^{-1} + \frac{2\eta^2\gamma^2}{1 - L^2\lambda^2} +  \frac{2L^2\lambda^2}{1 - L^2\lambda^2} - \frac{4L\lambda^2\alpha \eta}{1 - L^2\lambda^2} + \frac{2L^2\lambda^2\eta^2 \gamma^2}{1 - L^2\lambda^2}\right)B_R(x^*_n\| x_n) + \zeta_n \\
	& = \left(\frac{1 + L^2\lambda^2}{1 - L^2 \lambda^2}\right) \left(1 - 2\alpha \eta L^{-1} + 2\eta^2 \gamma^2\right) B_R(x_n^*\| x_n) + \zeta_n .
	\end{align*}
	Thus, we have arrived at an inequality that resembles a contraction.
	However, the stepsize $\eta > 0$ may be chosen such that it minimizes the factor in front of the Bregman divergence.
	This can be achieved, but it requires that additional constraints are put on the value of $\lambda$.

	\begin{lemma}\label{lm:predictable md eta contraction}
		If $\lambda < \frac{\alpha}{2L^2\gamma}$ and $\eta = \frac{\alpha}{2L\gamma^2}$, then
		\begin{align*}
		\left(\frac{1 + L^2\lambda^2}{1 - L^2 \lambda^2}\right) \left(1 - 2\alpha \eta L^{-1} + 2\eta^2 \gamma^2\right) < 1
		\end{align*}
	\end{lemma}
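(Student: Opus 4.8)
The plan is to substitute the prescribed step size $\eta = \frac{\alpha}{2L\gamma^2}$ directly and reduce the product to a clean algebraic inequality. First I would compute the right-hand factor: with this $\eta$ one has $2\alpha\eta L^{-1} = \frac{\alpha^2}{L^2\gamma^2}$ and $2\eta^2\gamma^2 = \frac{\alpha^2}{2L^2\gamma^2}$, so that
\begin{align*}
1 - 2\alpha\eta L^{-1} + 2\eta^2\gamma^2 = 1 - \frac{\alpha^2}{2L^2\gamma^2}.
\end{align*}
Abbreviating $c \coloneqq \frac{\alpha^2}{2L^2\gamma^2} > 0$ and $\mu \coloneqq L^2\lambda^2 \geq 0$, the quantity to be bounded is exactly $\frac{1+\mu}{1-\mu}(1-c)$, and the hypothesis $\lambda < \frac{\alpha}{2L^2\gamma}$ squares to $\mu = L^2\lambda^2 < \frac{\alpha^2}{4L^2\gamma^2} = \frac{c}{2}$.

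Next I would check the two positivity facts that make the expression well defined. Since $l_n(\cdot)$ is $\alpha$-strongly convex and $\gamma$-smooth, $\alpha \leq \gamma$; together with $L \geq 1$ (as $R$ is $1$-strongly convex and $L$-smooth) this gives $c = \frac{\alpha^2}{2L^2\gamma^2} \leq \frac{1}{2L^2} \leq \frac12 < 1$ and hence $\mu < \frac{c}{2} < \frac14 < 1$. In particular $1-\mu > 0$, which both recovers the condition $L^2\lambda^2 < 1$ used earlier and lets me cross-multiply without reversing inequalities.

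Finally I would carry out the reduction. Multiplying the target $\frac{1+\mu}{1-\mu}(1-c) < 1$ by $1-\mu > 0$ gives the equivalent statement $(1+\mu)(1-c) < 1-\mu$; expanding and cancelling reduces this to $2\mu < c(1+\mu)$. This now follows immediately: from $\mu < \frac{c}{2}$ we get $2\mu < c \leq c(1+\mu)$ since $1+\mu \geq 1$. I do not anticipate a genuine obstacle here — the content is a one-line inequality after substitution — so the only point requiring care is bookkeeping: confirming $1-\mu > 0$ (via $\alpha \leq \gamma$ and $L \geq 1$) so that the cross-multiplication step is legitimate, and keeping the signs straight when collapsing $(1+\mu)(1-c) < 1-\mu$ to $2\mu < c(1+\mu)$.
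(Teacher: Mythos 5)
Your proof is correct and follows essentially the same route as the paper: substitute $\eta = \frac{\alpha}{2L\gamma^2}$ to reduce the second factor to $1 - \frac{\alpha^2}{2L^2\gamma^2}$, cross-multiply by $1 - L^2\lambda^2$, discard the helpful term $\frac{\lambda^2\alpha^2}{2\gamma^2}$, and conclude from $\lambda < \frac{\alpha}{2L^2\gamma}$. The only difference is cosmetic and in your favor: you explicitly verify $1 - L^2\lambda^2 > 0$ (via $\alpha \leq \gamma$ and $L \geq 1$) before cross-multiplying, a point the paper leaves implicit inside the lemma and only notes afterwards.
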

	\begin{proof}

		By optimizing over choices of $\eta$, it can be seen that
		\begin{align*}
		1 - 2\alpha \eta L^{-1} + 2\eta^2 \gamma^2 \geq 1 - \frac{\alpha^2}{2L^2\gamma^2},
		\end{align*}
		where $\eta$ is chosen to be $\frac{\alpha}{2L\gamma^2}$. Therefore, in order to realize a contraction, we must have
		\begin{align*}
		1 > \left(\frac{1 + L^2\lambda^2}{1 - L^2 \lambda^2}\right) \left(1 - \frac{\alpha^2}{2L^2\gamma^2}\right).
		\end{align*}
		Alternatively,
		\begin{align*}
		0 > 2L^2\lambda^2 - \frac{\alpha^2}{2L^2\gamma^2} - \frac{\lambda^2\alpha^2}{2\gamma^2}.
		\end{align*}
		The quantity on the right hand size of the above inequality is in fact smaller than $2L^2\lambda^2 - \frac{\alpha^2}{2L^2\gamma^2}$, meaning that it is sufficient to have the condition for a contraction be:
		$
		\frac{\alpha}{2 L^2 \gamma} > \lambda.
		$.
	\end{proof}

	Note that $\frac{\alpha}{2 L^2 \gamma } < 1$ since $L \geq 1$ and $\gamma \geq \alpha$ by the definitions of smoothness of $R$ and $l_n$, respectively.
	Thus, this condition required to guarantee the contraction is stricter than requiring that $\lambda < 1$. If this condition is satisfied and if we set $\eta = \frac{\alpha}{2L\gamma^2}$, then we can further examine the contraction in terms of constants that depend only on the properties of $l_n$ and $R$:
	\begin{align*}
	B_R(x_{n+1}^*\| x_{n+1}) & \leq  \left(\frac{1 + L^2\lambda^2}{1 - L^2 \lambda^2}\right) \left(1 - 2\alpha \eta L^{-1} + 2\eta^2 \gamma^2\right) B_R(x_n^*\| x_n) + \zeta_n  \\
	& < \left( \frac{1 + \frac{\alpha^2}{4L^2\gamma^2}}{1 - \frac{\alpha^2}{4L^2\gamma^2}} \right) \left( 1 -\frac{\alpha^2}{2L^2\gamma^2} \right)B_R(x_n^*\| x_n) + \zeta_n  \\
	& = \left( 1 - \frac{\frac{\alpha^4}{8L^4\gamma^4}}{1 - \frac{\alpha^2}{4L^2\gamma^2}} \right)B_R(x_n^*\| x_n) + \zeta_n.
	\end{align*}
	It is easily verified that the factor in front of the Bregman divergence on the right side is less than $1$ and greater than $\frac{5}{6}$.

	By applying the above inequality recursively, we can derive the inequality below
	\begin{align*}
	\frac{1}{2} \| x_n - x_n^*\|^2 \leq B_R(x_n^*\| x_n) \leq  \rho^{n-1} B_R(x_1^*\| x_1) + \sum_{k=1}^{n-1} \rho^{n - k-1} \zeta_k,
	\end{align*}
	where $\rho = \left(\frac{1 + L^2\lambda^2}{1 - L^2 \lambda^2}\right) \left(1 - 2\alpha \eta L^{-1} + 2\eta^2 \gamma^2\right) < 1$. Therefore the dynamic regret can be bounded as
	\begin{align*}
	\regret_N^d & = \sum_{n = 1}^N f_n(x_n) - f_n(x_n^*) \leq G \sum_{n = 1}^N \|x_n - x_n^*\| \\
	& \leq \sqrt 2 G B_R(x_1^*\| x_1)^{1/2}\sum_{n = 1}^N\rho^{\frac{n - 1}{2}} +  \sqrt 2 G \sum_{n = 2}^N \left(\sum_{k = 1}^{n - 1} \rho^{n - k - 1} \zeta_k\right)^{1/2}  \\
	& \leq \sqrt 2 G B_R(x_1^*\| x_1)^{1/2}\sum_{n = 1}^N\rho^{\frac{n - 1}{2}} +  \sqrt 2 G\sum_{n = 2}^N\sum_{k = 1}^{n - 1} \rho^{\frac{n - k - 1}{2}} \zeta_k^{1/2},
	\end{align*}
	where both inequalities use the fact that for $a, b > 0$, $a + b \leq a + b + 2\sqrt{ab} = (\sqrt a + \sqrt b)^2$. The left-hand term is clearly bounded above by a constant since $\sqrt \rho < 1$.
	Analysis of the right-hand term is not as obvious, so we establish the following lemma independently.

	\begin{lemma}\label{lm:predictable md geometric series}
		If $\rho < 1$ and $\zeta_n = \frac{a_n L D_\XX}{\alpha} \left(1 + \lambda \right) + \frac{a_n^2 L}{2\alpha^2}$, then it holds that
		\begin{align*}
		\sqrt 2\sum_{n = 2}^N\sum_{k = 1}^{n - 1} \rho^{\frac{n - k - 1}{2}} \zeta_k^{1/2} = O(A_N + \sqrt {N A_N}).
		\end{align*}
	\end{lemma}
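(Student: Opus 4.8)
The plan is to collapse the double sum into a single sum over $k$ by interchanging the order of summation, absorb the geometric decay into a constant factor, and then control the resulting single sum with subadditivity of the square root followed by Cauchy--Schwarz.

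First I would set $\sigma \coloneqq \sqrt{\rho} < 1$ and swap the summation order, noting that for each fixed $k$ the outer index $n$ ranges over $k+1,\dots,N$:
\[
\sum_{n=2}^N \sum_{k=1}^{n-1} \rho^{\frac{n-k-1}{2}} \zeta_k^{1/2}
= \sum_{k=1}^{N-1} \zeta_k^{1/2} \sum_{n=k+1}^N \sigma^{\,n-k-1}
\le \frac{1}{1-\sigma}\sum_{k=1}^{N-1}\zeta_k^{1/2},
\]
where the inequality bounds each inner geometric series by $\sum_{j\ge 0}\sigma^j = (1-\sigma)^{-1}$. Since $\rho<1$ by hypothesis, this step isolates all the exponential decay into the harmless constant $(1-\sigma)^{-1}$ and reduces the problem to estimating $\sum_{k=1}^{N-1}\zeta_k^{1/2}$.

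Next I would write $\zeta_k = c_1 a_k + c_2 a_k^2$ with $c_1 = \frac{L D_\XX(1+\lambda)}{\alpha}$ and $c_2 = \frac{L}{2\alpha^2}$, and apply $\sqrt{x+y}\le\sqrt{x}+\sqrt{y}$ together with $a_k\ge 0$ to obtain $\zeta_k^{1/2}\le \sqrt{c_1}\,\sqrt{a_k} + \sqrt{c_2}\,a_k$. Summing the second term directly gives $\sqrt{c_2}\sum_{k}a_k \le \sqrt{c_2}\,A_N = O(A_N)$. For the first term, Cauchy--Schwarz yields $\sum_{k=1}^{N-1}\sqrt{a_k} \le \sqrt{N-1}\,\bigl(\sum_{k=1}^{N-1} a_k\bigr)^{1/2} \le \sqrt{N A_N}$, so this contribution is $O(\sqrt{N A_N})$. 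Combining the two pieces gives $\sum_{k=1}^{N-1}\zeta_k^{1/2} = O(A_N + \sqrt{N A_N})$, and multiplying through by the constant factor $\sqrt{2}/(1-\sigma)$ establishes the claim.

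There is no genuine obstacle here; the only points requiring care are matching the index ranges in the summation-order swap and correctly pairing terms in Cauchy--Schwarz so that the $\ell_1$ sum of $\sqrt{a_k}$ becomes $\sqrt{N A_N}$. It is worth noting that the two summands in the final bound track the two parts of $\zeta_k$: the linear term $c_1 a_k$ produces the $\sqrt{N A_N}$ rate, while the quadratic term $c_2 a_k^2$ produces the $A_N$ rate.
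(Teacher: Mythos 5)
Your proposal is correct and follows essentially the same route as the paper's proof: interchange the order of summation, bound the resulting geometric series by $\frac{1}{1-\sqrt{\rho}}$, split $\zeta_k^{1/2}$ via subadditivity of the square root, and apply Cauchy--Schwarz to get $\sum_k \sqrt{a_k} \leq \sqrt{N A_N}$. The only cosmetic difference is that you make the summation swap explicit, whereas the paper writes the collapsed single sum directly.
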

	\begin{proof}
	\begin{align*}
	\sum_{n = 2}^{N}\sum_{k = 1}^{n - 1} \rho^{\frac{n - k - 1}{2}} \zeta_k^{1/2} & = \sum_{n = 1}^{N - 1}\zeta^{1/2}_n \left( 1 + \rho^{\frac{1}{2}} + \ldots + \rho^{\frac{N - 1 - n}{2}} \right)
	 \leq \frac{1}{1 - \sqrt \rho} \sum_{n = 1}^{N - 1}\sqrt{\zeta_n}.
	\end{align*}
	The last inequality upper bounds the finite geometric series with the value of the infinite geometric series since again $\sqrt \rho < 1$ for each $k$. Recall that $\zeta_n$ was defined as
	\begin{align*}
	\zeta_n = \frac{a_n L D_\XX}{\alpha} \left(1 + \lambda \right) + \frac{a_n^2 L}{2\alpha^2}.
	\end{align*}
	Therefore, the over the square roots can be bounded:
	\begin{align*}
	\sum_{n = 1}^{N - 1}\sqrt{\zeta_n} & \leq \sqrt{\frac{ L D_\XX}{\alpha} \left(1 + \lambda \right)}\sum_{n = 1}^{N-1} \sqrt{ a_n} + \alpha^{-1}\sqrt{\frac{L}{2}} \sum_{n = 1}^{N-1} a_n.
	\end{align*}
	While the right-hand summation is simply the definition of $A_{N-1}$, the left-hand summation yields $\sum_{n =1}^{N - 1}\sqrt{a_n} \leq \sqrt {(N - 1)A_{N -1}}$.
	\end{proof}

	Then the total dynamic regret has order $\regret^d_N = O(1 + A_N + \sqrt {NA_N})$.

\subsection{Proof of~\cref{th:alpha equals beta predictable problem}}

	\subsubsection{Euclidean Space with $\frac{\beta}{\alpha}=1$}

	The proof first requires a result from analysis on the convergence of sequences that are nearly monotonic.
	\begin{lemma}\label{lm:general monotone convergence theorem}
		Let $(a_n)_{n \in \N} \subset \R$ and $(b_n)_{n \in \N} \subset \R$ be two sequences satisfying $b_n \geq 0$ and $\sum_{k = 1}^n a_k < \infty$ $\forall n \in \N$. If $b_{n+1} \leq b_n + a_n$, then the sequence $b_n$ converges.
	\end{lemma}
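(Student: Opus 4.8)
The plan is to reduce the claim to the classical monotone convergence theorem by subtracting off the cumulative effect of the perturbations $a_n$. First I would define the detrended sequence $c_n \coloneqq b_n - \sum_{k=1}^{n-1} a_k$, with the empty sum equal to $0$ so that $c_1 = b_1$. The point of this definition is that the recursion $b_{n+1} \leq b_n + a_n$ translates directly into monotonicity of $c_n$: indeed
\[
c_{n+1} = b_{n+1} - \sum_{k=1}^{n} a_k \leq b_n + a_n - \sum_{k=1}^{n} a_k = b_n - \sum_{k=1}^{n-1} a_k = c_n,
\]
so $(c_n)_{n\in\N}$ is non-increasing.

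Next I would establish that $(c_n)$ is bounded below. Writing $S_n \coloneqq \sum_{k=1}^{n} a_k$, the summability hypothesis on $(a_n)$ guarantees that the partial sums $S_n$ are bounded, say $|S_n| \leq M$ for all $n$. Combining this with $b_n \geq 0$ yields $c_n = b_n - S_{n-1} \geq -S_{n-1} \geq -M$. Hence $(c_n)$ is non-increasing and bounded below, so by the monotone convergence theorem it converges to some finite limit $c^\star$.

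Finally I would read off the convergence of $b_n$. Since $b_n = c_n + S_{n-1}$ and both $c_n \to c^\star$ and $S_{n-1} \to S \coloneqq \sum_{k=1}^{\infty} a_k$ converge, it follows that $b_n \to c^\star + S$, which is finite. This completes the argument.

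The only place requiring genuine care is the lower bound on $(c_n)$: it hinges on the partial sums $S_n$ being bounded, which is exactly what the summability assumption provides. In the intended application the $a_n$ are moreover non-negative with $\sum_n a_n = A_\infty < \infty$, so $S_n \uparrow A_\infty$ and the bound $|S_n| \leq A_\infty$ is immediate; everything else is the textbook monotone convergence theorem, so I anticipate no real obstacle beyond pinning down the precise meaning of the summability hypothesis.
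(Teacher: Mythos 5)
Your proof is correct and takes essentially the same route as the paper's: you define the identical detrended sequence ($c_n = b_n - \sum_{k=1}^{n-1} a_k$, the paper's $u_n$), show it is monotone non-increasing and bounded below via $b_n \geq 0$ and bounded partial sums, and then recover convergence of $b_n = c_n + S_{n-1}$ from convergence of both terms. Your closing remark that the summability hypothesis must be read as convergence (or at least boundedness) of the partial sums---rather than the literal, vacuous statement ``$\sum_{k=1}^n a_k < \infty$ for each $n$''---correctly pins down a point the paper leaves implicit.
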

	\begin{proof}
		Define $u_1 := b_1$ and $u_n := b_n - \sum_{k = 1}^{n - 1} a_k$. Note that $u_1 = b_1 \geq b_2 - a_1 = u_2$. Recursively, $b_n - a_{n-1} \leq b_{n-1} \implies b_n - \sum_{k = 1}^{n-1} a_k \leq b_{n-1} - \sum_{k = 1}^{n-2}$. Therefore, $u_n \leq u_{n+1}$. Note that $(u_n)_{n \in \N})$ is bounded below because $b_n \geq 0$ and $\sum_{k = 1}^n a_k <\infty$. This implies that $(u_n)_{n \in \N}$ converges. Because $\left(\sum_{k = 1}^n a_k \right)_{n \in \N}$, also converges, $(b_n)_{n \in \N}$ must converge.
	\end{proof}
	The majority of the proof follows a similar line of reasoning as a standard result in the field of discrete-time pursuit-evasion games \cite{alexander2006pursuit}.
	Let $\|\cdot\|$ denote the Euclidean norm.
	We aim to show that if the distance between the learner's decision $x_n$ and the optimal decision $x_n^*$ does not converge to zero, then they travel unbounded in a straight line, which is a contradiction.

	Consider the following update rule which essentially amounts to a constrained greedy update:
	\begin{align*}
		x_{n+1} = \frac{x_n + x_n^*}{2}
	\end{align*}
%
	$x_{n+1}$ is well defined at each round because $\XX$ is convex. Define $c_n := \|x_{n} - x_n^*\|$.
	Then we have
	\begin{align*}
	0 \leq c_{n+1} & = \|x_{n+1} - x_{n+1}^*\| \\
	& \leq \|x_{n+1} - x_{n}^*\| + \|x_{n+1}^* - x_{n}^*\| \\
	& = \frac{1}{2}\|x_{n} - x_{n}^*\| + \|x_{n+1}^* - x_{n}^*\| \\
	&\leq \frac{1}{2}\|x_{n} - x_{n}^*\| + \|x_{n+1} - x_{n}\| + \frac{a_n}{\alpha} \qquad (\because \text{\cref{pr:generalized contraction property}})\\
	& = \|x_{n} - x_{n}^*\| + \frac{a_n}{\alpha} = c_n + \frac{a_n}{\alpha} \qquad 
	\end{align*}
	Because it is assumed that $\sum_{n = 1}^\infty a_n < \infty$, the sequences $(c_n)_{n \in \N}$ and $(a_n)_{n \in \N}$ satisfy the sufficient conditions of~\cref{lm:general monotone convergence theorem}.
	Thus the sequence $(c_n)_{n \in \N}$ converges, so there exists a limit point $C := \lim_{n \rightarrow \infty} c_n \geq 0$.
	Towards a contradiction, consider the case where $C > 0$. We will prove that this leads the points to follow a straight line in the following lemma.

	\begin{lemma}\label{lm:pursuit-evasion straight line}
		Let $\theta_n$ denote the angle between the vectors from $x_n^*$ to $x_{n+1}^*$ and from $x_n^*$ to $x_{n+1}$. If $\lim_{n \rightarrow \infty} c_n > 0$, then $\lim_{n \rightarrow\infty} \cos \theta_n = -1$.
	\end{lemma}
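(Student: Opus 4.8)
The plan is to expand $c_{n+1}^2$ via the law-of-cosines identity (which is where the Euclidean structure is essential), isolate the cosine term, and then exploit the fact that the contraction bound on $\norm{x_{n+1}^*-x_n^*}$ becomes asymptotically tight, which forces $\cos\theta_n\to-1$.

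First I would record the two quantities fixed by the update rule. Since $x_{n+1}=\frac{x_n+x_n^*}{2}$, the vector from $x_n^*$ to $x_{n+1}$ is $x_{n+1}-x_n^*=\frac12(x_n-x_n^*)$, so $\norm{x_{n+1}-x_n^*}=\frac{c_n}{2}$, and likewise $\norm{x_{n+1}-x_n}=\frac{c_n}{2}$. Writing $d_n\coloneqq\norm{x_{n+1}^*-x_n^*}$ and decomposing $x_{n+1}-x_{n+1}^*=(x_{n+1}-x_n^*)-(x_{n+1}^*-x_n^*)$, the definition of $\theta_n$ as the angle between these two vectors gives
\begin{align*}
c_{n+1}^2 = \frac{c_n^2}{4} - c_n d_n \cos\theta_n + d_n^2 .
\end{align*}
I would then invoke \cref{pr:generalized contraction property} (shifted by one index, with $\frac{\beta}{\alpha}=1$) together with $\norm{x_{n+1}-x_n}=\frac{c_n}{2}$ to get the key upper bound $d_n\le \frac{c_n}{2}+\frac{a_{n+1}}{\alpha}$.

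The heart of the argument is an asymptotic tightness observation. Rearranging the identity gives $d_n^2-c_nd_n\cos\theta_n = c_{n+1}^2-\frac{c_n^2}{4}$, and since $c_n,c_{n+1}\to C$ the right-hand side converges, so
\begin{align*}
\lim_{n\to\infty}\left(d_n^2-c_nd_n\cos\theta_n\right)=\frac{3C^2}{4}.
\end{align*}
On the other hand, for fixed $c_n$ the map $(d,\theta)\mapsto d^2-c_nd\cos\theta$ is maximized over $\cos\theta\in[-1,1]$ at $\cos\theta=-1$, where it equals $d^2+c_nd$ and is increasing in $d\ge0$; substituting the bound on $d_n$ and using $a_{n+1}\to0$ (which holds because $\sum_n a_n<\infty$) shows this expression is bounded above by $\big(\frac{c_n}{2}+\frac{a_{n+1}}{\alpha}\big)^2+c_n\big(\frac{c_n}{2}+\frac{a_{n+1}}{\alpha}\big)\to\frac{3C^2}{4}$. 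Thus the value $\frac{3C^2}{4}$ is exactly the limiting maximum, attainable only as $\cos\theta_n\to-1$ and $d_n\to\frac{C}{2}$.

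I would conclude by contradiction. If $\cos\theta_n\not\to-1$, there exist $\epsilon>0$ and a subsequence with $\cos\theta_{n_k}\ge-1+\epsilon$; passing to a further subsequence we may assume $d_{n_k}\to d^*$, where $d^*\le\frac{C}{2}$ by the contraction bound. Along this subsequence $d_{n_k}^2-c_{n_k}d_{n_k}\cos\theta_{n_k}\le d_{n_k}\big(d_{n_k}+c_{n_k}(1-\epsilon)\big)\to d^*\big(d^*+C(1-\epsilon)\big)\le\frac{C}{2}\big(\frac{C}{2}+C(1-\epsilon)\big)=\frac{3C^2}{4}-\frac{\epsilon C^2}{2}<\frac{3C^2}{4}$, contradicting that the full sequence tends to $\frac{3C^2}{4}$ (this is exactly where $C>0$ is used). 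Hence $\cos\theta_n\to-1$. The main obstacle is that $d_n$ need not converge, so the limiting comparison must be carried out along subsequences using $\limsup_n d_n\le\frac{C}{2}$ rather than an outright limit; once that is handled, the remainder is just the law of cosines plus the already-established contraction estimate.
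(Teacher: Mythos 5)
Your proof is correct and follows essentially the same route as the paper's: expand $c_{n+1}^2$ by the law of cosines, control $d_n = \norm{x_{n+1}^* - x_n^*}$ via the contraction property (\cref{pr:generalized contraction property}), and conclude $\cos\theta_n \to -1$ from the asymptotic tightness of the resulting inequality. If anything, your version is slightly more careful than the paper's, which manipulates $\lim_n \cos\theta_n$ and the limit of the product $\norm{x_{n+1}-x_n^*}\,\norm{x_{n+1}^*-x_n^*}$ as though these limits exist a priori, whereas your subsequence argument (using only $\limsup_n d_n \leq \frac{C}{2}$ and compactness) removes that unjustified assumption.
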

	\begin{proof}
		At round $n+1$ we can write the distance between the learner's decision and the optimal decision in terms of the previous round:
			\begin{align*}
			C^2  & = \lim_{n \rightarrow \infty}\|x_{n+1} - x_{n+1}^*\|^2 \\
			& = \lim_{n \rightarrow \infty} \left( \|x_{n+1} -  x_n^*\|^2 + \|x_{n+1}^* -  x_{n}^*\|^2 - 2 \|x_{n+1} -  x_n^*\|\|x_{n+1}^* -  x_{n}^*\| \cos \theta_n \right)\\
			& \leq \lim_{n \rightarrow \infty} \left( \frac{1}{4}\|x_{n} -  x_n^*\|^2 + \|x_n - x_{n+1}\|^2 + \frac{a_n^2}{\alpha^2} + \frac{2a_n }{\alpha} \|x_n - x_{n+1}\| - 2 \|x_{n+1} -  x_n^*\|\|x_{n+1}^* -  x_{n}^*\| \cos \theta_n \right) \\
			& = \lim_{n \rightarrow \infty} \left( \frac{1}{2}\|x_{n} -  x_n^*\|^2 + \frac{a_n^2}{\alpha^2} + \frac{2a_n }{\alpha} \|x_n - x_{n+1}\| - 2 \|x_{n+1} -  x_n^*\|\|x_{n+1}^* -  x_{n}^*\| \cos \theta_n \right) \\
			& =  \lim_{n \rightarrow \infty} \frac{1}{2}\|x_{n} -  x_n^*\|^2  - 2\lim_{n \rightarrow \infty} \|x_{n+1} -  x_n^*\|\|x_{n+1}^* -  x_{n}^*\| \cos \theta_n\\
			& = \frac{1}{2} C^2  - 2\lim_{n \rightarrow \infty} \|x_{n+1} -  x_n^*\|\|x_{n+1}^* -  x_{n}^*\| \cos \theta_n
			\end{align*}
	The first inequality follows because $\|x_{n+1} - x_n^*\| = \frac{1}{2}\|x_n - x_n^*\|$ and $\|x_{n+1}^* - x_n^*\| \leq \|x_{n+1} - x_n\| + \frac{a_n}{\alpha}$ due to~\cref{pr:generalized contraction property}. The next equality again uses $\|x_{n+1} - x_n^*\| = \frac{1}{2}\|x_n - x_n^*\|$. The second to last line follows from passing the limit through the sum, where we have $\lim_{n \rightarrow \infty} a_n = 0$ because $A_\infty < \infty$.
	That is, the inequality above implies
	\begin{align*}
	 2\lim_{n \rightarrow \infty} \|x_{n+1} -  x_n^*\|\|x_{n+1}^* -  x_{n}^*\| \cos \theta_n = -\frac{C^2}{2} < 0
	\end{align*}
	which in turn implies $\lim_{n \rightarrow \infty} \cos\theta_n < 0$.
	This leads to an upper bound
	\begin{align*}
		- 2\lim_{n \to \infty} \|x_{n+1} -  x_n^*\|\|x_{n+1}^* -  x_{n}^*\| \cos \theta_n
	 &=    \left(-2 \lim_{n \to \infty} \cos  \theta_n \right) \lim_{n \to \infty} \|x_{n+1} -  x_n^*\|\|x_{n+1}^* -  x_{n}^*\| \\
	 &\leq \left(-2 \lim_{n \to \infty} \cos  \theta_n \right) \lim_{n \to \infty} \frac{1}{2}\|x_{n} -  x_n^*\| \left( \|x_{n+1} -  x_{n}\| + \frac{a_n}{\alpha} \right)\\
	 &= \frac{-C^2}{2} \lim_{n \to \infty} \cos  \theta_n
	\end{align*}
	Combining these two inequalities, we can then conclude $C^2 \leq\frac{C^2}{2} - \frac{C^2}{2}\cos \theta \leq C^2$. A necessary condition in order for the bounds to be satisfied is $\cos \theta = -1$.
\end{proof}


	When $C>0$, \cref{lm:pursuit-evasion straight line} therefore implies the points $x_{n}, x_{n+1}, x_n^*, x_{n+1}^*$ are colinear in the limit. Thus, $\|x_{n} - x_{n + m}\|$ grows unbounded in $m$, which contradicts the compactness of $\XX$.
	The alternative case must then be true: $C = \lim_{n \rightarrow\infty}\|x_n - x_n^*\| = 0$.
	The dynamic regret can then be bounded as:
	\begin{align*}
	\regret_N^d = \sum_{n = 1}^N l_n(x_n) - l_n(x_n^*) \leq G\sum_{n = 1}^N \|x_n - x_n^*\|
	\end{align*}
	Since $\|x_N - x_N^*\| \rightarrow 0$, we know $\lim_{N \rightarrow \infty} \frac{1}{N}\sum_{n = 1}^N \| x_n - x_n^*\| = 0$. Therefore, the dynamic regret is sublinear.

	Note that this result does not reveal a rate of convergence, only that $\|x_n - x_n^*\|$ converges to zero, which is enough for sublinear dynamic regret.

	\subsubsection{One-dimensional Space with arbitrary $\frac{\beta}{\alpha}$}

	In the case where $d = 1$, we aim to prove sublinear dynamic regret regardless of $\alpha$ and $\beta$ by showing that $x_n$ essentially traps $x_n^*$ by taking conservative steps as before. Rather than the constraint being $|x_n - x_{n+1}| \leq \frac{1}{2}|x_n - x_{n}^*|$, we choose $x_{n+1}$ in the direction of $x_n^*$ subject to $|x_n - x_{n+1}| \leq \frac{1}{1 + \lambda} |x_n - x_n^*|$.
	Specifically, we will use the following update rule:
	\begin{align} \label{eq:lambda avg update}
	x_{n+1} = \frac{\lambda  x_n +  x_n^* }{1+\lambda}
	\end{align}
	Recall that sublinear dynamic regret is implied by  $c_n := |x_n - x_n^*|$ converging to zero as $n\to\infty$. Therefore, below we will prove the above update rule results in $\lim_{n\to\infty} c_n = 0$. Like our discussions above, this implies achieving sublinear dynamic regret but not directly its rate.

		Suppose at any time $|x_n - x_n^*| = 0$. Then we are done since the learner can repeated play the same decision without $x_n^*$ changing. Below we consider the case $|x_n - x_n^*| \neq 0$. We prove this by contradiction. First we observe that the update in~\eqref{eq:lambda avg update} makes sure that, at any round, $x_{n+1}^*$ cannot switch to the opposite side of $x_n^*$ with respect to $x_{n+1}$ and $x_n$; namely it is guaranteed that $(x_{n+1}^* - x_{n+1})(x_n^* - x_{n+1}) \geq 0$ and  $(x_{n+1}^* - x_{n})(x_n^* - x_{n}) \geq 0$.

		Towards a contradiction, suppose that there is some $C > 0$ such that $|x_n - x_n^* | \geq C$ for infinitely many $n$. Then $x_n$ at every round moves a distance of at least $\frac{C}{1 + \lambda}$ in the same direction infinitely since $x_{n+1}^*$ always lies the same side of $x_{n+1}$ as $x_n^*$. This contradicts the compactness of $\XX$. Therefore $|x_n - x_n^* |$ must converge to zero.

\section{New Insights into Imitation Learning}\label{sec:IL}

In this section, we investigate an application of the COL framework in the sequential decision problem of online IL~\citep{ross2011reduction}.
We consider an episodic MDP with state space $\SS$, action space $\AA$, and finite horizon $H$.
For any $s, s' \in \SS$ and $a \in \AA$, the transition dynamics $\PP$ gives the conditional density, denoted by $\PP(s' | s, a)$, of transitioning to $s'$ starting from state $s$ and applying action $a$. The reward of state $s$ and action $a$ is denoted as $r(s, a)$.
A deterministic policy $\pi$ is a mapping from $\SS$ to a density over $\AA$. 
We suppose the MDP starts from some fixed initial state distribution. We denote the probability of being in state $s$ at time $t$ under policy $\pi$ as $d_t^\pi (s)$, and we define the average state distribution under $\pi$ as $d^\pi(s) = \frac{1}{T} \sum_{t = 1}^T d_t^\pi(s)$.

In IL, we assume that $\PP$ and $r$ are unknown to the learner, but, during training time, the learner is given access to an expert policy $\pi^\star$ and full knowledge of a supervised learning loss function $c(s, \pi; \pi^\star)$, defined for each state $s \in \SS$.
The objective of IL is to solve
\begin{align} \label{eq:IL objective}
\min_{\pi \in \Pi} \quad \E_{s \sim d^\pi} \left[ c(s, \pi; \pi^\star) \right],
\end{align}
where $\Pi$ is the set of allowable parametric policies, which will be assumed to be convex. 
Note that it is often the case that $\pi^\star \not \in  \Pi$.

As $d^\pi$ is not known analytically, optimizing \eqref{eq:IL objective} directly leads to a reinforcement learning problem and therefore can be sample inefficient. \emph{Online IL}, such as the popular \textsc{DAgger} algorithm~\cite{ross2011reduction}, bypasses this difficulty by reducing \eqref{eq:IL objective} into a sequence of supervised learning problems.
Below we describe a general construction of online IL: at the $n$th iteration
(1) execute the learner's current policy $\pi_n$ in the MDP to collect state-action samples;
(2) update $\pi_{n+1}$ with information of the stochastic approximation of $l_n(\pi) = \E_{d^{\pi_n}}\left[ c(s, \pi; \pi^\star) \right]$ based the samples collected in the first step. Importantly, we remark that in these empirical risks, the states are sampled according to $d^{\pi_n}$ of the learner's policy.

The use of online learning to analyze online IL is well established \citep{ross2011reduction}.
As studied in \cite{cheng2018convergence,lee2018dynamic}, these online losses can be formulated as a bifunction,
$l_n(\pi) = f_{\pi_n}(\pi) = \E_{s \sim d^{\pi_n}} \left[ c(s, \pi; \pi^\star)\right]$, and the policy class $\Pi$ can be viewed as the decision set $\XX$.
Naturally, this online learning formulation results in many online IL algorithms resembling standard online learning algorithms, such as follow-the-leader (FTL), which uses full information feedback $l_n(\cdot) =\E_{s \sim  d^{\pi_n}} \left[ c(s, \cdot; \pi^\star)\right]$ at each round \citep{ross2011reduction},
and mirror descent \citep{sun2017deeply}, which uses the first-order feedback $\nabla l_n(\pi_n) = \E_{d^{\pi_n}} \left[ \nabla_{\pi_n} c(s, \pi_n; \pi^\star)\right]$. This feedback can also be approximated by unbiased samples.
The original work by \cite{ross2011reduction} analyzed FTL in the static regret case by immediate reductions to known static regret bounds of FTL.
However, a crucial objective is understanding when these algorithms converge to useful solutions in terms of policy performance, which more recent work has attempted to address \citep{cheng2018convergence, lee2018dynamic, cheng2018accelerating}. According to these refined analyses, dynamic regret is a more appropriate solution concept to online IL when $\pi^\* \notin \Pi$, which is the common case in practice.


Below we frame online IL in the proposed COL framework and study its properties based on the properties of COL that we obtained in the previous sections.
We have already shown that the per-round loss $l_n(\cdot)$ can be written as the evaluation of a bifunction $f_{\pi_n}(\cdot)$. This COL problem is actually an $(\alpha, \beta)$-regular COL problem when the expected supervised learning loss $\E_{s\sim d^{\pi_n}}[c(s, \pi;\pi^\star)]$ is strongly convex in $\pi$ and the state distribution $d^\pi$ is Lipschitz continuous (see \cite{ross2011reduction,cheng2018convergence,lee2018dynamic}). We can then leverage our results in the COL framework to immediately answer an interesting question in the online IL problem.
\begin{proposition}
	When $\alpha > \beta$, there exists a \emph{unique} policy $\widehat \pi$ that is optimal on its own distribution:
	\begin{align*}
	\E_{s \sim d_{\widehat \pi_n}} \left[ c(s, \widehat \pi; \pi^\star ) \right] =
	\min_{\pi\in\Pi}  \E_{s \sim d_{\widehat \pi_n}} \left[ c(s, \pi; \pi^\star ) \right].
	\end{align*}
\end{proposition}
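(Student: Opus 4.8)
The plan is to recognize the online IL instance as an $(\alpha,\beta)$-\rregular COL problem and then read off existence and uniqueness from the fixed-point characterization of \cref{th:equivalent problems} together with the contraction property of \cref{pr:contraction condition}. Concretely, I set $f_{\pi'}(\pi) = \E_{s\sim d^{\pi'}}[c(s,\pi;\pi^\star)]$, so that the per-round loss is $l_n = f_{\pi_n}$, and I check the two requirements of \cref{def:alpha-beta regular problems}: the assumed strong convexity of the expected supervised loss $\E_{s\sim d^{\pi'}}[c(s,\cdot;\pi^\star)]$ supplies $\alpha$-strong convexity of $f_{\pi'}(\cdot)$, while the assumed Lipschitz continuity of the state-visitation map $\pi \mapsto d^\pi$ supplies the $\beta$-\rregularity of $\nabla f_{\cdot}(\pi)$. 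This places us in exactly the regime $\alpha > \beta$ of the proposition.

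Next I translate the displayed identity into a fixed-point statement. With $T(\pi) \coloneqq \argmin_{\pi'\in\Pi} f_\pi(\pi')$, the equation in the proposition is precisely $\widehat\pi = T(\widehat\pi)$, i.e.\ $\widehat\pi$ is a solution of the $\FP(\Pi,T)$ appearing in \cref{th:equivalent problems}. Strong convexity of $f_\pi(\cdot)$ makes the minimizer unique, so $T:\Pi\to\Pi$ is a genuine point-to-point self-map; since $\alpha>\beta$, \cref{pr:contraction condition} shows $T$ is $\tfrac{\beta}{\alpha}$-contractive with $\tfrac{\beta}{\alpha}<1$. Because $\Pi\subset\R^d$ is compact (hence complete) and convex, the Banach fixed-point theorem gives a unique $\widehat\pi\in\Pi$ with $\widehat\pi=T(\widehat\pi)$; unwinding the definition of $T$ recovers the claimed identity, and uniqueness is immediate.

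Since this is essentially a direct reduction, there is no single hard computational step; the point requiring the most care is verifying that online IL genuinely satisfies \cref{def:alpha-beta regular problems}. In particular, it is the Lipschitz continuity of the visitation map $\pi\mapsto d^\pi$, and not merely regularity of the per-state cost $c$, that furnishes the constant $\beta$, which is what makes the hypothesis $\alpha>\beta$ meaningful. As an alternative to the Banach argument, I could invoke \cref{pr:beta-alpha strongly monotone}: for $\alpha>\beta$ the map $F=\nabla f$ is $(\alpha-\beta)$-strongly monotone, so $\VI(\Pi,F)$ has a unique solution, which by \cref{th:equivalent problems} is the same $\widehat\pi$; existence alone is in any case guaranteed by \cref{eq:existence of primal solution}.
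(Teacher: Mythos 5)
Your proof is correct, but it routes through a different piece of the paper's machinery than the paper itself does. The paper's proof is a one-line appeal to \cref{pr:beta-alpha strongly monotone}: $\alpha>\beta$ makes the map $\pi\mapsto\nabla f_\pi(\pi)$ an $(\alpha-\beta)$-strongly monotone VI operator, and strongly monotone VIs on compact convex sets have a unique solution (cited from the VI literature); by the equivalence in \cref{th:equivalent problems}, that unique VI solution is exactly the policy that is optimal on its own distribution. You instead take the fixed-point leg of \cref{th:equivalent problems}: the displayed identity is $\widehat\pi=T(\widehat\pi)$ for $T(\pi)=\argmin_{\pi'\in\Pi}f_\pi(\pi')$, \cref{pr:contraction condition} makes $T$ a $\tfrac{\beta}{\alpha}$-contraction when $\alpha>\beta$, and the Banach fixed-point theorem on the complete metric space $\Pi$ gives existence and uniqueness at once. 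Both are short reductions to results established earlier in the paper, and your verification of $(\alpha,\beta)$-\rregularity from the IL assumptions matches the paper's. What your route buys is constructiveness and self-containedness: Picard iteration (the greedy best-response update of \cref{sec:algorithms}) converges linearly to $\widehat\pi$, and no external uniqueness theorem for strongly monotone VIs is needed. What the paper's route buys is brevity, and it is the monotonicity view that later underwrites the online-gradient-descent convergence corollary via \cref{lm:mirror descent contraction}. Incidentally, the ``alternative'' you sketch in your last sentence is precisely the paper's proof, and your version states the monotonicity constant correctly as $\alpha-\beta$, whereas the paper's text contains a sign typo ($\mu=\beta-\alpha$).
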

This result is immediate from the fact that $\alpha > \beta$ implies that $\nabla f_\pi(\pi)$ is a $\mu$-strongly monotone VI with $\mu = \beta - \alpha$ by~\cref{pr:beta-alpha strongly monotone}. The VI is therefore guaranteed to have a unique solution \citep{facchinei2007finite}.

Furthermore, we can improve upon the known conditions sufficient to find this policy through online gradient descent and give a non-asymptotic convergence guarantee through a reduction to strongly monotone VIs.
We will additionally assume that $f$ is $\gamma$-smooth in $\pi$, satisfying $\| \nabla f_{\pi'}(\pi_1) - \nabla f_{\pi'} (\pi_2) \| \leq \gamma \| \pi_1 - \pi_2\|$ for any fixed query argument $\pi'$.
%



We then apply our results from \cref{sec:algorithms}. Specifically, we consider mirror descent with $B_R(\pi \| \pi') = \frac{1}{2}\|\pi - \pi'\|_2^2$, which is equivalent to online gradient descent studied in \cite{sun2017deeply,lee2018dynamic}. Note that $R = \frac{1}{2} \| \pi\|_2^2$, which is $1$-strongly convex and $1$-smooth. Then, we apply \cref{lm:mirror descent contraction}.
%

\begin{corollary}
	If $\alpha > \beta$ and  the stepsize is chosen such that $\eta = \frac{\alpha - \beta}{(\gamma  + \beta)^2}$, then, under the online gradient descent algorithm with deterministic feedback $g_n = \nabla l_n(\pi_n)$, it holds that
	\begin{align*}
	\| \pi_n  - \widehat \pi \|^2 \leq \left( 1 -  \left(\frac{\alpha - \beta}{\gamma + \beta}\right)^2\right)^{n - 1} \| \pi_1 - \widehat \pi\|^2
	\end{align*}
\end{corollary}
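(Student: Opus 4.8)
The plan is to recognize this corollary as a direct specialization of the contraction result in \cref{lm:mirror descent contraction} to the Euclidean geometry, followed by the optimal choice of the constant stepsize. First I would identify the policy $\widehat\pi$ with the unique equilibrium point $x^\star$ of the associated VI/EP/FP. By the preceding proposition, $\widehat\pi$ is the policy that is optimal on its own induced distribution, i.e. $\widehat\pi = \argmin_{\pi\in\Pi} f_{\widehat\pi}(\pi) = T(\widehat\pi)$; by \cref{th:equivalent problems} this is exactly the statement $\widehat\pi \in X^\star$. Since $\alpha > \beta$, \cref{pr:beta-alpha strongly monotone} gives that $\nabla f$ is $(\alpha-\beta)$-strongly monotone, so $X^\star = X_\star$ is a singleton and $\widehat\pi = x^\star \in X_\star$ is precisely the dual solution required by \cref{lm:mirror descent contraction}.

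Next I would instantiate \cref{lm:mirror descent contraction} with $R = \frac{1}{2}\norm{\cdot}_2^2$, which is $1$-strongly convex and $1$-smooth, so $L = 1$ and $B_R(\pi \| \pi') = \frac{1}{2}\norm{\pi - \pi'}^2$. The lemma then yields
\begin{align*}
\tfrac{1}{2}\norm{\widehat\pi - \pi_n}^2 \leq \left(1 - 2\eta(\alpha-\beta) + \eta^2(\gamma+\beta)^2\right)^{n-1}\tfrac{1}{2}\norm{\widehat\pi - \pi_1}^2,
\end{align*}
and the factors of $\tfrac12$ cancel on both sides. The contraction factor $g(\eta) = 1 - 2\eta(\alpha-\beta) + \eta^2(\gamma+\beta)^2$ is a convex quadratic in $\eta$ minimized at $\eta = \frac{\alpha-\beta}{(\gamma+\beta)^2}$, the stepsize prescribed in the statement. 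Substituting this value gives $g(\eta) = 1 - \frac{(\alpha-\beta)^2}{(\gamma+\beta)^2} = 1 - \left(\frac{\alpha-\beta}{\gamma+\beta}\right)^2$, which is exactly the claimed rate.

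The remaining work is only verification rather than a genuine obstacle. I would check that this stepsize satisfies the admissibility condition $\eta < \frac{2(\alpha-\beta)}{L(\gamma+\beta)^2}$ from \cref{pr:mirror descent dynamic regret} (with $L = 1$ it is exactly half that upper bound), guaranteeing $g(\eta) \in (0,1)$. Positivity also follows directly: since $f_x(\cdot)$ is both $\alpha$-strongly convex and $\gamma$-smooth we have $\gamma \geq \alpha$, hence $\gamma + \beta > \alpha - \beta$ and the factor $\left(\frac{\alpha-\beta}{\gamma+\beta}\right)^2 < 1$. Thus the only nontrivial ingredient is the correct identification of $\widehat\pi$ with the equilibrium $x^\star$ so that \cref{lm:mirror descent contraction} applies; everything else is the one-line stepsize optimization of the contraction factor.
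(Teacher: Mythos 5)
Your proposal is correct and follows essentially the same route as the paper: the paper likewise identifies $\widehat\pi$ with the unique equilibrium guaranteed by \cref{pr:beta-alpha strongly monotone}, applies \cref{lm:mirror descent contraction} with $R = \frac{1}{2}\norm{\cdot}_2^2$ (so $L=1$ and $B_R$ is half the squared Euclidean distance), and plugs in the stepsize that minimizes the contraction factor. Your added verification that this stepsize lies within the admissible range and that $\gamma \geq \alpha$ ensures a valid rate is a detail the paper leaves implicit, but it is the same argument.
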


By \cref{pr:mirror descent dynamic regret}, $\regret_N^d$ will therefore be sublinear (in fact, $\regret_N^d = O(1)$) and the policy converges linearly to the policy that is optimal on its own distribution, $\widehat \pi$. The only condition required on the problem itself is $\alpha > \beta$ while the state-of-the-art sufficient condition of \cite{lee2018dynamic} additionally requires $\frac{\alpha}{\gamma} > \frac{2\beta}{\alpha}$.
The result also gives a new non-asymptotic convergence rate to $\widehat \pi$.

The above result only considers the case when the feedback is deterministic; i.e., there is no sampling error due to executing the policy on the MDP, and the risk $\E_{d^{\pi_n}}\left[ c(s, \pi; \pi^\star) \right]$ is known exactly at each round. While this is a standard starting point in analysis of online IL algorithms \citep{ross2011reduction}, we are also interested in the more realistic stochastic case, which has so far not been analyzed for the online gradient descent algorithm in online IL. It turns out that the COL framework can be easily leveraged here too to provide a sublinear dynamic regret bound.

At round $n$, we consider observing the empirical risk $\tilde l_n (\pi) =  \frac{1}{T} \sum_{t = 1}^{T} c(s_t, \pi; \pi^\star)$ where $s_t \sim d_t^{\pi_n}$. Note that $\E [ \tilde l_n (\pi) | \pi_n ] = l_n(\pi)$ and it is easy to show that the first-order feedback $\nabla\tilde{l}_n(\pi_n)$ can be modeled as the expected gradient with an additive zero-mean noise: $g_n = \nabla l_n(\pi_n) + \epsilon_n$. For simplicity, we assume $\E\left[ \| \epsilon_n \|^2 \right] < \infty$.

\begin{corollary}
	If $\alpha > \beta$ and the stepsize is chosen as $\eta_n = \frac{1}{\sqrt n}$, then, under online gradient descent with stochastic feedback, it holds that $\E[\regret_N^d] = O(\sqrt N)$.
\end{corollary}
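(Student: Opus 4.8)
The plan is to recognize this corollary as a direct instantiation of \cref{pr:stochastic mirror descent} in the online IL setting, with the adversarial bias set to zero. First I would confirm that online IL is a genuine $(\alpha,\beta)$-\rregular COL problem whose bifunction is \emph{fixed before learning}: the per-round loss $l_n(\pi) = f_{\pi_n}(\pi) = \E_{s\sim d^{\pi_n}}[c(s,\pi;\pi^\star)]$ is determined entirely by the fixed transition dynamics $\PP$, the fixed expert $\pi^\star$, and the fixed supervised cost $c$, so the map $(\pi_n,\pi)\mapsto f_{\pi_n}(\pi)$ does not change across rounds. This matches the standing hypothesis of \cref{pr:stochastic mirror descent}, and it is what lets us compare against the fixed equilibrium $\widehat\pi$ (the unique solution of the associated strongly monotone VI when $\alpha>\beta$).

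Next I would match the feedback model. The learner observes the empirical risk $\tilde l_n(\pi) = \frac{1}{T}\sum_{t=1}^T c(s_t,\pi;\pi^\star)$ with $s_t\sim d_t^{\pi_n}$, whose gradient decomposes as $g_n = \nabla\tilde l_n(\pi_n) = \nabla l_n(\pi_n) + \epsilon_n$, where $\epsilon_n \coloneqq \nabla\tilde l_n(\pi_n) - \nabla l_n(\pi_n)$. Since $\E[\tilde l_n(\pi)\mid \pi_n] = l_n(\pi)$, the noise is conditionally zero-mean, and by assumption $\E[\|\epsilon_n\|^2]<\infty$; using the Euclidean norm, $\|\cdot\|_*=\|\cdot\|$. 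Thus the feedback is exactly of the form $g_n = \nabla l_n(\pi_n) + \epsilon_n + \xi_n$ with $\xi_n \equiv 0$, so that $\Xi = \sum_{n=1}^N \|\xi_n\|_* = 0$. The remaining hypotheses are immediate: online gradient descent is the mirror descent update \eqref{eq:mirror descent update} with $R = \frac{1}{2}\|\cdot\|_2^2$, which is $1$-strongly convex (and \cref{pr:stochastic mirror descent} requires no smoothness of $R$); $\alpha>\beta$ holds by hypothesis; and the stepsize is $\eta_n = 1/\sqrt n$. Applying \cref{pr:stochastic mirror descent} then gives $\E[\regret_N^d] = O(\sqrt N + \Xi) = O(\sqrt N)$, as claimed.

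The only step requiring care, and the main technical obstacle, is ensuring the noise assumption is strong enough for the reduction inside \cref{pr:stochastic mirror descent} to apply. That proof bounds $\E[\|g_n\|_*^2]$ by a constant uniformly in $n$ and invokes the law of total expectation together with the fact that $\pi_n$ does not depend on $\epsilon_n$ to identify the expected linearized static regret with $\E[\widetilde{\regret_N^s}(\widehat\pi)]$. This demands a \emph{uniform} second-moment bound $\E[\|\epsilon_n\|_*^2]\le\sigma^2<\infty$ and the martingale-difference property $\E[\epsilon_n\mid \pi_1,\dots,\pi_n]=0$. Both follow from the sampling mechanism — $s_t\sim d_t^{\pi_n}$ conditionally given $\pi_n$, with $\pi_n$ measurable with respect to the history $\epsilon_1,\dots,\epsilon_{n-1}$ — under the stated finite-variance condition (read uniformly in $n$). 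Once these are in place, no further estimates are needed, and the claimed bound follows verbatim from \cref{pr:stochastic mirror descent}.
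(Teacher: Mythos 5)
Your proof is correct and matches the paper's approach exactly: the paper's own argument is a one-line invocation of \cref{pr:stochastic mirror descent} (which itself rests on the reduction in \cref{cr:full reduction to static regret}), precisely the instantiation you carry out with $\xi_n \equiv 0$, $\Xi = 0$, and $R = \frac{1}{2}\|\cdot\|_2^2$. Your added care about the uniform second-moment bound and the martingale-difference property of $\epsilon_n$ simply makes explicit what the paper assumes implicitly (its proof of \cref{pr:stochastic mirror descent} posits $\E[\|\epsilon_n\|_*^2] \leq \sigma^2$ for all $n$), so there is no gap.
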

This corollary follows from \cref{pr:stochastic mirror descent}, which in turn leverages the reduction to static regret in~\cref{cr:full reduction to static regret}. 

\end{document}